\documentclass[journal,compsoc,onecolumn]{IEEEtran}

\ifCLASSINFOpdf
\else
\fi
\let\labelindent\relax
\usepackage{array}
\usepackage{color}
\usepackage[normalem]{ulem}
\usepackage{amsmath, amssymb, mathtools, amsthm}
\usepackage{cancel}

\usepackage[colorlinks=true, urlcolor=black,citecolor=black,linkcolor=black]{hyperref} 
\newtheorem{theorem}{\textbf{Theorem}}%[section]
\newtheorem{lemma}[theorem]{Lemma}

\newtheorem{remark}{Remark}%[section]

\usepackage{cleveref}
\usepackage{booktabs}
\usepackage{upgreek}
\usepackage{utfsym}
\usepackage{bbding}
\usepackage{amssymb}

\newtheorem*{lemma*}{Theorem}
\newtheorem*{proposition*}{Proposition}

\usepackage{mathrsfs}
 \usepackage{algorithm}
 \usepackage{algpseudocode}
\usepackage{fontawesome}
\usepackage{orcidlink}
\usepackage[square,numbers,sort&compress]{natbib}
\usepackage{multirow}
\usepackage{subfigure} %插入多图时用子图显示的宏包

\usepackage{nomencl}
\makenomenclature

\definecolor{darkgreen}{rgb}{0.0, 0.5, 0.0}

\usepackage{xr-hyper}

\crefname{equation}{}{}
\Crefname{equation}{}{}
\Crefname{algorithm}{Algorithm}{Algorithms}
\Crefname{figure}{Fig.}{Figs.}
\crefname{proposition}{proposition}{propositions}
\Crefname{proposition}{Proposition}{Propositions}
\DeclareMathAlphabet{\mathcal}{OMS}{cmsy}{m}{n}
\usepackage{enumitem}

\usepackage{threeparttable}

\begin{document}
\title{
% \huge{Enhancing Supervised Nonlinear Probabilistic Latent Variable Models for Inferential Sensors: An Optimization Perspective}
% Finite Mixture Model Under Industrial Soft Sensor Modeling: A Functional Gradient Descent Approach
% Finite Mixture Model for Industrial Data-Driven Modeling in the Era of Deep Learning: 
% A Functional Gradient Approach
% Mixing Last Layer Deep Neural Network for Soft Sensors: A Proximal Gradient Approach
% \huge{
Slack More, Predict Better: Proximal Relaxation for Probabilistic Latent Variable Model-based \\Soft Sensors
%Reversing the Conditional Arrow: Proximal Gradient Descent for Multi-modal Soft Sensor Modeling}
% }
}
\author{Zehua Zou~\orcidlink{0009-0004-1933-5035}, Yiran Ma~\orcidlink{0009-0004-6749-5121}, Yulong Zhang~\orcidlink{0000-0002-4038-1616}, Zhengnan Li~\orcidlink{0009-0008-0592-214X}, Zeyu Yang~\orcidlink{0000-0003-0253-9053}~\IEEEmembership{Member, IEEE}, Jinhao Xie~\orcidlink{0000-0001-6870-263X},\\Xiaoyu Jiang~\orcidlink{0000-0003-4170-5579},~\IEEEmembership{Member, IEEE} and Zhichao Chen~\orcidlink{0000-0001-5785-0741}
% \thanks{This paragraph of the first footnote will contain the date on 
% which you submitted your paper for review. It will also contain support 
% information, including sponsor and financial support acknowledgment. For 
% example, ``This work was supported in part by the U.S. Department of 
% Commerce under Grant BS123456.'' }
\thanks{This work was supported by the National Natural Science Foundation of China (NSFC) under Grant Number 62403425, and was also supported by  the China Postdoctoral Science Foundation under Grant Number 2025M781449. Paper no. TII-26-1425. (Corresponding author: Xiaoyu Jiang and Zhichao Chen)}
\thanks{The Copyright of this work has been transferred to the IEEE.}
\thanks{Zehua Zou and Xiaoyu Jiang are with the Hangzhou International Innovation Institute, Beihang University, Hangzhou 311115, China (Email: 19076144g@connect.polyu.hk, jiangxiaoyu@buaa.edu.cn).}
\thanks{Yiran Ma is with the State Key Laboratory of Industrial Control Technology, College of Control science and Engineering, Zhejiang University, Hangzhou 310027, Zhejiang, China (E-mail: mayiran@zju.edu.cn)}
\thanks{Yulong Zhang is with the College of Electrical Engineering and Automation, Fuzhou University, Fuzhou 350108, China (E-mail: zhangyl@fzu.edu.cn).}
\thanks{Zhengnan Li is with the School of Data Science, The Chinese University of Hong Kong, Shenzhen 518172, China (E-mail: 225040119@link.cuhk.edu.cn).}
\thanks{Zeyu Yang is with the Zhejiang Key Laboratory of Industrial Solid Waste Thermal Hydrolysis Technology and Intelligent Equipment, Huzhou Key Laboratory of Intelligent Sensing and Optimal Control for Industrial Systems, School of Engineering, Huzhou University, Huzhou 313000, China (E-mail: yangzeyu@zjhu.edu.cn).
}
\thanks{Jinhao Xie is with MOE of the Key Laboratory of Bioinorganic and Synthetic Chemistry, the Key Lab of Low‐Carbon Chem \& Energy Conservation of Guangdong Province, School of Chemistry, Sun Yat‐Sen University, Guangzhou 510275, China. (E-mail: xiejh25@mail2.sysu.edu.cn).}
\thanks{Zhichao Chen is with the National Key Lab of General AI, School of Intelligence Science and Technology, Peking University, Beijing 100871, China (E-mail: 
12032042@zju.edu.cn).}
}

\markboth{
 Manuscript Submitted to IEEE Transactions Industrial Informatics
% Manuscript
}
{
% Semi-implict 
% Enhancing Supervised Nonlinear Probabilistic Latent Variable Models for Inferential Sensors: An Optimization Perspective
% Zhichao Chen \MakeLowercase{\textit{\textit{et al}.}}: Rethinking the Parameter Learning of Nonlinear Dynamical Probabilistic Latent Variable Models
\vspace{-0.5cm}
}

\maketitle
\vspace{-1em}
\begin{abstract}
Nonlinear Probabilistic Latent Variable Models (NPLVMs) are a cornerstone of soft sensor modeling due to their capacity for uncertainty delineation. However, conventional NPLVMs are trained using amortized variational inference, where neural networks parameterize the variational posterior. While facilitating model implementation, this parameterization converts the distributional optimization problem within an infinite-dimensional function space to parameter optimization within a finite-dimensional parameter space, which introduces an approximation error gap, thereby degrading soft sensor modeling accuracy. To alleviate this issue, we introduce KProxNPLVM, a novel NPLVM that pivots to relaxing the objective itself and improving the NPLVM's performance. Specifically, we first prove the approximation error induced by the conventional approach. Based on this, we design the Wasserstein distance as the proximal operator to relax the learning objective, yielding a new variational inference strategy derived from solving this relaxed optimization problem. Based on this foundation, we provide a rigorous derivation of KProxNPLVM's optimization implementation, prove the convergence of our algorithm can finally sidestep the approximation error, and propose the KProxNPLVM by summarizing the abovementioned content. Finally, extensive experiments on synthetic and real-world industrial datasets are conducted to demonstrate the efficacy of the proposed KProxNPLVM. 

\end{abstract}
\vspace{-0.3cm}
\begin{IEEEkeywords}
Variational Inference, Proximal Gradient Descent, Probabilistic Latent Variable Model, Soft Sensor.
%Inferential Sensor, Variational Inference, Functional Optimization, Machine Learning
\end{IEEEkeywords}

\section{Introduction}\label{sec:introduction}

\IEEEPARstart{P}{robablistic} latent variable models (PLVMs)~\cite{bishop2006pattern,murphy2012machine}, which explicitly model the data uncertainty, are crucial for industrial soft sensor modeling, ensuring product quality maintenance, reducing energy consumption, and increasing economic income. For instance, predicting the quality variable content in a distillation column allows for precise determination of the reflux ratio, ensuring efficient use of hot utility and reducing carbon emissions~\cite{11122530}. Similarly, predicting the reactant gas composition in a reactor enables optimal control of the reactor, thereby reducing the operational costs of the industrial plant~\cite{10778061}. \emph{Consequently, the effective development of PLVMs that accurately predict quality variables is of great importance in soft sensor modeling.}

However, PLVMs introduce latent variables, so the learning procedure must additionally infer their distributions~\cite{bishop2006pattern}, which distinguishes PLVMs from non-PLVMs. Accordingly, under the maximum log-likelihood principle, Kullback-Leibler (KL) divergence is adopted as the objective to evaluate and guide latent-variable inference. In practice, an approximation distribution, termed the variational distribution, is introduced to approximate the conditional distribution of latent variables given the observed data. Subsequently, model parameters are learned using the resulting optimized variational distribution~\cite{8588399,blei2017variational}.
% However, PLVMs introduce latent variables, necessitating latent variable distribution inference within the learning procedure, which distinguishes them from non-PLVMs. To this end, based on the maximum log-likelihood principle, the Kullback-Leibler (KL) divergence is selected as the objective to evaluate and guide the latent variable distribution inference, and an approximation distribution termed `variational distribution' is introduced to estimate the conditional distribution of the latent variables given the observed data. Subsequently the parameter learning is performed using this optimized approximation distribution~\cite{8588399}.
% the Kullback-Leibler (KL) divergence is often selected as the objective function for latent variable distribution inference, and an approximation distribution termed `variational distribution' is introduced to estimate the conditional distribution of the latent variables given the observed data, and subsequent parameter learning is performed using this optimized approximation distribution~\cite{8588399}.

Notably, KL divergence training requires effectively ``inverting'' the generative function that maps latent variables to observed data. Earlier works attempted to use linear structures to model this generative function, for example, probabilistic principal component analysis~\cite{tipping1999probabilistic} and Gaussian mixture models~\cite{ghahramani1999variational}. Various soft sensor modeling techniques have been developed, such as probabilistic principal component regression~\cite{10507044,10982263} and Gaussian mixture regression~\cite{9874385}. However, these linear structures often fail to capture the nonlinear dynamics of complex industrial processes. Consequently, researchers have explored using neural networks~\cite{9638604} to model the generative function (which is called generative network in this context), leading to the development of nonlinear probabilistic latent variable models (NPLVMs). Note that deep generative models have demonstrated broad application prospects in industrial scenarios~\cite{zhang2022diversifying,zhang2025domain,11202603}. In the context of soft sensor modeling, NPLVMs similarly leverage generative networks to achieve competitive performance. For example, Shen et al.~\cite{shen2020nonlinear} introduced supervised NPLVMs to model the primary reformer. Further studies have incorporated neural network modules, such as recurrent neural networks and Transformers, to model the spatiotemporal properties of industrial data. For instance, Yao et al.~\cite{9797056} integrated RNNs into NPLVMs to model temporal dependencies. In contrast, Sui et al.~\cite{10765125} utilized Koopman analysis in conjunction with RNNs for potassium chloride flotation process modeling. Chen et al.~\cite{chen2023adaptive} introduced the Transformer architecture for sulfur recovery unit modeling. It should be noted that neural network modules may not always satisfy the invertibility requirement. Therefore, NPLVMs are typically trained using amortized variational inference (AVI)~\cite{kingma2013auto,ganguly2022amortized}. In AVI, an inference network parameterizes the variational distribution, and the inference and generative networks are trained jointly using standard deep-learning backends~\cite{TorchNips}. Among the aforementioned works, we observe that, with the rapid development of neural networks, numerous approaches have been proposed to enhance the expressiveness of the variational family. Nevertheless, a major limitation termed ``approximation accuracy'' remains. Specifically, the true posterior can be viewed as an element of an effectively infinite-dimensional function space, whereas in practice it must be represented by a finite-dimensional, parameterized variational distribution. This inherent restriction can limit how closely the variational posterior matches the true posterior and, in turn, may reduce the modeling accuracy of soft sensors.
% Among the abovementioned works, we observe that with the development of neural networks, various approaches have been introduced to improve the capability of variational distribution family. There remain major limitation termed approximation accuracy. Specifically, the posterior distribution is finding a function within infinite-dimensional space, while parameterizing it within finite space remain limited due to this parameterization, potentially reducing the soft sensor modeling accuracy.

The key to alleviating this issue is to sidestep the direct optimization of the KL divergence and find alternative ways to infer the distribution of the latent variable model. Notably, there has been rapid progress in \emph{optimization over probability measures} that uses the Wasserstein distance as a proximal term~\cite{parikh2014proximal}. This line of work provides a principled way to construct variational schemes whose iterates admit clear interpretations at the probability distribution level, and it has been successfully used to design numerical solvers for measure-valued dynamics. In particular, Wasserstein proximal-gradient methods have been developed to compute strong solutions of continuity-equation systems and related PDEs, as exemplified by references~\cite{8890903,caluya2021wasserstein,mokrov2021large}. Beyond algorithmic development, these Wasserstein-proximal formulations also enable rigorous convergence analyses for structured composite objectives, as exemplified by references~\cite{salim2020wasserstein,JiaoJiaoparGradientFlow,chen2023density}. Despite these encouraging developments, existing Wasserstein-proximal frameworks are largely developed in a problem-agnostic manner, with emphasis on general measure-valued dynamics. For PLVMs, important gaps still remain. First, when the objective is the KL divergence to a posterior, it is not straightforward to turn Wasserstein-proximal updates into practical implementations. Second, existing convergence analyses are typically derived under general assumptions, and thus provide limited model-specific guarantees for KL-based latent-variable inference.
%Notably, there has been great progress for the optimization over distribution techniques, which lift the optimization space to finite-dimensional space to infinite dimensional space, as represented by the Wasserstein distance induced proximal gradient descent, has been gradually applied for designing algorithm and analyze conducting the convergence analysis. For example, references~\cite{8890903,caluya2021wasserstein,mokrov2021large} have introduced Wasserstein distance induced proximal gradient descent scheme to analyze and develop strong solution solver of the continuity equation. Meanwhile, related theoretical studies have been proposed based on Wasserstein distance induced proximal gradient descent scheme, for example, convergence rate 

To fill these technical gaps, in this paper, we introduce the Wasserstein distance as a proximal operator to relax the optimization of the KL divergence, facilitating a convergence-guaranteed algorithm for latent variable inference, which we term the kernelized Proximal Gradient Descent-based (KProx) algorithm. Based on this, we further develop the training process for the inference network and the generative network accordingly, and propose a novel training algorithm for NPLVM training. Finally, we conduct various experiments to validate the efficacy of the proposed approach.

\noindent\textbf{Contributions:} The contributions are summarized as follows:
\begin{enumerate}[leftmargin=*]
    \item{This paper theoretically characterizes the approximation gap in latent-variable inference induced by parameterizing the variational distribution with a finite parameter space, introducing a Wasserstein distance-based proximal operator to alleviate this issue.
    % This paper first proves that the accuracy of latent variable inference approximation is limited by parameterizing the variational distribution with a finite parameter space, introducing a Wasserstein distance-based proximal operator to alleviate this issue.
    }
    \item{This paper derives a computationally implementable procedure to realize the Wasserstein distance-based proximal operator latent variable inference strategy and proves its asymptotic local convergence under mild assumptions.}
    \item{This paper advances a novel algorithm specifically designed for NPLVMs in the context of soft sensor modeling.}
    % \item{Various experiments are conducted to demonstrate the efficacy of the proposed approach.}
\end{enumerate}

\noindent\textbf{Organization:} The organization of this manuscript can be summarized as follows: To facilitate the reading of this manuscript, the preliminaries are given in~\Cref{sec:Preliminaries}. On this basis, the detailed analysis of the abovementioned problems, and the corresponding solution strategy to the abovementioned problems are given in~\Cref{sec:proposedApproach}. After that, the experimental results are given in~\Cref{sec:experResults}. Finally, the conclusions, limitations, and future research directions are listed in~\Cref{sec:Conclusions}.

% \section{Literature Review \& Technical. Gaps}\label{sec:Preliminaries}

\section{Preliminaries}\label{sec:Preliminaries}
\subsection{Amortized Variational Inference}
% \indent Variational inference is a commonly used algorithm to learn the parameters of PLVMs \cite{blei2017variational,8588399}. 
%Denote the latent variable as $z$, the dataset as $\mathcal{D}$, the PLVMs attempt to model the data generative distribution $\mathcal{P}(\mathcal{D}\vert z)$ by a neural network parameterized by  $\theta$ denoted as $p_\theta(\mathcal{D}|z)$. 
Let the latent variable be $z$ and the dataset be $D$. NPLVMs model the generative distribution $\mathcal{P}(\mathcal{D}\vert z)$ using a neural network parameterized by $\theta$, denoted $p_\theta(\mathcal{D}\vert z)$. Based on this, the basic idea of variational inference is to approximate the posterior distribution of the model distribution 
$\mathcal{P}(z|\mathcal{D})$ by variational distribution $\mathcal{Q}(\theta)$ through minimizing the Kullback-Leibler divergence (KL divergence) as follows~\cite{8588399}:
\begin{equation} \label{VBApproximate}
 \mathcal{Q}^*(z) = \mathop{\arg\min}_{\mathcal{Q}(z)}{\underbrace{{\mathbb{D}}_{\text{KL}}\left[\mathcal{Q}(z)\Vert \mathcal{P}(z \vert\mathcal{D})\right]}_{\coloneqq \int{\mathcal{Q}(z)\log{\frac{\mathcal{Q}(z)}{\mathcal{P}(z\vert\mathcal{D})}}\mathrm{d} z}} }. %= {}.%\coloneqq .
\end{equation}
Eq.~\Cref{VBApproximate} can be further reformulated as follows:
\begin{equation}\label{KLDivergence}
\begin{aligned}
& {{\mathbb{D}}_{\text{KL}}}\left[q(z)\Vert \mathcal{P}(z\vert\mathcal{D})\right]
%  =& \int {q(\Theta )\log \frac{{q(\Theta )p(\mathcal{D})}}{{p(\mathcal{D}\left| \Theta  \right.)p(\Theta )}}\mathrm{d}\Theta } \\
% & = \int {q{\rm{(}}\Theta {\rm{)}}\log \frac{{q{\rm{(}}\Theta {\rm{)}}}}{{p{\rm{(}}\Theta |\mathcal{D}{\rm{)}}}}\mathrm{d}\Theta } \\
 =\underbrace{ {\mathbb{D}_{\text{KL}}}(q(z)||\mathcal{P}(z ))-\int{q(z )\log p(\mathcal{D}|z)\mathrm{d}z }}_{\coloneqq -\mathcal{L}_{\text{ELBO}}} +\log p(\mathcal{D}) ,
\end{aligned}
\end{equation}
where $\mathcal{L}_{\text{ELBO}}$ is the \uline{E}vidence \uline{L}ower \uline{BO}und. It can be concluded that lifting ELBO $\mathcal{L}_{\text{ELBO}}$ is equivalent to reducing the KL divergence term $\mathbb{D}_{\rm{KL}}[q(\theta)\Vert \mathcal{P}(\theta\vert \mathcal{D})]$, thereby realizing the variational inference.

On this basis, the AVI attempts to parameterize the $\mathcal{Q}(z)$ via a neural network $q_\varphi(z\vert \mathcal{D})$ with parameter $\varphi$ as:
\begin{equation}
   \mathcal{Q}(z) \coloneqq q_\varphi(z\vert\mathcal{D}),
\end{equation}
and learns the parameter of $\varphi$ and $\theta$ jointly as follows:
\begin{equation}\label{eq:aviELBOEquation}
    \mathop{\arg\max}_{\varphi,\theta}~~\mathbb{E}_{q_\varphi(z|\mathcal{D})}[\log{p_\theta(\mathcal{D}|z)}] - \mathbb{D}_{\rm{KL}}[q_\varphi(z\vert\mathcal{D})\Vert \mathcal{P}(z)].
\end{equation}
% Notably, the AVI converts the optimization of $\mathcal{Q}(z)$ in an infinite dimensional function space to the optimization of $\varphi$ in a finite dimensional parameter space, thereby facilitate the model learning procedure.
Notably, AVI converts the optimization of $\mathcal{Q}(z)$ in an infinite-dimensional function space into optimizing parameters $\varphi$ in a finite-dimensional parameter space, thereby facilitating the model learning procedure.

\iffalse
% ~\Cref{KLDivergence} can be rewritten as \eqref{ELBOReform}:\\
\begin{equation}\label{ELBOReform}
  {{\mathbb{D}}_{\text{KL}}}(q{\rm{(}}\Theta {\rm{)}}\left\| {p{\rm{(}}\Theta \left| \mathcal{D} \right.{\rm{)}}} \right.) = \log p{\rm{(}}\mathcal{D}) - \text{ELBO}
\end{equation}
As the log-likelihood term $log p(\mathcal{D})$ is constant and the KL divergence term is nonnegative, \eqref{VBApproximate} can be reformulated as \eqref{VariationalApproxReform}:
\begin{equation} \label{VariationalApproxReform}
\begin{aligned}
  {q^*}{\rm{(}}\Theta {\rm{) = arg min }}\log p{\rm{(}}D) - \text{ELBO} = {\rm{arg max }}\text{ELBO}
\end{aligned}
\end{equation}
Therefore, the optimal variational distribution can be obtained via maximizing the ELBO defined in \eqref{ELBODef}. It should be pointed out that, the ELBO term still contains KL divergence, which is hard to compute most of the time. To alleviate this issue, conventional model-based variational inference~\cite{8588399} attempts to introduce the exponential family as the hypothesis space of variational distribution $q(\Theta)$.
\fi

\subsection{Proximal Operators and Optimization}\label{subsec:proximalOptimizationMethod}

The proximal operator is a key concept in optimization theory, defined for a proper, lower semicontinuous convex function $g: \mathbb{R}^\mathrm{D} \rightarrow \mathbb{R} \cup \{+\infty\}$ and parameter $\varepsilon > 0$ as:
\begin{equation}
    \text{prox}_{\varepsilon f}(x) = \mathop{\arg\min}_{y} g(y) + \frac{1}{2\varepsilon}\|y-x\|_2^2 .
\end{equation}
This operator balances minimizing $g$ while remaining close to point $x$, with $\varepsilon$ controlling this trade-off. 

A key insight is that the standard gradient descent update can be interpreted through the lens of the proximal operator. For a differentiable function $f$, the update step $x_{k+1} = x_k - \varepsilon \nabla f(x_k)$ is precisely equivalent to applying the proximal operator to a first-order Taylor approximation of $f$ around $x_k$:
\begin{equation}
    x_{k+1} = \mathop{\arg\min}_{y \in \mathbb{R}^n} \underbrace{f(x_k) + \nabla f(x_k)^\top(y-x_k)}_{\coloneqq g(y)} + \frac{1}{2\varepsilon}\|y-x_k\|_2^2 .
\end{equation}
This perspective naturally extends to solving composite optimization problems of the form $F(x) = g(x) + h(x)$, where $g$ is smooth and $h$ is convex but possibly non-smooth. The resulting algorithm, known as proximal gradient descent~\cite{parikh2014proximal}, thus elegantly decouples the smooth and non-smooth components of the objective. \emph{In this scheme, $h(x)$ is called proximal term.} By updating the differentiable term via a standard gradient step and treating the non-differentiable term through a proximal mapping (for example induced by the Wasserstein-distance~\cite{salim2020wasserstein,JiaoJiaoparGradientFlow} or KL-divergence~\cite{skretafeynman,haoWangCausalBalancing,10795195}), the proximal gradient descent offers a powerful and versatile framework for solving complex optimization problems in which conventional gradient-based methods may fail.

\subsection{Wasserstein Space and Wasserstein Distance}\label{subsec:WassDistanceAndWassSpace}
The Wasserstein space $\mathscr{P}_2(\mathrm{D})$ is the finite second-order moment space, whose definition can be given as follows:
\begin{equation}\label{eq:wassDefinition}
\begin{aligned}
   \mathscr{P}_2(\mathrm{D})  
   \coloneqq \{\mathcal{Q}:\mathbb{R}^{\mathrm{D}_{\rm{LV}}}\to\mathbb{R}^+ |\int\mathcal{Q}(z)\mathrm{d}z=1,\mathbb{E}_{\mathcal{Q}(z)}[z^\top z]<\infty\}.
\end{aligned}
\end{equation}
Based on this, the 2-Wasserstein distance ($\mathcal{W}_2$) measures the discrepancy between two probability distributions~\cite{villani2009optimal,wang2026DistDF}, $\mathcal{Q}(z)$ and $\mathcal{Q}_{\boldsymbol{T}}(z)$, defined on a space $\mathcal{Z}$ (e.g., $\mathbb{R}^{\mathrm{D}_{\rm{LV}}}$). Specifically, let $\Pi(\mathcal{Q}(z), \mathcal{Q}_{\boldsymbol{T}}(z))$ be the set of all joint probability distributions $\pi(z, z')$ on $\mathcal{Z} \times \mathcal{Z}$ whose marginals are $\mathcal{Q}(z)$ and $\mathcal{Q}_{\boldsymbol{T}}(z)$, respectively. That is, for any $\pi \in \Pi$, we get $\int \pi(z, z') \mathrm{d}z' = \mathcal{Q}(z)$ and $\int \pi(z, z') \mathrm{d}z = \mathcal{Q}_{\boldsymbol{T}}(z')$. Each such joint distribution $\pi$ is called a ``coupling'' or a ``transport plan," as it describes how mass from $\mathcal{Q}(z)$ is moved to $\mathcal{Q}_{\boldsymbol{T}}(z')$. The squared 2-Wasserstein distance is then defined as the minimum transportation cost over all possible plans:
\begin{equation}\label{eq:kangWassersteinDistance}
    \mathcal{W}_2^2(\mathcal{Q}(z), \mathcal{Q}_{\boldsymbol{T}}(z)) = \inf_{\pi \in \Pi} \int \|z - z'\|_2^2 \, \mathrm{d}\pi(z, z') \ge 0.
\end{equation}
Notably, the Wasserstein distance is capable of measuring discrepancy \emph{even when the supports of the two distributions do not overlap.}
%Notably, Wasserstein distance can compare the discrepancy even the support of two distributions are not overlapped.

In addition, $ \mathcal{W}_2^2$ can be defined by finding an optimal transport map $\boldsymbol{T}: \mathbb{R}^{\mathrm{D}_{\rm{LV}}} \to \mathbb{R}^{\mathrm{D}_{\rm{LV}}}$ that minimizes the average cost of transporting mass from $\mathcal{Q}(z)$ to $\mathcal{Q}_{\boldsymbol{T}}(z)$ as follows:
\begin{equation}
    \mathcal{W}_2^2(\mathcal{Q}(z), \mathcal{Q}_{\boldsymbol{T}}(z)) = \inf_{\boldsymbol{T}:{\boldsymbol{T}}_{\#}\mathcal{Q}(z) = \mathcal{Q}_{\boldsymbol{T}}(z)} \int \|z - \boldsymbol{T}(z)\|_2^2 \, \mathrm{d}\mathcal{Q}(z),
\end{equation}
%where $\boldsymbol{T}_{\#}$ denotes the pushforward measure, and $\boldsymbol{T}(z) = z + \varepsilon v(z)$, with $\varepsilon$ is an Infinitesimal and $v(z)$ is called velocity field. 
where $\boldsymbol{T}_{\#}$ denotes the pushforward measure, $\mathcal{Q}_{\boldsymbol{T}}$(z) is the PDF after applying the transportation map $\boldsymbol{T}$ to $\mathcal{Q}$(z), and $\boldsymbol{T}(z) = z + \varepsilon v(z)$, with $\varepsilon$ representing an infinitesimal quantity and $v(z)$ referred to as the velocity field. Let $\{\mathcal{Q}_t\}_{t=1}^{\mathrm{T}}$ be a path in the space of probability measures $\mathscr{P}_2(\mathbb{R}^{\mathrm{D}_{\rm{LV}}})$, and $\mathcal{Q}_t$ indicates the PDF at time index $t$. 

A curve $\{\mathcal{Q}_t(z)\}$ in $\mathscr{P}_2(\mathrm{D})$ is governed by the continuity equation~\cite{santambrogio2017euclidean}: 
%Such a path is governed by the continuity equation, which describes the transport of mass over time:
\begin{equation}\label{eq:continuityEquationPDE}
    \frac{\partial \mathcal{Q}_t(z)}{\partial t} = -\nabla\cdot [v(z)\mathcal{Q}_t(z)]~~\text{or}~~\frac{\mathrm{d}\mathcal{Q}_t(z)}{\mathrm{d}t} = -\mathcal{Q}_t(z)\nabla\cdot v(z),
\end{equation}
where $\frac{\partial \mathcal{Q}_t(z)}{\partial t}$ is the Lagrangian derivative, $\frac{\mathrm{d}\mathcal{Q}_t(z)}{\mathrm{d}t}$ is the Eulerian derivative, $v: \mathbb{R}^{\mathrm{D}_{\rm{LV}}}\to \mathbb{R}^{\mathrm{D}_{\rm{LV}}}$ is called perturbation direction that defines the instantaneous velocity of the probability particles at location $z$ and time $t$.

% The resulting algorithm, known as proximal gradient descent, combines a standard gradient step on $f$ with a proximal step on $h$, forming the basis for our proposed method.
%

%Notably, gradient descent emerges as a special case of proximal operations. For a differentiable function $f$, the gradient descent update $x_{k+1} = x_k - \alpha \nabla f(x_k)$ can be reformulated as:
% \begin{equation}
% \begin{aligned}
%     x_{k+1}  = \mathop{\arg\min}_{y} f(x_k) + [\nabla_{x} f(x_k)]^\top(y-x_k)+ \frac{1}{2\alpha}\|y-x_k\|_2^2 
% \end{aligned}
% \end{equation}
% This reveals gradient descent as $x_{k+1} = \text{prox}_{\alpha g}(x_k)$ where $g(y)$ is the first-order approximation of $f$ at $x_k$. 

\section{Methodology}\label{sec:proposedApproach}
\subsection{Motivation Analysis}\label{subsec:MotivationAnalysis}
While the learning objective in~\Cref{eq:aviELBOEquation} is foundational to modern NPLVMs, its practical application imposes significant constraints on the variational distribution $q_\varphi(z|\mathcal{D})$, hindering effective learning. Specifically, two conditions must be met:
\begin{enumerate}[leftmargin=*]
    \item{The support of $q_\varphi(z\vert\mathcal{D})$ must be a subset of the support of the prior $\mathcal{P}(z)$ to ensure the KL divergence is well-defined.}
\item{The KL divergence term, $\mathbb{D}_{\rm{KL}}[q_\varphi(z\vert\mathcal{D})\Vert \mathcal{P}(z)]$, must be tractable. This is essential for computing the gradients needed to optimize $\varphi$.}
\end{enumerate}
To ensure tractability, $q_\varphi(z\vert\mathcal{D})$ should be restricted to a simple family (e.g., Gaussian), for which the KL divergence has a convenient closed-form solution. This limits the expressiveness of the variational posterior, regardless of the complexity of its underlying neural network. More specifically, we establish the following lemma characterizing the approximation error in this setting, using the KL divergence as the error metric, based on Theorem 7.2 in~\cite{amari2016information}:
\begin{lemma}\label{thm:klDivergenceBound}
Let $\mathcal{P}(z| \theta)$ and $\mathcal{Q}_\varphi(z)=\mathcal{Q}(z|\varphi)$ be two probability distributions belonging to the exponential family, defined as: $\mathcal{Q}(z|\varphi) = h(z) \exp(\upeta^\top(\varphi) \mathscr{T}(z) - \mathcal{A}(\varphi))$ where $\upeta(\varphi)$ is the natural parameter, $\mathscr{T}(z)$ is the sufficient statistic, $\mathcal{A}(\varphi)$ is the log-partition function, and $h(z)$ is the base measure. With the following assumptions:
\begin{enumerate}[leftmargin=*]
\item[(A1)]{\textbf{Same regular exponential family.}
$\mathcal{P}(z\vert\theta)$ and $\mathcal{Q}(z\vert\varphi)$ belong to the same regular exponential family (same $h(z)$ and $\mathscr{T}(z)$), and $ \mathcal{A}(\varphi)$ is twice continuously differentiable in the neighborhood.
}
\item[(A2)]{\textbf{$M$-Lipschitz condition.}
With Fisher information matrix $\mathscr{I}(\eta)\coloneqq  \nabla^2 \mathcal{A}(\eta)$, we have:
$
\|\mathscr{I}(x)-\mathscr{I}(y)\| \le M\|x-y\|.
$
}
\item[(A3)]{\textbf{Locally strong convexity.}
With $w\coloneqq x-y$ and a positive constant $\kappa$, we have:
$
w^\top \mathscr{I}(x)w \ge \kappa \Vert w \Vert^2.
$
}
\end{enumerate}
Then, for $\theta$ and $\varphi$ sufficiently close, the KL divergence between $\mathcal{Q}(z|\varphi)$ and $\mathcal{P}(z| \theta)$ is approximately lower bounded as follows: % long as $\upeta(\varphi)$ and $\upeta(\theta)$ are close enough and the ${M}$-Lipschitz Fisher information matrix is local strong convex:
\begin{equation}\label{eq:klIneqaulityResult}
    \mathbb{D}_{\rm{KL}}[\mathcal{Q}_\varphi(z) || \mathcal{P}(z| \theta)] %\gtrsim 
  \ge  \frac{1}{2} [\upeta(\varphi) - \upeta(\theta)]^\top \mathscr{I}(\theta_1) [\upeta(\varphi) - \upeta(\theta)].
\end{equation}
% where $\mathscr{I}(\theta_1)$ is the Fisher information matrix with respect to the natural parameter $\upeta$, evaluated at $\theta_1$, and is given by $\mathscr{I}(\varphi) \coloneqq   \nabla^2 \mathcal{A}(\varphi) $.
\end{lemma}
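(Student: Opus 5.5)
We work in natural-parameter coordinates and write $\upeta_1\coloneqq\upeta(\varphi)$, $\upeta_2\coloneqq\upeta(\theta)$. The first step is to express the KL divergence through the log-partition function. By (A1), both densities share $h(z)$ and $\mathscr{T}(z)$, so the base measure cancels in the log-ratio:
\begin{equation*}
\log\frac{\mathcal{Q}(z|\varphi)}{\mathcal{P}(z|\theta)}=(\upeta_1-\upeta_2)^\top\mathscr{T}(z)-\mathcal{A}(\upeta_1)+\mathcal{A}(\upeta_2).
\end{equation*}
Taking the expectation under $\mathcal{Q}(z|\varphi)$ and using the regular-family identity $\mathbb{E}_{\mathcal{Q}(z|\varphi)}[\mathscr{T}(z)]=\nabla\mathcal{A}(\upeta_1)$ gives
\begin{equation*}
\mathbb{D}_{\rm KL}[\mathcal{Q}_\varphi\Vert\mathcal{P}_\theta]=\mathcal{A}(\upeta_2)-\mathcal{A}(\upeta_1)-\nabla\mathcal{A}(\upeta_1)^\top(\upeta_2-\upeta_1).
\end{equation*}
This is the Bregman divergence of $\mathcal{A}$, in line with Theorem 7.2 of~\cite{amari2016information}. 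The identity is exact and needs only differentiation under the integral sign, which regularity guarantees.

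Next, we Taylor expand $\mathcal{A}$ about $\upeta_1$ along the segment to $\upeta_2$. Since $\mathcal{A}$ is $C^2$ by (A1), the integral form of the remainder gives
\begin{equation*}
\mathbb{D}_{\rm KL}=\int_0^1(1-s)\,w^\top\mathscr{I}(\upeta_1+s w)\,w\,\mathrm{d}s,\qquad w\coloneqq\upeta_2-\upeta_1.
\end{equation*}
By the mean value theorem for integrals, this equals $\tfrac12 w^\top\mathscr{I}(\theta_1)w$ for some intermediate point $\theta_1$ on the segment; the theorem applies because $w^\top\mathscr{I}(\cdot)w$ is continuous in $s$ and $\int_0^1(1-s)\,\mathrm{d}s=\tfrac12$. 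This already yields \eqref{eq:klIneqaulityResult}, with equality, provided $\theta_1$ is understood as this intermediate point. Since $w^\top\mathscr{I} w$ is a quadratic form, the sign of $w$ does not matter.

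If instead $\theta_1$ is meant to be a fixed endpoint (say $\upeta(\theta)$ or $\upeta(\varphi)$), we control the deviation with (A2) and (A3). Replacing $\mathscr{I}(\upeta_1+sw)$ by $\mathscr{I}(\upeta_1)$ changes the integrand by at most $M s\|w\|^3$, so
\begin{equation*}
\mathbb{D}_{\rm KL}\ge\tfrac12 w^\top\mathscr{I}(\upeta_1)w-\tfrac{M}{6}\|w\|^3 .
\end{equation*}
By (A3), $\tfrac12 w^\top\mathscr{I}(\upeta_1)w\ge\tfrac{\kappa}{2}\|w\|^2$. The cubic remainder is therefore a relative error of order $M\|w\|/(3\kappa)$, which vanishes as $\theta$ and $\varphi$ approach each other. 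This is the precise sense of ``approximately lower bounded'' in the statement: for any $\delta>0$ we obtain $\mathbb{D}_{\rm KL}\ge(1-\delta)\tfrac12 w^\top\mathscr{I}(\upeta_1)w$ whenever $\|w\|\le 3\kappa\delta/M$.

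The main obstacle is not any computation but fixing the meaning of $\theta_1$ and of ``approximately''. I would first present the exact intermediate-point identity, and then state the endpoint version with the explicit $O(M\|w\|^3)$ remainder bounded via (A2) and (A3). A secondary point to check is that the whole segment stays inside the neighborhood where (A1)–(A3) hold; this is ensured by taking the two parameters sufficiently close.
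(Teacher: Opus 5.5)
Your argument is correct, but it differs from the paper's at the decisive step.

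\textbf{The paper's route.} The paper writes the divergence as $(\upeta(\varphi)-\upeta(\theta))^\top\mathbb{E}_{\mathcal{Q}_\varphi}[\mathscr{T}(z)]-(\mathcal{A}(\varphi)-\mathcal{A}(\theta))$ and Taylor-expands $\mathcal{A}(\theta)$ to second order about $\varphi$ (written with ``$\approx$''). It then closes with the auxiliary inequality \eqref{eq:HessianInequality}, namely $f(y)\ge f(x)+\nabla f(x)^\top(y-x)+\tfrac12(y-x)^\top\nabla^2f(x)(y-x)$ for $\|y-x\|\le 3\kappa/M$, applied to $f=\mathcal{A}$ at $x=\upeta(\varphi)$. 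This is meant to give a slack-free bound with $\mathscr{I}(\theta_1)=\mathscr{I}(\varphi)$.

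\textbf{Your route.} You make the Bregman identity exact through $\mathbb{E}_{\mathcal{Q}_\varphi}[\mathscr{T}]=\nabla\mathcal{A}(\upeta_1)$ and keep the integral remainder. From that you obtain either an exact identity at an intermediate point, or an endpoint bound with the explicit factor $1-M\|w\|/(3\kappa)$.

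\textbf{What each buys.} The paper's route would give a cleaner endpoint statement, but its auxiliary inequality is not actually proved. The supplementary derivation shows $\tfrac12 w^\top\nabla^2 f(x)w+\mathrm{R}_3\ge0$. That is only the first-order bound $f(y)\ge f(x)+\nabla f(x)^\top(y-x)$; the claimed second-order bound needs $\mathrm{R}_3\ge0$.

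That condition fails in general, even under (A1)--(A3) and inside the radius $3\kappa/M$. Take the Bernoulli log-partition $\mathcal{A}(\upeta)=\log(1+e^{\upeta})$ with $\upeta(\varphi)=1$ and $\upeta(\theta)=1.1$. The divergence is about $9.68\times10^{-4}$, while $\tfrac12\mathscr{I}(\varphi)w^2\approx9.83\times10^{-4}$. Also, if the curvature increases along the segment, the bound with $\mathscr{I}$ evaluated at the other endpoint $\upeta(\theta)$ fails too.

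So your care over what $\theta_1$ and ``approximately'' mean is essential, not pedantic. The intermediate-point equality and the $(1-\delta)$ endpoint bound are the statements that are actually true, and your route proves them rigorously.
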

The abovementioned assumptions typically hold in industrial process data–driven modeling when we restrict attention to a normal operating region where the data-generating mechanism is well approximated by a single regular exponential-family model (e.g., Gaussian for continuous sensor readings, Bernoulli for binary quality indicators) with a common base measure and sufficient statistics, and where the natural parameters remain in a bounded neighborhood of the nominal condition (no extreme extrapolation, and no probabilities near the boundary). Notably, to the best of our knowledge, most NPLVMs~\cite{shen2020nonlinear,9625835,kong2022latent} that employ unimodal Gaussian distributions or Gaussian mixture models implicitly make this assumption. As such, the proof of this lemma is as follows: % On such a local, bounded set, the log-partition function $ \mathcal{A}(\eta) $ is smooth, the Fisher information $ \mathscr{I}(\eta)=\nabla_\eta^2 \mathcal{A}(\eta) $ varies smoothly (hence is locally $M$-Lipschitz), and the process provides sufficient excitation/identifiability so that $ \mathscr{I}(\eta) $ is uniformly positive definite, implying local strong convexity.

\begin{proof}
For $f(x)$, if the hessian matrix $\nabla^2 f(x)$ is positive semidefinite and ${M}$-Lipschitz and locally strong convex, we get the following inequality as long as $y$ and $x$ are close enough (detailed derivation about this condition is given in the supplementary material):
\begin{equation}\label{eq:HessianInequality}
    f(y) \ge f(x) + [\nabla f(x)]^\top[y-x] + \frac{1}{2} [y-x]^\top \nabla^2f(x)[y-x].
\end{equation}
On this basis, using the definition of KL divergence, we get:
\begin{equation}
    \begin{aligned}
        &\mathbb{D}_{\rm{KL}}[\mathcal{Q}_\varphi(z) || \mathcal{P}(z| \theta)] =\int \mathcal{Q}_\varphi(z) \log \frac{h(z) \exp(\upeta(\varphi)^\top \mathscr{T}(z) - \mathcal{A}(\varphi))}{h(z) \exp(\upeta(\theta)^\top \mathscr{T}(z) - \mathcal{A}(\theta))} \mathrm{d}z 
        % &= \int \mathcal{Q}_\varphi(z) \left[ (\upeta(\varphi) - \upeta(\theta))^\top \mathscr{T}(z) - (\mathcal{A}(\varphi) - \mathcal{A}(\theta)) \right] dz \\
        = (\upeta(\varphi) - \upeta(\theta))^\top \mathbb{E}_{\mathcal{Q}_\varphi(z)}[\mathscr{T}(z)] - (\mathcal{A}(\varphi) - \mathcal{A}(\theta)).
        \end{aligned}
    \end{equation}
Applying the Taylor's expansion to $\mathcal{A}(\theta)$, the following equation can be obtained:
\begin{equation}
    \mathcal{A}(\theta) \approx \mathcal{A}(\varphi) + \nabla \mathcal{A}(\varphi)^\top (\theta - \varphi) + \frac{1}{2} (\theta - \varphi)^\top \nabla^2 \mathcal{A}(\varphi) (\theta - \varphi).
\end{equation}
Based on this, we get:
\begin{equation}
        \begin{aligned}
        &\mathbb{D}_{\rm{KL}}[\mathcal{Q}_\varphi(z) || \mathcal{P}(z| \theta)] \\
        \approx& \cancel{(\upeta(\varphi) - \upeta(\theta))^\top \mathbb{E}_{\mathcal{Q}_\varphi(z)}[\mathscr{T}(z)]} -\cancel{\mathbb{E}_{\mathcal{Q}_\varphi(z)}[\mathscr{T}(z)]^\top (\upeta(\theta) - \upeta(\varphi))}  + \frac{1}{2} (\upeta(\theta) - \upeta(\varphi))^\top \nabla^2 \mathcal{A}(\varphi) (\upeta(\theta) - \upeta(\varphi)) \\
        % &= \frac{1}{2} (\upeta(\varphi) - \upeta(\theta))^\top \nabla^2 \mathcal{A}(\varphi) (\upeta(\varphi) - \upeta(\theta))\\
        =& \frac{1}{2} [\upeta(\varphi) - \upeta(\theta)]^\top \mathscr{I}(\varphi) [\upeta(\varphi) - \upeta(\theta)].
        \end{aligned}
\end{equation}
Using the fact that the Fisher information matrix $\mathscr{I}(\theta)$ is a positive semidefinite matrix, we get~\Cref{eq:klIneqaulityResult} based on~\Cref{eq:HessianInequality}.
\end{proof}
From the abovementioned lemma, it can be observed that \emph{when parameterizing the variational distribution within a finite-dimensional parameter $\varphi$, the approximation error will be lower bounded by the selection of the distribution family, thereby resulting in a large approximation error when misselecting the parameter family.} To better support this lemma, we introduce the following toy problem: We set $\mathcal{P}(z\vert\mathcal{D})\propto \tfrac{1}{2}\mathcal{N}(-2,1) + \tfrac{1}{2}\mathcal{N}(2,1)$ and the optimization problem is set as $\mathop{\arg\min}_{\mathcal{Q}(z)}\mathbb{D}_{\rm{KL}}[\mathcal{Q}(z)\Vert \mathcal{P}(z\vert\mathcal{D})]$. As shown in~\Cref{fig:gaussApproximation}, when $\mathcal{Q}(z)$ is constrained to a predefined Gaussian family, the variational distribution fails to accurately approximate the true posterior if its family does not match that of $\mathcal{P}(z\vert\mathcal{D})$. In contrast, as demonstrated in~\Cref{fig:malaApproximation}, employing the KProx algorithm allows it to gradually approach $p(z\vert\mathcal{D})$, achieving significantly improved approximation accuracy. Thus, the key to alleviating the approximation error problem is to find another objective function that bypasses the direct computation of $\mathbb{D}_{\rm{KL}}[\mathcal{Q}(z)\Vert \mathcal{P}(z\vert\mathcal{D})]$, which is one of the key contributions of this manuscript.

\begin{figure}[!h]
  \vspace{-0.4cm}
  \centering
  % figures\cut_sampled_results.pdf
  % figures\cut_matlab_graph\cut_sampled_results_50.pdf
  \subfigure[$\mathcal{Q}(z)$ within Gaussian family.]{\includegraphics[width=0.30\linewidth]{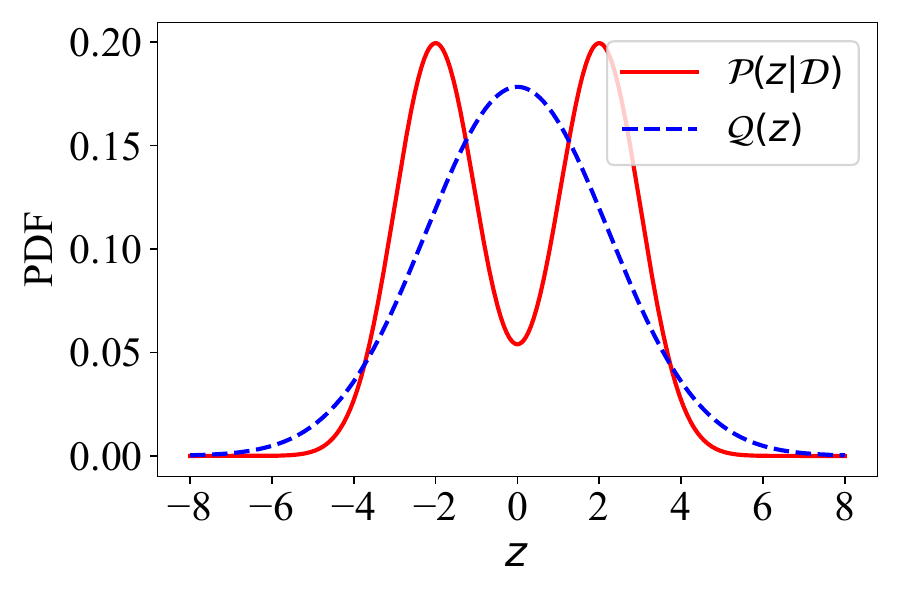}\label{fig:gaussApproximation}}
  \subfigure[$\mathcal{Q}(z)$ by KProx.]{\includegraphics[width=0.30\linewidth]{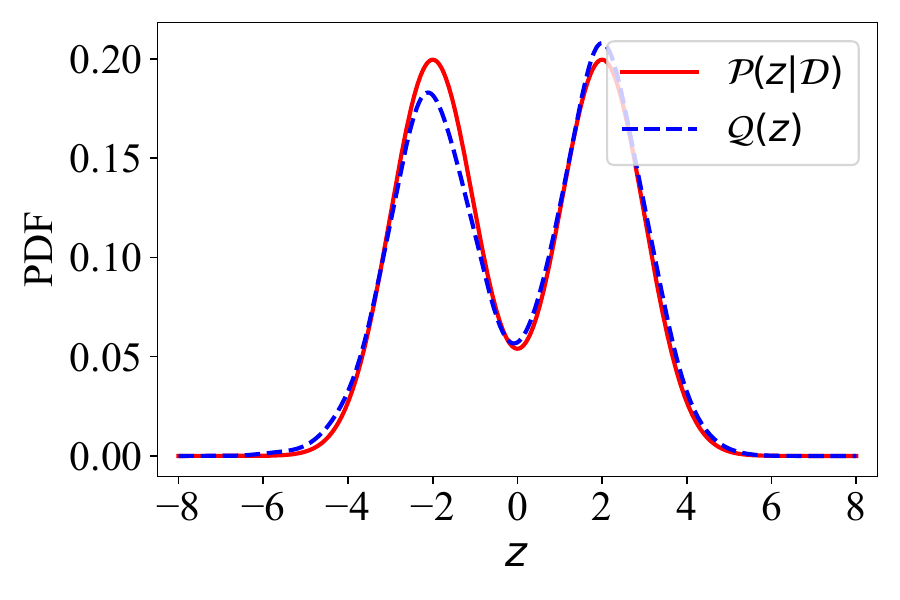}\label{fig:malaApproximation}}
  
  % \subfigure[$w=2$, $\kappa_\mathrm{p}=0.5$.]{\includegraphics[width=0.24\linewidth]{figures/transport/transport_0.5_2.pdf}}
  % \subfigure[Transport cost of different $w$.]{\includegraphics[width=0.24\linewidth]{figures/transport/match_loss.pdf}}

  \caption{Posterior distribution approximation comparison.
  }\label{fig:samplingResults}
  \vspace{-0.3cm}
\end{figure}

% From the abovementioned theorem, it can be observed that \emph{when parameterizing the variational distribution within finite dimensional parameter $\varphi$, the approximation error will be lower bounded by the selection of distribution family, thereby resulting in a large approximation error when misselecting the parameter family.} Thus, the key to alleviating this problem is to find another objective function that bypasses the direct computation of $\mathbb{D}_{\rm{KL}}[\mathcal{Q}(z)\Vert \mathcal{P}(z\vert\mathcal{D})]$, which is one of the key contributions for this manuscript.
% Consequently, the model may fail to capture the true posterior's geometry, motivating our search for a more flexible learning framework.
% Based on the abovementioned consideration, the $q_\varphi(z\vert\mathcal{D})$ should be chosen easily such that the computational of $\mathbb{D}_{\rm{KL}}[q_\varphi(z\vert\mathcal{D})\Vert \mathcal{P}(z)]$ is well-defined and tractable. As a result, the model expressiveness can be greatly restrcited no matter how the $q_\varphi(z\vert\mathcal{D})$ is complex.
%Specifically, to ease the computation of $\mathcal{L}_{\text{ELBO}}$ such that the gradient with-respect-to parameter $\varphi$ is easy-to-obtain, the approximation $q_\varphi(z\vert\mathcal{D})$ should satisfy the condition that $q_\varphi(z\vert\mathcal{D})\gg \mathcal{P}(z)$ and satisfy some specific relationship, otherwise the KL divergence term is intractable. As a result, the approximation distribution is greatly eliminated, thereby hinder the difficulty. 

\subsection{Wasserstein Distance as Proximal Operator}\label{subsec:WassProximalOperator}

From Lemma \ref{thm:klDivergenceBound}, we observe that directly minimizing the KL divergence can be limited by the accuracy of its approximation. Motivated by this observation, we adopt a practical alternative: rather than optimizing the KL divergence itself, we progressively optimize a tractable upper bound on the KL term, which in turn drives down the original KL objective~\cite{weiguo2026ProximalSampler}. Based on this, we consider progressively reducing the $\mathcal{Q}(z)$ within the Wasserstein space. Based on this intuition, we consider using the Wasserstein distance as the proximal term and formulate the problem we aim to solve as follows:
% The vital step to realize the variational inference is reducing the KL divergence between $\mathcal{Q}(z)$ and the posterior distribution $\mathcal{P}(z|\mathcal{D})$. Based on this, we consider progressively reducing the $\mathcal{Q}(z)$ within the Wasserstein space. Based on this intuition, we consider using the Wasserstein distance as the \textcolor{blue}{proximal term} and formulate the problem we aim to solve as follows:
\begin{equation}\label{eq:klDivProximalOperation}
\begin{aligned}
   & \mathop{\arg\min}_{\mathcal{Q}_{\boldsymbol{T}}}~~\mathbb{D}_{\rm{KL}}[\mathcal{Q}_{\boldsymbol{T}}(z)\Vert \mathcal{P}(z|\mathcal{D})] + \frac{1}{2\varepsilon} \mathcal{W}_2^2(\mathcal{Q}_{\boldsymbol{T}}(z),\mathcal{Q}(z))\\
\Rightarrow 
& \mathop{\arg\min}_{\mathcal{Q}_{\boldsymbol{T}}}~~\mathbb{D}_{\rm{KL}}[\mathcal{Q}_{\boldsymbol{T}}(z)\Vert \mathcal{P}(z|\mathcal{D})] + \frac{1}{2\varepsilon} \mathcal{W}_2^2(\mathcal{Q}_{\boldsymbol{T}}(z),\mathcal{Q}(z)) - \mathbb{D}_{\rm{KL}}[\mathcal{Q}(z)\Vert \mathcal{P}(z|\mathcal{D})],
\end{aligned}
\end{equation}
where the last line is based on the fact that $\mathbb{D}_{\rm{KL}}[\mathcal{Q}(z)\Vert \mathcal{P}(z|\mathcal{D})]$ will not affect the optimization result. Based on this, we consider decomposing the transportation map $\boldsymbol{T}(z)$ according to~\Cref{subsec:WassDistanceAndWassSpace} as follows:
\begin{equation*}%\label{eq:transportMapResult}
    \boldsymbol{T}(z) = z + \varepsilon v(z),
\end{equation*}
where $v(z)$ is called velocity field, and $\varepsilon$ is the Infinitesimal quantity. As such, the continuity equation given by~\Cref{eq:continuityEquationPDE} can be expanded as follows:
\begin{equation}
    \mathcal{Q}_{\boldsymbol{T}}(z) = \mathcal{Q}(z) -\varepsilon\nabla\cdot{[ \mathcal{Q}(z) v(z)]} + \mathbf{H.O.T.}(\varepsilon^2),
\end{equation}
where $\mathbf{H.O.T.}$ is the abbreviation of higher-order term. Consequently,~\Cref{eq:klDivProximalOperation} can be expanded as follows (detailed derivations are given in supplementary material):
 \begin{align}\label{eq:klDivExpansionResult}
       &  \mathbb{D}_{\rm{KL}}[\mathcal{Q}_{\boldsymbol{T}}(z)\Vert \mathcal{P}(z|\mathcal{D})] + \frac{1}{2\varepsilon} \mathcal{W}_2^2(\mathcal{Q}_{\boldsymbol{T}}(z),\mathcal{Q}(z))  -  \mathbb{D}_{\rm{KL}}[\mathcal{Q}(z)\Vert \mathcal{P}(z|\mathcal{D})] \notag\\
     \le  &  \cancel{\mathbb{D}_{\rm{KL}}[\mathcal{Q}(z)\Vert \mathcal{P}(z|\mathcal{D})]}-  \cancel{\mathbb{D}_{\rm{KL}}[\mathcal{Q}(z)\Vert \mathcal{P}(z|\mathcal{D})]}  + \dfrac{\varepsilon}{2}\mathbb{E}_ \mathcal{Q}(z)[\Vert{v(z)}\Vert^2_2 ]
  + \varepsilon\int [{\mathcal{Q}(z) v^\top(z)}\nabla\dfrac{\delta \mathbb{D}_{\rm{KL}}[\mathcal{Q}(z)\Vert \mathcal{P}(z|\mathcal{D})]}{\delta \mathcal{Q}(z)}]\mathrm{d}z\notag\\
     \le & \dfrac{\varepsilon}{2}\mathbb{E}_{\mathcal{Q}(z)}[\Vert \nabla\dfrac{\delta \mathbb{D}_{\rm{KL}}[\mathcal{Q}(z)\Vert \mathcal{P}(z|\mathcal{D})]}{\delta \mathcal{Q}(z)}]
     \Vert^2_2 ] +\dfrac{\varepsilon}{2}\mathbb{E}_{\mathcal{Q}(z)}[\Vert{v(z)}\Vert^2_2 ]  + \varepsilon\int [{\mathcal{Q}(z) v^\top(z)} \nabla\dfrac{\delta \mathbb{D}_{\rm{KL}}[\mathcal{Q}(z)\Vert \mathcal{P}(z|\mathcal{D})]}{\delta \mathcal{Q}(z)}]\mathrm{d}z\notag\\
      = & \frac{\varepsilon}{2}\mathbb{E}_{\mathcal{Q}(z)}\{\Vert v(z) + \nabla\frac{\delta \mathbb{D}_{\rm{KL}}[\mathcal{Q}(z)\Vert\mathcal{P}(z\vert\mathcal{D})]}{\delta \mathcal{Q}(z)} \Vert_2^2 \}.
\end{align}
Thus, the regularized optimal solution to the proximal operator induced problem defined by~\Cref{eq:klDivProximalOperation} can be given as follows: 
\begin{equation}\label{eq:progressivelyReducingKLDivergence}
\begin{aligned}
    \boldsymbol{T}(z) %=& z-\varepsilon \nabla\frac{\delta \mathbb{D}_{\rm{KL}}[\mathcal{Q}(z)\Vert\mathcal{P}(z\vert\mathcal{D})]}{\delta \mathcal{Q}(z)}
   \overset{\text{(i)}}{=}  z+\varepsilon [\nabla\log{\mathcal{P}(z\vert\mathcal{D})} - \nabla\log{\mathcal{Q}(z)}],
\end{aligned}
\end{equation}
where `(i)' is based on the first variation of KL divergence with-respect-to $\mathcal{Q}(z)$:
\begin{equation}
  \nabla  \frac{\delta \mathbb{D}_{\rm{KL}}[\mathcal{Q}(z)\Vert\mathcal{P}(z\vert\mathcal{D})]}{\delta \mathcal{Q}(z)} =  \nabla\log{\mathcal{Q}(z)} - \nabla\log{\mathcal{P}(z\vert\mathcal{D})} .
\end{equation}

Based on the abovementioned results, we begin by initializing a set of particles $\{z_{0,i}\}_{i=1}^{\ell}$, drawn i.i.d. from an initial distribution $\mathcal{Q}_0(z)$. By recursively applying the transformation from Eq.~\eqref{eq:progressivelyReducingKLDivergence} till time $\mathrm{T}$, we generate a sequence of particles:
\begin{equation}\label{eq:realizationOfTransportationMap}
  z_{t+1, i} = \boldsymbol{T}(z_{t,i}) = z_{t,i} + \varepsilon [\nabla\log{\mathcal{P}(z_{t,i}|\mathcal{D})} - \nabla\log{\mathcal{Q}_t(z_{t,i})}],
\end{equation}
where $\mathcal{Q}_t$ is the empirical distribution of the particles $\{z_{t,i}\}_{i=1}^{\ell}$ at step $t$. This iterative process progressively transforms the particle distribution, decreasing the KL divergence until, at $t=\mathrm{T}$, the resulting density $\mathcal{Q}_{\mathrm{T}}(z)$ is sufficiently close to the target distribution $\mathcal{P}(z\vert \mathcal{D})$.
\iffalse
So far, we can initialize a group of $\{z_{0,i}\}|_{i=0}^{\ell}\in\mathbb{R}^{\mathrm{D}_{\rm{LV}}}\overset{\mathrm{i.i.d.}}{\sim}\mathcal{Q}_0(z)$, and apply~\Cref{eq:progressivelyReducingKLDivergence} recursively to $z_0$ as follows:
\begin{equation}\label{eq:realizationOfTransportationMap}
    z_{t+1} =\boldsymbol{T}(z_{t}) = z_t + \varepsilon [\nabla\log{\mathcal{P}(z_t|\mathcal{D})} - \nabla\log{\mathcal{Q}(z_t)}],
\end{equation}
we can finally reduce the KL divergence represented between $\mathcal{Q}_t(z)$ that represented by $\{z_{t,i}\in\mathbb{R}^{\mathrm{D}_{\rm{LV}}}\}|_{i=0}^{\ell}$ closed enough to the target distribution $\mathcal{P}(z\vert\mathcal{D})$.
\fi
\subsection{Proximal Gradient Recursion within RKHS}\label{subsec:proximalImplementationRKHS}
Even though~\Cref{subsec:WassProximalOperator} has proposed the transportation map that reduces the KL divergence greedily, the implementation remains challenging. Specifically,~\Cref{eq:realizationOfTransportationMap} requires the estimation of $\nabla{\log{\mathcal{Q}_t(z)}}$, which is intractable when we represent $\mathcal{Q}_t(z)$ by a group of $\mathrm{D}_{\rm{LV}}$-dimensional particles $\{z_{t,i}\in\mathbb{R}^{\mathrm{D}_{\rm{LV}}}\}|_{i=0}^{\ell}$. To approximate the intractable term $\nabla \log \mathcal{Q}_t(z)$, we introduce a ``test function'' $h(z)$ that minimizes the weighted squared error:
\begin{equation}
\mathop{\arg\min}_{h} ~~ \frac{1}{2}  \int{\mathcal{Q}_t(z) \Vert \nabla\log{\mathcal{Q}_t(z)} - h(z) \Vert_2^2 \mathrm{d}z},
%  \frac{1}{2}  \int{\mathcal{Q}(z) \Vert \nabla\log{\mathcal{Q}_t(z)} - h(z) \Vert_2^2 \mathrm{d}z} = 0,
\end{equation}
where $\mathcal{Q}_t(z)$ serves as a weighting function. This choice of $h(z)$ allows us to focus on accurately approximating $\nabla \log \mathcal{Q}_t(z)$ in regions where $\mathcal{Q}_t(z)$ has high probability mass. % By using this approximation, we can rewrite Equation (X) as follows:
%On this basis, to approximate the intractable term $\nabla\log{\mathcal{Q}_t(z)}$, we introduce the approximation function $h(z)$ as follows:
% \begin{equation}
%     \mathop{\arg\min}_{h(z)}~~\int{\mathcal{Q}_t(z)\Vert h(z) - \nabla\log{\mathcal{Q}_t(z)} \Vert_2^2\mathrm{d}z} .
% \end{equation}
Building on this, we impose the following assumptions, which can be satisfied by appropriately choosing the function class for $h(z)$ and specifying $\mathcal{Q}(z)$ within the Wasserstein space:
\begin{enumerate}[leftmargin=*]
    \item{Test function $h(z)$ is compactly supported on $\mathbb{R}^{\mathrm{D}_{\rm{LV}}}$: there exists a radius $\mathscr{R} > 0$ such that $h(z) = 0$ for all $\|z\|  > \mathscr{R} $;}
\item{PDF $\mathcal{Q}(z)$ is bounded:  $\lim_{\|z\| \to \infty}\mathcal{Q}(z) = 0$.}
\end{enumerate}
We have the following objective for finding the optimal $h(z)$:
\begin{equation}\label{eq:unconstraintOptimizationProblem}
\begin{aligned}
       \mathop{\arg\min}_{h}~~\frac{1}{2}\int{\mathcal{Q}_t(z)\Vert h(z) - \nabla\log{\mathcal{Q}_t(z)} \Vert_2^2\mathrm{d}z} 
  %   = & \mathop{\arg\min}_{h(z)}~~\int{\mathcal{Q}_t(z) [\frac{1}{2}h^\top(z)h(z) - h^\top(z)\nabla\log{\mathcal{Q}_t(z)}] \mathrm{d}z} \\
       % \\
       \overset{\text{(i)}}{=}  \mathop{\arg\min}_{h}~~\int{\mathcal{Q}_t(z) [\frac{1}{2}h^\top(z)h(z) + \nabla \cdot h(z)] \mathrm{d}z} ,
\end{aligned}
\end{equation}
where `(i)' is based on the integration-by-parts. Specifically, it can be observed that:
\begin{equation}
\begin{aligned}
    \int [\nabla\cdot h(z)] \mathcal{Q}_t(z)\mathrm{d}z +  \int [h(z) ]^\top[\nabla\mathcal{Q}_t(z)]\mathrm{d}z  
   =  \int \nabla\cdot[\mathcal{Q}_t(z)h(z)]\mathrm{d}z.
\end{aligned}
\end{equation}
Using the Gauss divergence theorem, we get:
\begin{equation}\label{eq:divTheoremResult}
    \int{\nabla\cdot{h(z)\mathcal{Q}(z)}\mathrm{d}z}=\oint_{\partial z}{h(z)\mathcal{Q}(z)\cdot\vec{n}(z)\mathrm{d}S(z)}=0,
\end{equation}
where $\vec{n}(z)$ and $\mathrm{d}S(z)$ represent the outer normal vector and surface element, respectively, and the last equality is based on the assumptions 1) and 2) of $h(z)$ and $\mathcal{Q}(z)$, which gives the following result $\lim_{\Vert z \Vert\to\infty}h(z)\mathcal{Q}(z)=0$.

It is worth noting that directly minimizing $h(z)$ in \Cref{eq:unconstraintOptimizationProblem} over the Wasserstein space is generally ill-posed, since multiple candidate functions can satisfy the same objective. Nevertheless, an RKHS-based parameterization provides a practical and well-defined solution~\cite{haoWangCausalBalancing,lidebiased}, albeit at the cost of reduced expressiveness in high-dimensional settings~\cite{scholkopf2002learning,dongparticle,wang2024gad}. To obtain a well-posed and computationally tractable formulation, we introduce the RKHS regularization on the test function. Specifically, let the approximation function $h(z)$ be confined to the $\mathrm{D}_{\rm{LV}}$-dimensional RKHS $\mathcal{H}^{\mathrm{D}_{\rm{LV}}}$, i.e., $h(z) \in \mathcal{H}^{\mathrm{D}_{\rm{LV}}}$, where the corresponding kernel function $\mathcal{K}: \mathbb{R}^{\mathrm{D}_{\rm{LV}}} \to \mathbb{R}^{\mathrm{D}_{\rm{LV}}}$ satisfies the boundary condition $\lim_{\| z \| \to \infty} \mathcal{K}(z', z) = 0$. As such, we replace the $\mathcal{Q}_t(z)$-weighted term $\frac{1}{2}\mathbb{E}_{\mathcal{Q}_t(z)}[\|h(z)\|_2^2]$
by the penalty term $\frac{1}{2}\|h\|_{\mathcal{H}^{\mathrm{D}_{\rm LV}}}^2$, which controls the smoothness of $h$. Consequently, we have the following objective for $h(z)$:
% we can first reformulate the learning objective as follows using the fact that RKHS norm is independent of the data distribution $\mathcal{Q}_t(z)$:
\begin{equation}\label{eq:SimilarityobjKernelFunction}
   \mathop{\arg\min}_{h}~~ \frac{1}{2} \Vert h(z)\Vert_{\mathcal{H}^{\mathrm{D}_{\rm{LV}}}}^2 + \mathbb{E}_{\mathcal{Q}_t(z)}{[\nabla \cdot h(z)]}.
\end{equation}

Notably, for kernel function $\mathcal{K}(z',z)$, we can define its feature map $\xi:\mathbb{R}^{\mathrm{D}_{\rm{LV}}}:\to \mathcal{H}$, and decompose the kernel function as $\mathcal{K}(z',z)=\left\langle \xi(z'),\xi(z) \right\rangle _{\mathcal{H}^{\mathrm{D}}}$. Based on this, we can apply the following spectral decomposition (where $\Lambda_i$ and $\Xi_i:\mathbb{R}^{\mathrm{D}_{\rm{LV}}}\to\mathbb{R}$ are the eigen value and orthonormal basis, respectively) $\mathcal{K}(z',z) = \sum_{i=1}^{\infty}{\Lambda_i\Xi_i(z')\Xi_i(z)}$, to test function $h(z)\in\mathcal{H}^{\mathrm{D}_{\rm{LV}}}$ as $h(z) = \sum_{i=1}^{\infty}{\widehat{h}_i\sqrt{\Lambda_i}\Xi_i(z)}$,
% \begin{equation}\label{eq:kernelDecompositionEquation}
%     h(z) = \sum_{i=1}^{\infty}{\widehat{h}_i\sqrt{\Lambda_i}\Xi_i(z)},
% \end{equation}
where feature importance weight $\widehat{h}_i \in\mathbb{R}^{\mathrm{D}_{\rm{LV}}}$, and $\sum_{i=1}^{\infty}{\Vert \widehat{h}_i \Vert_2^2}\le\infty$. Consequently,~\Cref{eq:SimilarityobjKernelFunction} can be reformulated as follows:
\begin{equation}\label{eq:optimalPsiKernelFunction}
    \widehat{h}_i^* = - \sqrt{\Lambda_i}\mathbb{E}_{\mathcal{Q}_t(z')}\left[ \nabla_{z'}\Xi(z')  \right].
\end{equation}
The optimal test function can be given as follows:
\begin{equation}\label{eq:RKHSHeatEqautionResult}
\begin{aligned}
   h_{\text{RKHS}}(z) 
 % = & \sum_{i=1}^{\infty}{ \sqrt{\Lambda_i}\mathbb{E}_{\mathcal{Q}_t(z')}\left[ \nabla_{z'}\Xi(z') + \nabla_{z'}\log{\mathcal{P}(z'\vert x)\Xi(z')} \right]\sqrt{\Lambda_i}\Xi_i(z)}\\
= -\sum_{i=1}^{\infty}{\widehat{h}_i^*\sqrt{\Lambda_i}\Xi_i(z)} = - \mathbb{E}_{\mathcal{Q}_t(z')}[\nabla_{z'}\mathcal{K}(z',z) ].
\end{aligned}
\end{equation}

Plugging~\Cref{eq:RKHSHeatEqautionResult} into~\Cref{eq:realizationOfTransportationMap}, we obtain the following result to implement the optimization problem:
\begin{equation}\label{eq:theTransportationInRKHS}
      z_{t+1} = z_t + \varepsilon \{\nabla\log{\mathcal{P}(z_t|\mathcal{D})} +\mathbb{E}_{\mathcal{Q}_t(z')}[\nabla_{z'}\mathcal{K}(z',z) ]\},
\end{equation}
To fulfill the compact support requirement of the test function $h(z)$, we use the radial basis function (RBF) as the kernel function for model implementation:
\begin{equation}
    \mathcal{K}(z,z')  \coloneqq \exp(-\frac{\Vert z -z' \Vert_2^2}{2 }),
\end{equation}
where the score function $\nabla\log{\mathcal{P}(z|\mathcal{D})}$ can be reformulated as follows using the Bayesian rule:
\begin{equation}\label{eq:scoreFunctionDecomposition}
    \nabla\log{\mathcal{P}(z\vert\mathcal{D})} = \nabla \log{p_\theta(\mathcal{D}|z) + \nabla\log{\mathcal{P}(z)}},
\end{equation} 
the values for $z$ and $z'$ are identity, and the prime `$\prime$'  are designed to mark which variable the derivative is taken.
%In addition, 

On this basis, the following algorithm for NPLVM learning can be summarized in Algorithm~\ref{algo:kProxVIProcedure} . Since the derivation of the latent variable distribution inference algorithm is based on the \uline{k}ernelized \uline{prox}imal gradient descent-based approach, we termed our algorithm KProx algorithm.
\begin{algorithm}[htbp]% \label{algo:odeSimulationOptimalControl}
\caption{KProx Algorithm for $\mathcal{P}(z|x)$ Inference.}\label{algo:kProxVIProcedure}
\begin{algorithmic}[1]
\State \textbf{Input:} 
Latent Variable Model: $p_\theta(\mathcal{D}\vert z)$, prior distribution $\mathcal{P}(z)$, end time: $\mathrm{T}$, proximal operator coefficient: $\varepsilon$. % , learning rate for $\psi_\varphi(z)$ optimization: $\eta$, and iteration time for $\psi_\varphi(z)$ optimization: $\widehat{\mathrm{T}}$.
% Train Dataset: $\{x_{b}, y_{b}\}\vert_{b=1}^{\mathrm{N}_{\text{train}}}{\in\mathcal{D}_{\text{train}}}$, Validate Dataset: $\{x_{b}, y_{b}\}\vert_{b=1}^{\mathrm{N}_{\text{valid}}}{\in\mathcal{D}_{\text{valid}}}$, Test dataset: $\{x_{b}, y_{b}\}\vert_{b=1}^{\mathrm{N}_{\text{test}}}{\in\mathcal{D}_{\text{test}}}$.
\State $z_{i,t} \overset{\mathrm{i.i.d.}}{\sim} \mathcal{Q}_0(z)${\Comment{Initialize $\{z_{i,0}\}_{i=1}^{\ell}$}}
% \State \textbf{Parameter:} parameter for $\psi_\varphi(z)$: $\varphi$.
% \State $\varphi_0 \leftarrow \varphi$
\For{$t = 0$ \textbf{to} $\mathrm{T}-1$}
\State $\nabla\log{\mathcal{P}(z_{t+1}\vert \mathcal{D})}\leftarrow\text{Eq.}~\eqref{eq:scoreFunctionDecomposition}$\;
\State $z_{i,t+1}\leftarrow \text{Eq.}~\eqref{eq:theTransportationInRKHS}$
\EndFor

\State \textbf{Output:} $\{z_{i,\mathrm{T}}\}_{i=1}^{\ell}$\;

\end{algorithmic}
\end{algorithm}

Based on~\Cref{algo:kProxVIProcedure}, we have the following theorem to demonstrate the convergence speed of the KL divergence as the iteration time of the KProx algorithm increases:
\begin{theorem} \label{thm:ConvergenceControlEpsilon}
Suppose that $ \Vert v(z)\Vert\le \mathscr{A}  $ and $ \Vert \nabla v(z)\Vert\le \mathscr{B}  $. 
Let $\{ \mathcal{Q}_t(z)\}_{t=1}^{\mathrm{T}}$ denote the sequence of variational distributions generated by the KProx algorithm. Then, when $\varepsilon=\frac{1}{\sqrt{\mathrm{T}}} $, the following inequality holds:
\begin{equation}\label{eq:convergenceTheorem}
\begin{aligned} 
\lim_{\mathrm{T}\to\infty}\mathbb{D}_{\rm}[\mathcal{Q}_\mathrm{T}(z)\Vert\mathcal{P}(z\vert\mathcal{D})] = 0
 % & \dfrac{1}{{\mathrm{T}}}\sum_{t=1}^{\mathrm{T}}\mathbb{D}_{\rm{KL}}[\mathcal{Q}_t(z)\Vert \mathcal{P}(z|x)]
 % %   \le &  \frac{1}{\sqrt{\mathrm{T}}}\{ \mathbb{D}_{\rm{KL}}[\mathcal{Q}_{t+\varepsilon}(z)\Vert \mathcal{P}(z|x)]- \mathbb{D}_{\rm{KL}}[\mathcal{Q}_{t}(z)\Vert \mathcal{P}(z|x)]\} + \frac{2\mathcal{C}}{\sqrt{\mathrm{T}}}\\
 %    \leq
 %    \mathcal{O}(\frac{1}{\sqrt{\mathrm{T}}}).
\end{aligned}   
\end{equation}
\end{theorem}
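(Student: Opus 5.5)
The plan is to treat the KProx iteration as a forward-Euler discretization of the Wasserstein gradient flow of $F(\mathcal{Q})\coloneqq\mathbb{D}_{\rm KL}[\mathcal{Q}(z)\Vert\mathcal{P}(z\vert\mathcal{D})]$. We control the one-step decrease of $F$ through the expansion already used in \Cref{eq:klDivExpansionResult}, sum the per-step inequalities over $t=0,\dots,\mathrm{T}-1$, and let $\mathrm{T}\to\infty$ with $\varepsilon=1/\sqrt{\mathrm{T}}$.

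\textbf{Step 1: one-step descent inequality.} Write $\mathcal{Q}_{t+1}=(\mathrm{id}+\varepsilon v_t)_\#\mathcal{Q}_t$, where $v_t$ is the velocity field used by the algorithm. By the change-of-variables formula,
\[
\log\mathcal{Q}_{t+1}(z+\varepsilon v_t(z))=\log\mathcal{Q}_t(z)-\log\det(I+\varepsilon\nabla v_t(z)).
\]
Expanding $F(\mathcal{Q}_{t+1})-F(\mathcal{Q}_t)$ to second order gives
\[
F(\mathcal{Q}_{t+1})\le F(\mathcal{Q}_t)+\varepsilon\,\mathbb{E}_{\mathcal{Q}_t}\!\big[v_t^\top\nabla\tfrac{\delta F}{\delta\mathcal{Q}}\big]+\varepsilon^2\mathcal{C}.
\]
The constant $\mathcal{C}$ comes from the second-order terms. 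For the entropy part, $-\log\det(I+\varepsilon\nabla v)$ is controlled by $\mathscr{B}^2$ once $\varepsilon\mathscr{B}<1/2$, which holds for large $\mathrm{T}$. For the cross-entropy part, the curvature of $-\log\mathcal{P}(z\vert\mathcal{D})$ along $\varepsilon v$ is controlled by $\mathscr{A}^2$ times a local smoothness constant of the score. That smoothness constant is implicitly needed, and I would state it as part of the boundedness hypotheses.

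\textbf{Step 2: identify the first-order term.} With the RKHS velocity of \Cref{eq:theTransportationInRKHS}, the first-order term equals
\[
-\varepsilon\,\mathbb{E}_{\mathcal{Q}_t}\big[v_t^\top(\nabla\log\mathcal{Q}_t-\nabla\log\mathcal{P})\big].
\]
Integrating by parts using \Cref{eq:divTheoremResult}, this is $-\varepsilon\,\mathbb{S}_t$, where $\mathbb{S}_t$ is the squared kernel Stein discrepancy $\Vert\mathbb{E}_{\mathcal{Q}_t}[\nabla\log\mathcal{P}\,\mathcal{K}+\nabla\mathcal{K}]\Vert^2_{\mathcal{H}^{\mathrm{D}_{\rm LV}}}$. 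This quantity is nonnegative. In the idealized update \Cref{eq:progressivelyReducingKLDivergence} it is instead the relative Fisher information $\mathbb{E}_{\mathcal{Q}_t}\Vert\nabla\log(\mathcal{Q}_t/\mathcal{P})\Vert^2$. Hence
\[
F(\mathcal{Q}_{t+1})\le F(\mathcal{Q}_t)-\varepsilon\,\mathbb{S}_t+\varepsilon^2\mathcal{C}.
\]

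\textbf{Step 3: telescope.} Summing the Step 2 inequality over $t$ and using $F\ge0$ gives
\[
\frac{1}{\mathrm{T}}\sum_{t}\mathbb{S}_t\le\frac{F(\mathcal{Q}_0)}{\varepsilon\mathrm{T}}+\varepsilon\,\mathcal{C}=\frac{F(\mathcal{Q}_0)+\mathcal{C}}{\sqrt{\mathrm{T}}},
\]
so the averaged discrepancy is $\mathcal{O}(1/\sqrt{\mathrm{T}})$. The same inequality also shows that $F(\mathcal{Q}_t)$ is nonincreasing up to an $\varepsilon^2\mathcal{C}$ drift. The total drift accumulated over $\mathrm{T}$ steps is at most $\mathrm{T}\varepsilon^2\mathcal{C}=\mathcal{C}$, and it vanishes in the continuum limit because the iterates track the exact flow with error $\mathcal{O}(\varepsilon)$ on bounded time horizons.

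\textbf{Step 4 (main obstacle): from vanishing discrepancy to vanishing KL at the final iterate.} Step 3 only gives $\mathbb{S}_t\to0$ in an averaged sense, while the theorem asserts $F(\mathcal{Q}_\mathrm{T})\to0$. To close the gap, I would invoke a local log-Sobolev-type inequality in the spirit of (A3) of Lemma \ref{thm:klDivergenceBound}: local strong convexity of $-\log\mathcal{P}(z\vert\mathcal{D})$ near its support gives $F(\mathcal{Q})\le\frac{1}{2\kappa}\mathbb{S}(\mathcal{Q})$. For the kernel version, I additionally need the kernel Stein discrepancy to dominate the Fisher information on the bounded-velocity class implied by $\mathscr{A},\mathscr{B}$. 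Given this inequality, the recursion becomes
\[
F(\mathcal{Q}_{t+1})\le(1-2\kappa\varepsilon)F(\mathcal{Q}_t)+\varepsilon^2\mathcal{C}.
\]
Iterating yields $F(\mathcal{Q}_\mathrm{T})\le e^{-2\kappa\sqrt{\mathrm{T}}}F(\mathcal{Q}_0)+\frac{\varepsilon\,\mathcal{C}}{2\kappa}$, and both terms tend to $0$ as $\mathrm{T}\to\infty$. The delicate point is justifying this functional inequality, which is where asymptotic local convergence under mild assumptions enters. Failing a global version, I would use the monotonicity from Step 3 together with lower semicontinuity of KL to argue that the limit points are stationary. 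Stationary points satisfy $\mathbb{S}=0$, which forces $\mathcal{Q}=\mathcal{P}(z\vert\mathcal{D})$ for a characteristic RBF kernel, so the limit of $F$ is $0$.
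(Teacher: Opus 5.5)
There is a genuine gap, and it sits in Step 4. Your Steps 1--3 are essentially the paper's argument. The paper also expands $\mathbb{D}_{\rm KL}$ to first order along $z\mapsto z+\varepsilon v(z)$. It bounds the $\mathcal{O}(\varepsilon^2)$ remainder by $\mathcal{C}\varepsilon^2$ using $\mathscr{A}$, $\mathscr{B}$ and $\sup(-\nabla^2\log\mathcal{P}(z\vert\mathcal{D}))$, so you are right that a Hessian bound on the log-posterior is silently needed. It then telescopes with $\varepsilon=1/\sqrt{\mathrm{T}}$ to an $\mathcal{O}(1/\sqrt{\mathrm{T}})$ bound on the time-averaged dissipation.

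One correction to Step 2. The squared kernel Stein discrepancy arises for the SVGD velocity $\mathbb{E}_{\mathcal{Q}_t(z')}[\mathcal{K}(z',z)\nabla\log\mathcal{P}(z'\vert\mathcal{D})+\nabla_{z'}\mathcal{K}(z',z)]$. Eq.~\eqref{eq:theTransportationInRKHS} instead keeps the unsmoothed $\nabla\log\mathcal{P}(z\vert\mathcal{D})$ and kernelizes only the repulsive term. For that velocity, $\mathbb{E}_{\mathcal{Q}_t}[v_t^\top(\nabla\log\mathcal{P}-\nabla\log\mathcal{Q}_t)]$ is not an RKHS norm and has no sign. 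In fact $\mathcal{P}$ is not even stationary: for $\mathcal{P}=\mathcal{N}(0,1)$ in one dimension, the velocity at $\mathcal{Q}_t=\mathcal{P}$ is $-z(1-e^{-z^2/4}/(2\sqrt{2}))\neq0$. The paper avoids this only through its ``$\simeq$'' step, which treats $h_{\rm RKHS}$ as $\nabla\log\mathcal{Q}_t$. It therefore works with the relative Fisher information of the idealized update Eq.~\eqref{eq:progressivelyReducingKLDivergence}, and your argument has to do the same.

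Step 4 is also where you depart from the paper. The paper uses no functional inequality. It reads the averaged bound as $\nabla\log\mathcal{Q}_{\mathrm{T}}-\nabla\log\mathcal{P}\to0$, then argues that a log-ratio with zero gradient on the connected space $\mathbb{R}^{\mathrm{D}_{\rm LV}}$ is constant, so $\mathcal{Q}_{\mathrm{T}}\propto\mathcal{P}$. That passage, from a time-average to the last iterate and from an $L^2(\mathcal{Q}_t)$ bound to a pointwise identity, is informal, so it is not a fix you can borrow.

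Your contraction instead needs a global log-Sobolev inequality $\mathbb{D}_{\rm KL}[\mathcal{Q}\Vert\mathcal{P}]\le\frac{1}{2\kappa}\mathbb{E}_{\mathcal{Q}}\Vert\nabla\log(\mathcal{Q}/\mathcal{P})\Vert^2$. This is not among the hypotheses and is not implied by local strong convexity. It also does not follow from (A3), which is a curvature condition on the log-partition function $\mathcal{A}$ in natural-parameter space. Some such quantitative inequality is unavoidable, because the averaged Fisher-information bound alone cannot control the final KL. For a well-separated bimodal posterior, a mixture with the wrong weights has exponentially small relative Fisher information but $\Theta(1)$ KL. Granting an LSI, your recursion is correct for the idealized velocity. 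It then yields a genuine last-iterate rate $e^{-2\kappa\sqrt{\mathrm{T}}}F(\mathcal{Q}_0)+\mathcal{C}/(2\kappa\sqrt{\mathrm{T}})$, which is more than the paper's argument delivers.

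For the kernel version, the domination you require fails for the RBF kernel. Take $\mathcal{Q}=\mathcal{N}(m,\sigma^2I)$ with $\sigma\to0$. The KL and the Fisher information diverge, while the squared KSD tends to $\Vert\nabla\log\mathcal{P}(m\vert\mathcal{D})\Vert^2+\mathrm{D}_{\rm LV}$. The SVGD velocity stays uniformly bounded with bounded Jacobian, so the $\mathscr{A},\mathscr{B}$ restriction does not exclude this case.

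The fallback does not close the gap either:
\begin{itemize}
\item Lower semicontinuity gives only $F(\bar{\mathcal{Q}})\le\liminf F(\mathcal{Q}_{\mathrm{T}})$, which is the wrong direction.
\item The accumulated drift $\mathrm{T}\varepsilon^2\mathcal{C}=\mathcal{C}$ does not vanish, and the horizon $\mathrm{T}\varepsilon=\sqrt{\mathrm{T}}$ is unbounded, so bounded-horizon tracking of the exact flow does not apply.
\item The averaged bound makes $\mathbb{S}_t$ small only for some $t\le\mathrm{T}$, not for $t=\mathrm{T}$. Moreover, the trajectory itself changes with $\mathrm{T}$ because $\varepsilon=1/\sqrt{\mathrm{T}}$.
\end{itemize}
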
 
Due to the page limit, we mainly provide proof of sketch in the main content.
\begin{proof}
The KL divergence at $t+1$ has the following relationship with the KL divergence at $t$:
\begin{equation}\label{eq:expansionKPlus1Time}
\begin{aligned}
& \mathbb{D}_{\rm{KL}}[\mathcal{Q}_{t+1}(z)\Vert \mathcal{P}(z|\mathcal{D})]= \mathbb{D}_{\rm{KL}}[\mathcal{Q}_{t}(z)\Vert \mathcal{P}(z|\mathcal{D})] \\
 &  -\varepsilon \mathbb{E}_{\mathcal{Q}_t(z)}\{v^\top(z) \nabla\frac{\delta \mathbb{D}_{\rm{KL}}[\mathcal{Q}_t(z)\Vert \mathcal{P}(z\vert\mathcal{D})]}{\delta \mathcal{Q}_t(z)} \} + \mathcal{O}(\varepsilon^2)\\
 \le & \mathbb{D}_{\rm{KL}}[\mathcal{Q}_{t}(z)\Vert \mathcal{P}(z|\mathcal{D})] \\
& -\varepsilon \mathbb{E}_{\mathcal{Q}_t(z)}\{v^\top(z) \nabla\frac{\delta \mathbb{D}_{\rm{KL}}[\mathcal{Q}_t(z)\Vert \mathcal{P}(z\vert\mathcal{D})]}{\delta \mathcal{Q}_t(z)} \} + \mathcal{C}\varepsilon^2.
\end{aligned}
\end{equation}
where the last inequality is based on the fact that $ \Vert v(z)\Vert\le \mathscr{A}  $ and $ \Vert \nabla v(z)\Vert\le \mathscr{B}  $, which indicates that there exists a positive constant $\mathcal{C}$ that bound the $\mathcal{O}(\varepsilon^2)$.
% \begin{equation}\label{eq:inequalityResult}
% \begin{aligned}
%     &\mathbb{D}_{\rm{KL}}[\mathcal{Q}_{t}(z)\Vert \mathcal{P}(z|\mathcal{D})]  
%     \\
%     &-\varepsilon \mathbb{E}_{\mathcal{Q}_t(z)}\{v^\top(z) \nabla\frac{\delta \mathbb{D}_{\rm{KL}}[\mathcal{Q}_t(z)\Vert \mathcal{P}(z\vert\mathcal{D})]}{\delta \mathcal{Q}_t(z)} \} + \mathcal{O}(\varepsilon^2)\\
% \le & \mathbb{D}_{\rm{KL}}[\mathcal{Q}_{t}(z)\Vert \mathcal{P}(z|\mathcal{D})] \\
% & -\varepsilon \mathbb{E}_{\mathcal{Q}_t(z)}\{v^\top(z) \nabla\frac{\delta \mathbb{D}_{\rm{KL}}[\mathcal{Q}_t(z)\Vert \mathcal{P}(z\vert\mathcal{D})]}{\delta \mathcal{Q}_t(z)} \} + \mathcal{C}\varepsilon^2.
% \end{aligned}
% \end{equation}
In addition, since $h_{\text{RKHS}}(z)$ is the weak mode of convergence for $\nabla\log{\mathcal{Q}_t(z)}$, it can be observed that:
\begin{equation}\label{eq:innerProductInequality}
\begin{aligned}
  &  %\int{
 \mathbb{E}_{\mathcal{Q}_t(z)}\{[ \nabla\frac{\delta \mathbb{D}_{\rm{KL}}[\mathcal{Q}_t(z)\Vert \mathcal{P}(z\vert\mathcal{D})]}{\delta \mathcal{Q}_t(z)}]^\top [ \nabla\frac{\delta \mathbb{D}_{\rm{KL}}[\mathcal{Q}_t(z)\Vert \mathcal{P}(z\vert\mathcal{D})]}{\delta \mathcal{Q}_t(z)}]\}
  %\mathrm{d}z} 
  \\
&\simeq  \\
&  \mathbb{E}_{\mathcal{Q}_t(z)}\{[ \nabla\frac{\delta \mathbb{D}_{\rm{KL}}[\mathcal{Q}_t(z)\Vert \mathcal{P}(z\vert\mathcal{D})]}{\delta \mathcal{Q}_t(z)}]^\top [ \nabla\log{\mathcal{P}(z\vert\mathcal{D})} - h_{\text{RKHS}}(z)]\}
%\mathrm{d}z}
.
\end{aligned}
\end{equation}
% $v^\top\nabla\frac{\delta \mathbb{D}_{\rm{KL}}[\mathcal{Q}_t(z)\Vert \mathcal{P}(z\vert\mathcal{D})]}{\delta \mathcal{Q}_t(z)}$ 
% since we restrict the $h(z)$ within RKHS, we get the following inequality from the perspective of inner product similarity:
% \begin{equation}\label{eq:innerProductInequality}
% \begin{aligned}
%   &  [ \nabla\frac{\delta \mathbb{D}_{\rm{KL}}[\mathcal{Q}_t(z)\Vert \mathcal{P}(z\vert\mathcal{D})]}{\delta \mathcal{Q}_t(z)}]^\top [ \nabla\frac{\delta \mathbb{D}_{\rm{KL}}[\mathcal{Q}_t(z)\Vert \mathcal{P}(z\vert\mathcal{D})]}{\delta \mathcal{Q}_t(z)}] \\
% \ge  & [ \nabla\frac{\delta \mathbb{D}_{\rm{KL}}[\mathcal{Q}_t(z)\Vert \mathcal{P}(z\vert\mathcal{D})]}{\delta \mathcal{Q}_t(z)}]^\top [ \nabla\log{\mathcal{P}(z\vert\mathcal{D})} - h_{\text{RKHS}}(z)].
% \end{aligned}
% \end{equation}
Plugging~\Cref{eq:innerProductInequality} into~\Cref{eq:expansionKPlus1Time}, the following result can be obtained:
\begin{equation}\label{eq:inequalityResult2}
\begin{aligned}
    &\mathbb{D}_{\rm{KL}}[\mathcal{Q}_{t+1}(z)\Vert \mathcal{P}(z|\mathcal{D})] \le  \mathbb{D}_{\rm{KL}}[\mathcal{Q}_{t}(z)\Vert \mathcal{P}(z|\mathcal{D})]   
  %  &-\varepsilon \mathbb{E}_{\mathcal{Q}_t(z)}\{v^\top(z) \nabla\frac{\delta \mathbb{D}_{\rm{KL}}[\mathcal{Q}_t(z)\Vert \mathcal{P}(z\vert\mathcal{D})]}{\delta \mathcal{Q}_t(z)} \} + \mathcal{C}\varepsilon^2\\
 -\varepsilon \mathbb{E}_{\mathcal{Q}_t(z)}\{\Vert  \nabla\frac{\delta \mathbb{D}_{\rm{KL}}[\mathcal{Q}_t(z)\Vert \mathcal{P}(z\vert\mathcal{D})]}{\delta \mathcal{Q}_t(z)} \Vert_2^2 \} + \mathcal{C}\varepsilon^2.
\end{aligned}
\end{equation}
Cascading~\Cref{eq:inequalityResult2} from $t=1$ to $\mathrm{T}$, we get:
\begin{equation}
\begin{aligned}
   & \mathbb{D}_{\rm{KL}}[\mathcal{Q}_{\mathrm{T}}(z)\Vert \mathcal{P}(z|\mathcal{D})]   \le  \mathbb{D}_{\rm{KL}}[\mathcal{Q}_{0}(z)\Vert \mathcal{P}(z|\mathcal{D})]  -\varepsilon \sum_{t=1}^{\mathrm{T}}{\mathbb{E}_{\mathcal{Q}_t(z)}\{\Vert  \nabla\frac{\delta \mathbb{D}_{\rm{KL}}[\mathcal{Q}_t(z)\Vert \mathcal{P}(z\vert\mathcal{D})]}{\delta \mathcal{Q}_t(z)} \Vert_2^2 \} } + 2\mathrm{T}\mathcal{C}\varepsilon^2\\
    \Rightarrow & \frac{1}{\mathrm{T}}\sum_{t=1}^{\mathrm{T}}{\mathbb{E}_{\mathcal{Q}_t(z)}\{\Vert  \nabla\frac{\delta \mathbb{D}_{\rm{KL}}[\mathcal{Q}_t(z)\Vert \mathcal{P}(z\vert\mathcal{D})]}{\delta \mathcal{Q}_t(z)} \Vert_2^2 \} }   \le \underbrace{\frac{\mathbb{D}_{\rm{KL}}[\mathcal{Q}_0(z)\Vert \mathcal{P}(z\vert\mathcal{D})] - \mathbb{D}_{\rm{KL}}[\mathcal{Q}_{\mathrm{T}}(z)\Vert \mathcal{P}(z\vert\mathcal{D})]}{\sqrt{\mathrm{T}}} + \frac{2\mathcal{C}}{\sqrt{\mathrm{T}}}}_{\mathcal{O}(\frac{1}{\sqrt{\mathrm{T
    }}})}. %\\
    %& .
\end{aligned}
\end{equation}
This phenomenon reflects that:
\begin{equation}
    \lim_{\mathrm{T}\to\infty}  \nabla \log{\mathcal{Q}_{\mathrm{T}}(z)} -\nabla\log{\mathcal{P}(z\vert\mathcal{D})} = 0,
\end{equation}
Integrating both side with $z$, we get:
\begin{equation}
    \mathcal{Q}_{\mathrm{T}}(z) = \mathcal{C}\mathcal{P}(z\vert\mathcal{D})~~\text{when}~~\mathrm{T}\to\infty \Rightarrow \lim_{\mathrm{T}\to\infty}\mathcal{Q}_{\mathrm{T}}(z)\propto\mathcal{P}(z\vert\mathcal{D}).
\end{equation}
Since $\lim_{\mathrm{T}\to\infty}\mathcal{Q}_{\mathrm{T}}(z)\propto\mathcal{P}(z\vert\mathcal{D})$, we arrive at the desired result defined by~\Cref{eq:convergenceTheorem}.
\end{proof}
Based on this theorem, the following remark can be given:
\begin{remark}
In contrast to conventional variational inference methods, which constrain the variational distribution to a restricted, finite-dimensional space and are thus subject to an approximation error as quantified by Lemma~\ref{thm:klDivergenceBound}, the proposed KProx algorithm offers the potential to mitigate this error. This reduction in approximation error is achievable through judicious selection of the proximal operator coefficient, allowing for a more accurate representation of the true posterior.
\end{remark}
\begin{remark}
\Cref{thm:ConvergenceControlEpsilon} provides a guideline for choosing $\varepsilon$. Specifically, to ensure the KL-divergence term decreases monotonically and to guarantee convergence, we recommend setting $\varepsilon = \frac{1}{\sqrt{\mathrm{T}}}$.
\end{remark}

\subsection{Parameter Learning for Networks }\label{subsec:ModelTrainingStrategy}
Even though~\Cref{subsec:proximalImplementationRKHS} provides the inference procedure for the distribution of latent variable, the parameter learning procedure for the generative network $\theta$ and inference network $\varphi$ has not been derived yet. Hence, the rest of this subsection will focus on deriving the learning procedure of these two parameters $\theta$ and $\varphi$.

\subsubsection{Generative Network Parameter Learning} To learn the generative network parameter, we define the following result based on $\{z_{i,\mathrm{T}}\}_{i=1}^{\ell}$ at time $\mathrm{T}$. Specifically, the objective function for generative network learning can be given as:
\begin{equation}\label{eq:objectiveForGenerativeNetwork}
    \mathop{\arg\max}_{\theta}~~\mathbb{E}_{\mathcal{Q}_{\mathrm{T}}(z)}[\log{p_\theta(\mathcal{D}|z)}].
\end{equation}
Notably, $\mathcal{Q}_{\mathrm{T}}(z)$ is represented by a group of particles $\{z_{i,\mathrm{T}}\}_{i=1}^{\ell}$. Thus, the learning objective of~\Cref{eq:objectiveForGenerativeNetwork} can be given as follows using the selective property of the Dirac delta measure:
\begin{equation}\label{eq:decoderParameterLearning}\mathop{\arg\max}_{\theta}~\mathbb{E}_{\mathcal{Q}_{\mathrm{T}}(z)}[\log{p_\theta(\mathcal{D}|z)}]\approx  \mathop{\arg\max}_{\theta}~ \frac{1}{\ell}\sum_{i=1}^{\ell}{\log{p_\theta}(\mathcal{D}|z_{i,\mathrm{T}})}.
\end{equation}
Applying gradient descent with learning rate $\varepsilon$ to the right-hand-side, the parameter learning procedure for $\theta$ can be therefore obtained as follows:
\begin{equation}
    \theta =\theta -\varepsilon  \frac{1}{\ell}\sum_{i=1}^{\ell}{\nabla\log{p_\theta}(\mathcal{D}|z_{i,\mathrm{T}})}
\end{equation}
\subsubsection{Inference Network Parameter Learning}
Notably, the parameter learning procedure for the inference network learning remains great challenges since we are merely available a group of particles $\{z_{i,\mathrm{T}}\}_{i=1}^{\ell}$ that represents the distribution $\mathcal{Q}_{\mathrm{T}}(z)$. Denote the predicted latent variable as $\{\widehat{z}_i\}_{i=1}^{\ell}$ Based on~\Cref{subsec:WassDistanceAndWassSpace}, we can introduce the 2-Wasserstein distance as the discrepancy metric to measure the differences between $q_\varphi(z|x)$ and $\mathcal{Q}_{\mathrm{T}}(z)$. On this basis, the loss function for inference network training can be given as follows:
\begin{equation}\label{eq:learningObjectiveInfNetwork}
\begin{aligned}
   \mathop{\arg\min}_{\varphi}~~ \mathcal{W}_2^2(q_\varphi(z|x),\mathcal{Q}_{\mathrm{T}}(z)) = \inf_{\pi \geq 0}~\sum_{i=1}^\ell \sum_{j=1}^\ell \pi_{ij} \|z_{i,\mathrm{T}} - \widehat{z}_j\|_2^2 \quad 
\mathrm{s.t.}  \sum_{j=1}^\ell \pi_{ij} = \frac{1}{\ell} ;\sum_{i=1}^\ell \pi_{ij} = \frac{1}{\ell}
.%\begin{cases} 
%
%\end{cases}
\end{aligned}
\end{equation}
In particular, applying the gradient descent-like neural network update procedure to~\Cref{eq:learningObjectiveInfNetwork} is difficult. Specifically, the optimization of inference network $q_\varphi(z|x)$ requires the gradient as follows:  
\begin{equation}\label{eq:wassDistanceGradResult}
    \nabla_\varphi \mathcal{W}_2^2(q_\varphi(z|x),\mathcal{Q}_{\mathrm{T}}(z)) =[\frac{ \partial  \mathcal{W}_2^2(q_\varphi(z|x),\mathcal{Q}_{\mathrm{T}}(z))}{\partial \widehat{z}}]^\top[ \frac{ \partial \widehat{z}}{\partial \varphi}],
\end{equation}
where $\frac{ \partial  \mathcal{W}_2^2(q_\varphi(z|x),\mathcal{Q}_{\mathrm{T}}(z))}{\partial \widehat{z}}$ is intractable due to the existence of the infimum operator ``\texttt{inf}''. To this end, we should solve the following optimal transportation problem to obtain the expression to facilitate the gradient backpropagation process. 
\begin{equation}
\begin{aligned}
     \mathscr{L}(\pi, \upmu, \upnu; \{z_{i,\mathrm{T}}\}, \{\widehat{z}_j\}) 
    % = &  \sum_{i=1}^\ell \sum_{j=1}^\ell \pi_{ij} \|z_{i,\mathrm{T}} - \widehat{z}_j\|_2^2 - \sum_{i=1}^\ell \upmu_i [\sum_{j=1}^\ell \pi_{ij} - \frac{1}{\ell}] \\
    % &- \sum_{j=1}^\ell \upnu_j [\sum_{i=1}^\ell \pi_{ij} - \frac{1}{\ell}] \\
=  \sum_{i=1}^\ell \sum_{j=1}^\ell \pi_{ij} (\|z_{i,\mathrm{T}} - \widehat{z}_j\|_2^2 - \upmu_i - \upnu_j) + \sum_{i=1}^\ell \frac{\upmu_i}{\ell} + \sum_{j=1}^\ell \frac{\upnu_j}{\ell},
\end{aligned}
\end{equation}
where $\{\upmu_i\}_{i=1}^{\ell}$ and $\{\upnu_j\}_{j=1}^{\ell}$ are Lagrange multipliers to handle the equality constraints.

According to the envelope theorem~\cite{border2015miscellaneous}, the gradient of the value function $\mathcal{W}_2^2(q_\varphi(z|x),\mathcal{Q}_{\mathrm{T}}(z))$ with respect to $\widehat{z}_j$ equals the partial derivative of the Lagrangian with respect to $\widehat{z}_j$, evaluated at the optimal solution:
\begin{equation}
    \frac{\partial \mathcal{W}_2^2(q_\varphi(z|x),\mathcal{Q}_{\mathrm{T}}(z)) }{\partial \widehat{z}_j} = \frac{\partial \mathscr{L}}{\partial \widehat{z}_j}\bigg|_{(\pi^*, \upmu^*, \upnu^*)}.
\end{equation}
Note that, the Lagrangian function, $\widehat{z}_j$ only appears in the cost terms $\|z_{i,\mathrm{T}} - \widehat{z}_j\|_2^2$. Thus, for fixed $j$, we have:
%, and only when the second index equals $j$. For fixed $j$, we have:
\begin{equation}\frac{\partial}{\partial \widehat{z}_j} \|z_{i,\mathrm{T}} - \widehat{z}_j\|_2^2 = \frac{\partial}{\partial \widehat{z}_j} \sum_{k=1}^{\mathrm{D}_{\rm{LV}}} (z_{i,\mathrm{T}}^{k} - \widehat{z}_j^{k})^2 = -2(z_{i,\mathrm{T}} - \widehat{z}_j)\end{equation}
In other words:
\begin{equation}
\frac{\partial \mathscr{L}}{\partial \widehat{z}_j} = \sum_{i=1}^\ell \pi_{ij} \cdot \frac{\partial}{\partial \widehat{z}_j} \|z_{i,\mathrm{T}} - \widehat{z}_j\|_2^2 = -2\sum_{i=1}^\ell \pi_{ij}(z_{i,\mathrm{T}} - \widehat{z}_j).
\end{equation}
Consequently, once we get the optimal transportation plan, we can directly obtain the gradient with-respect-to $\widehat{z}$ and conduct backpropagation easily.

Based on this, the key to obtaining the optimal transportation map is the key to conducting the training of the inference network. To this end, we consider using the Sinkhorn-Knopp iteration~\cite{cuturi2013sinkhorn}, where the entropy term about the transportation plan is selected as the proximal operator for the optimization problem. Specifically, we have the following optimization problem:
\begin{equation}
\begin{aligned}
  \mathcal{W}_{\text{Sink}}^2(z, \widehat{z}) = \min_{\pi \in \Pi(z, \widehat{z})} \sum_{i=1}^\ell \sum_{j=1}^\ell  \pi_{ij} \|z_{i,\mathrm{T}} - \widehat{z}_j\|_2^2 
  - \upepsilon  \sum_{i,j} \pi_{ij}(\log \pi_{ij} - 1),
\end{aligned}
\end{equation}
and the Lagrangian function can be obtained as follows:
\begin{equation}
\begin{aligned}
    \mathscr{L}_{\text{Sink}} =& \sum_{i=1}^\ell \sum_{j=1}^\ell \pi_{ij} (\|z_{i,\mathrm{T}} - \widehat{z}_j\|_2^2  
     + \upepsilon(\log \pi_{ij} - 1) - \upmu_i - \upnu_j) + \sum_{i=1}^\ell \frac{\upmu_i}{\ell} + \sum_{j=1}^\ell \frac{\upnu_j}{\ell}
\end{aligned}
\end{equation}
Taking the derivative with-respect-to $\pi_{ij}$ and setting the  derivative to zero, we get the following result:
\begin{equation}
  \frac{\partial  \mathscr{L}_{\text{Sink}}}{\partial \pi_{ij}} = \|z_{i,\mathrm{T}} - \widehat{z}_j\|_2^2 + \upepsilon \log \pi_{ij} - \upmu_i - \upnu_j = 0.
\end{equation}
Thus, the optimal coupling $\pi^*$ satisfies the following structure:
\begin{equation}\label{eq:sinkhornIterationResult}
\pi_{ij}^* = \exp(\frac{\upmu_i + \upnu_j - \|z_{i,\mathrm{T}} - \hat{z}_j\|_2^2}{\upepsilon}) = \tilde{\upmu}_i \tilde{\upnu}_j \mathscr{K}_{i,j}
\end{equation}
where we define $\tilde{\upmu}_i \coloneqq \exp(\dfrac{\upmu_i}{\upepsilon})$, $\tilde{\upnu}_j \coloneqq \exp(\dfrac{\upnu_j}{\upepsilon})$, and $\mathscr{K}_{i,j} \coloneqq \exp(-\dfrac{\|z_{i,\mathrm{T}} - \hat{z}_j\|_2^2}{\upepsilon})$. Consequently, based on~\Cref{eq:sinkhornIterationResult}, the overall iteration process for the Sinkhorn iteration to obtain the optimal coupling $\pi^*$ is summarized in Algorithm~\ref{algo:sinkhorn}.

\begin{algorithm}[htbp]
\caption{Sinkhorn-Knopp Algorithm}\label{algo:sinkhorn}
\begin{algorithmic}[1]
\State \textbf{Input:} 
Samples: $\{z_{i,\mathrm{T}}\}_{i=1}^\ell$ and $\{\hat{z}_j\}_{j=1}^\ell$, regularization parameter: $\upepsilon$, and iteration time: $\mathrm{T}$.

\State Initialize $\tilde{\upmu}_i^{(0)} \leftarrow  \frac{1}{\ell},\forall i = 1, \ldots, \ell$ %\Comment{Initialize row scaling}
\State Initialize $\tilde{\upnu}_j^{(0)} \leftarrow  \frac{1}{n},\forall j = 1, \ldots, \ell$ %\Comment{Initialize column scaling}
\State  $\mathscr{K}_{ij} \leftarrow \exp(-\frac{\|z_i - \hat{z}_j\|_2^2}{\upepsilon})\forall i=1,\ldots,\ell;j=1,\ldots,\ell.$

\For{$t = 0$ \textbf{to} $\mathrm{T}-1$}
  %  \State \textbf{// Update row scaling}
    \For{$i = 1$ \textbf{to} $\ell$}
        \State $\tilde{\upmu}_i^{(t+1)} \leftarrow \frac{1}{\ell\times \sum_{j=1}^\ell \tilde{\upnu}_j^{(t)} \mathscr{K}_{ij}}$
    \EndFor
    
   % \State \textbf{// Update column scaling}
    \For{$j = 1$ \textbf{to} $\ell$}
        \State $\tilde{\upnu}_j^{(t+1)} \leftarrow \frac{1}{\ell\times\sum_{i=1}^m \tilde{\upmu}_i^{(t+1)} \mathscr{K}_{ij}}$
    \EndFor
   % \State \textbf{// Check convergence}
    % \If{$\|\tilde{\upmu}^{(t+1)} - \tilde{\upmu}^{(t)}\|_\infty < \text{tol}$ and $\|\tilde{\upnu}^{(t+1)} - \tilde{\upnu}^{(t)}\|_\infty < \text{tol}$}
    %     \State \textbf{break}
   % \EndIf
\EndFor

\State Compute the optimal transport plan: $\pi_{ij}^* \leftarrow \tilde{\upmu}_i^{(\mathrm{T})} \tilde{\upnu}_j^{(\mathrm{T})} \mathscr{K}_{ij}$.

\State \textbf{Output:} Optimal transport matrix $\pi^* \in \mathbb{R}^{\ell \times \ell}$

\end{algorithmic}
\end{algorithm}

\subsection{Overall Workflow}

The overall workflow of the proposed model can be summarized in~\Cref{algo:algoTraining}, and the corresponding illustration is summarized in~\Cref{fig:modelIllustrationResult}. For simplicity, the prior distribution $\mathcal{P}(z)$ is set as the univariate Gaussian distribution $\mathcal{N}(0,I)$. Since the training process of the NPLVM relies heavily on the KProx algorithm, we name our NPLVM `KProxNPLVM'. From the upper part of~\Cref{fig:modelIllustrationResult}, it can be observed that the training of KProxNPLVM can be divided into two key steps: namely the training of the decoder $p_\theta(\mathcal{D}|z)$ and the training of the encoder $q_\varphi(z|x)$, as we demonstrated in Line 4 and Line 16 of~\Cref{algo:algoTraining}.

\begin{algorithm}[!h]
\caption{Training Algorithm of KProxNPLVM}\label{algo:algoTraining}
\begin{algorithmic}[1]
\State \textbf{Input:} 
Train, validate, and test data: $\{x_{m}, y_{m}\}_{m\in\mathcal{D}_{\text{train}}}$, $\{x_{\upnu}, y_{\upnu}\}_{v\in\mathcal{D}_{\text{valid}}}$, $\{x_{n}, y_{n}\}_{n\in\mathcal{D}_{\text{test}}}$, prior distribution $\mathcal{P}(z)\sim\mathcal{N}(0, I)$.

\State \textbf{Hyperparameters:} 
Batch size: $\mathcal{B}$, inference network learning rate: $\eta$, Epoch: $\mathcal{E}_{\text{generative}}$/$\mathcal{E}_{\text{inference}}$, particle number $\ell$, iteration time $\mathrm{T}$, proximal gradient descent coefficient: $\varepsilon$, and learning rate for inference network; $\eta$.

\State \textbf{Training:}
\State Initialize $z$ by $z\sim\mathcal{N}(0, I)$\Comment{\textbf{Step 1}: $p_\theta(\mathcal{D}\vert z)$ Training}
\For{epoch $= 1$ to $\mathcal{E}_{\text{generative}}$}
    % \State \textbf{// Inference of Latent Variable}
    %\State Set $\tau=1$
    % \For{$\tau = 1$ to $\mathrm{T}$}
       
    % \EndFor
     \State $\{z_{\mathrm{T},i}\}_{i=1}^{\ell} \leftarrow \text{Algorithm~\ref{algo:kProxVIProcedure}}$ \Comment{Latent Variable Inference}
    \State $\theta\leftarrow \text{Eq.}~\eqref{eq:decoderParameterLearning}$
\EndFor

% \State \textbf{// Obtain the Dataset for Inference Network $q_\varphi(z|x)$ Training}
\State $ \mathcal{D}_{\text{inference}} \leftarrow \varnothing$
\For{$(x_m,y_m)\in\mathcal{D}_{\text{train}}$}
%    \For{epoch $= 1$ to $\mathcal{E}_{\text{generative}}$}
          \State $\{z_{\mathrm{T},i}\}_{i=1}^{\ell} \leftarrow \text{Algorithm~\ref{algo:kProxVIProcedure}}$
        \State $\mathcal{D}_{\text{inference}}\leftarrow \mathcal{D}_{\text{inference}}\cup\{(z_m, x_m)\}$
   % \EndFor
\EndFor

% \State \textbf{// Training of $q_\varphi(z|x)$}
\State Set epoch $= 1$\Comment{\textbf{Step 2}: $q_\varphi(z|x)$ Training}
\For{epoch $= 1$ to $\mathcal{E}_{\text{inference}}$}
    \State Sample a minibatch $\mathcal{D}_{\text{minibatch}}\subset\mathcal{D}_{\text{inference}}$
    \State $\hat{z}\leftarrow q_\varphi(z|x)$
    % \State $ \mathcal{L}(z, \hat{z}) \leftarrow \dfrac{1}{\mathrm{N}_{\mathcal{B}}}\sum_{l=1}^{\mathrm{N}_{\mathcal{B}}}{\Vert \hat{z} - z_l\Vert^2}$
    \State $\pi_{i,j}^* \leftarrow \text{Eq.~\eqref{eq:sinkhornIterationResult}}$
   
    \State $\nabla_\varphi \mathcal{W}_2^2(q_\varphi(z|x),\mathcal{Q}_{\mathrm{T}}(z))\leftarrow\text{Algorithm.~\ref{algo:sinkhorn}}$
     \State $\varphi\leftarrow \varphi-\eta\times\nabla_\varphi \mathcal{W}_2^2(q_\varphi(z|x),\mathcal{Q}_{\mathrm{T}}(z))$
    \State Save the best estimated model $\hat{\varphi},\hat{\theta} = \varphi_{\text{best}},\theta_{\text{best}}$ on the $\mathcal{D}_{\text{valid}}$ with $\min \sum_{b=1}^{\mathrm{N}_{\text{valid}}}{(\hat{y}_b-y_b)^2}$
\EndFor
\end{algorithmic}
\end{algorithm}

Specifically, the training procedure of the decoder $p_\theta(\mathcal{D}|z)$ (Step 1 in \Cref{fig:modelIllustrationResult}) involves inferring the latent variable $z$ given the observed data $\mathcal{D}$ using the KProx algorithm (Algorithm~\ref{algo:kProxVIProcedure}). As depicted on the right side of \Cref{fig:modelIllustrationResult}, this step iteratively refines the variational distribution $\mathcal{Q}_t(z)$ to approximate the true posterior $p(z|\mathcal{D})$. The update rule for $\mathcal{Q}_t(z)$ is governed by the proximal gradient descent within the Wasserstein space, visualized as the "velocity field" guiding the distribution towards regions of higher posterior probability. The parameters $\theta$ of the decoder are then updated based on the inferred latent variables $z_{\mathrm{T},i}$, as shown in Eq.~\eqref{eq:decoderParameterLearning}.

Subsequently, the training of the encoder $q_\varphi(z|x)$ (Step 2 in \Cref{fig:modelIllustrationResult}) aims to learn a mapping from the observed data $x$ to the latent space $z$. As shown in the left side of \Cref{fig:modelIllustrationResult}, this is achieved by minimizing the Wasserstein-2 distance between the encoder's output $q_\varphi(z|x)$ and the approximated posterior distribution $\mathcal{Q}_{\mathrm{T}}(z)$ obtained from the decoder training step. The Sinkhorn algorithm (Algorithm.~\ref{algo:sinkhorn}) is employed to efficiently compute the gradient of the Wasserstein-2 distance, which is then used to update the encoder parameters $\varphi$. On this basis, during the model testing stage, the encoder and decoder are connected in series to predict the label $y$. Given a new input $x$, the encoder $q_\varphi(z|x)$ first infers the latent variable $z$, which is then fed into the decoder $p_\theta(\mathcal{D}|z)$ to generate the predicted output $\hat{y}$. 

\begin{figure}[!h]
    \centering
    \includegraphics[width=0.7\textwidth]{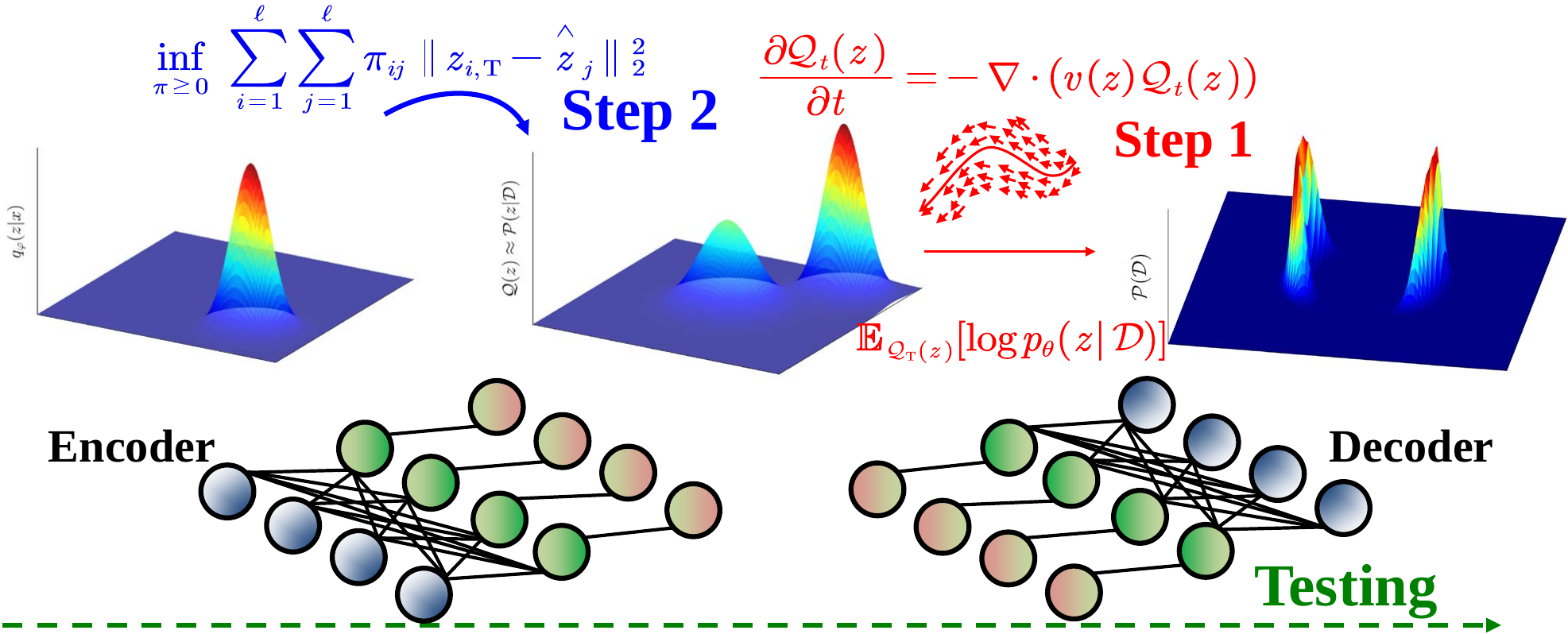}
   % \vspace{-0.5cm}
    \caption{The overall workflow of KProxNPLVM.}\label{fig:modelIllustrationResult}
   %\vspace{-0.3cm}
\end{figure}

\section{Experimental Results}\label{sec:experResults}
In this section, the following questions are investigated empirically to demonstrate the efficacy of the proposed KProx algorithm and KProxNPLVM:
\\\textbf{RQ1: }Can the KProx algorithm accurately approximate the posterior distribution?
\\\textbf{RQ2:} What is the performance of KProxNPLVM in the industrial soft sensor task? 
\\\textbf{RQ3:} How does the performance of KProxNPLVM vary with changes in hyperparameters?
\\\textbf{RQ4:} What factors contribute to the impressive performance of KProxNPLVM?
\\\textbf{RQ5:} Does the training process of KProxNPLVM converge?

% \\\textbf{RQ6:} What's the computational time of KProxNPLVM and other baseline models?

\subsection{Posterior Approximation Trajectory Visualization}
In this subsection, we address \textbf{RQ1}: ``Does the KProx algorithm take effect?" To answer this, we conduct a qualitative experiment visualizing the evolution trajectory of the probability density function (PDF). We initialize the approximate distribution as $\mathcal{Q}_0(z) = \mathcal{N}(0,1)$ (normal distribution) or $\mathcal{Q}_0(z) = \mathcal{U}(-0.5,0.5)$ (uniform distribution), and the target posterior distribution is defined as $\mathcal{P}(z|x) \propto \frac{1}{2}\mathcal{N}(-2, 0.5^2) + \frac{1}{2}\mathcal{N}(2, 0.5^2)$. The PDF evolution trajectories are illustrated in \Cref{subfig:densityEvolutionResults,subfig:unidensityEvolutionResults}. Based on this, the 2-Wasserstein distance  $\mathcal{W}_2(\mathcal{Q}_t(z), \mathcal{P}(z|\mathcal{D}))$ are illustrated in \Cref{subfig:WassDistanceDifferences,subfig:uniWassDistanceDifferences}.
\begin{figure}[htbp]
 %  \vspace{-0.3cm}
    \centering
    % figures\cut_sampled_results.pdf
     \subfigure[Trajectory of $\mathcal{Q}_t(z)$ along $t$, where $\mathcal{Q}_{0}(z)= \mathcal{N}(0,1)$.]{\includegraphics[width=0.24\linewidth]{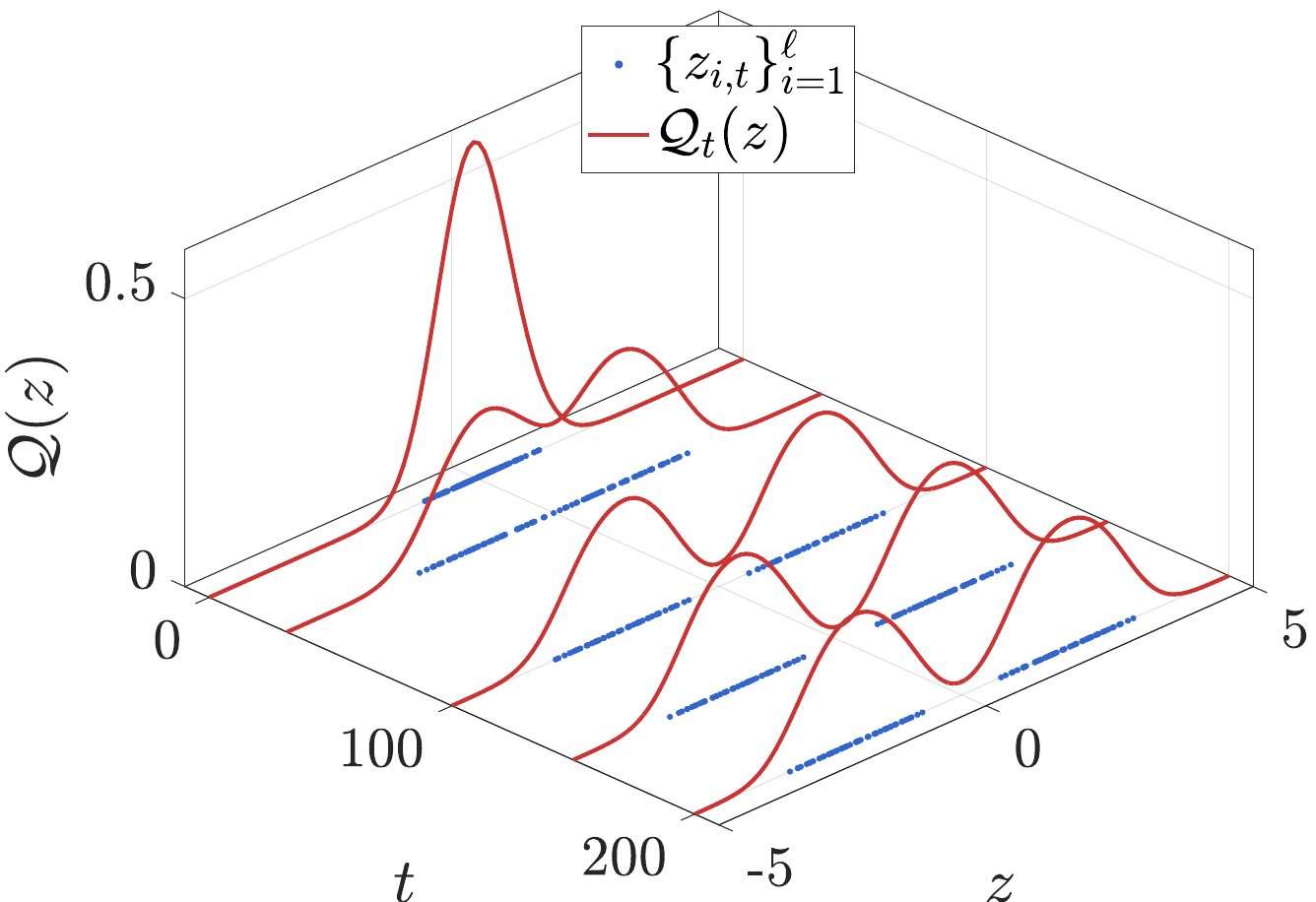}\label{subfig:densityEvolutionResults}}
        \subfigure[$\mathcal{W}_2(\mathcal{Q}_t(z), \mathcal{P}(z|\mathcal{D}))$ along $t$, where $\mathcal{Q}_{0}(z)= \mathcal{N}(0,1)$.]{\includegraphics[width=0.24\linewidth]{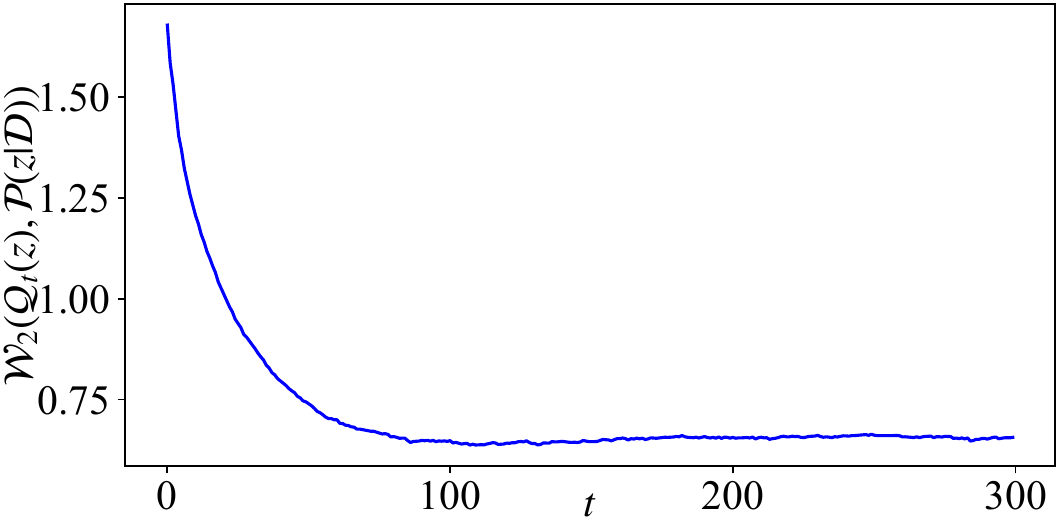}\label{subfig:WassDistanceDifferences}}
         \subfigure[Trajectory of $\mathcal{Q}_t(z)$  along $t$, where $\mathcal{Q}_{0}(z)= \mathcal{U}(-0.5,0.5)$.]{\includegraphics[width=0.24\linewidth]{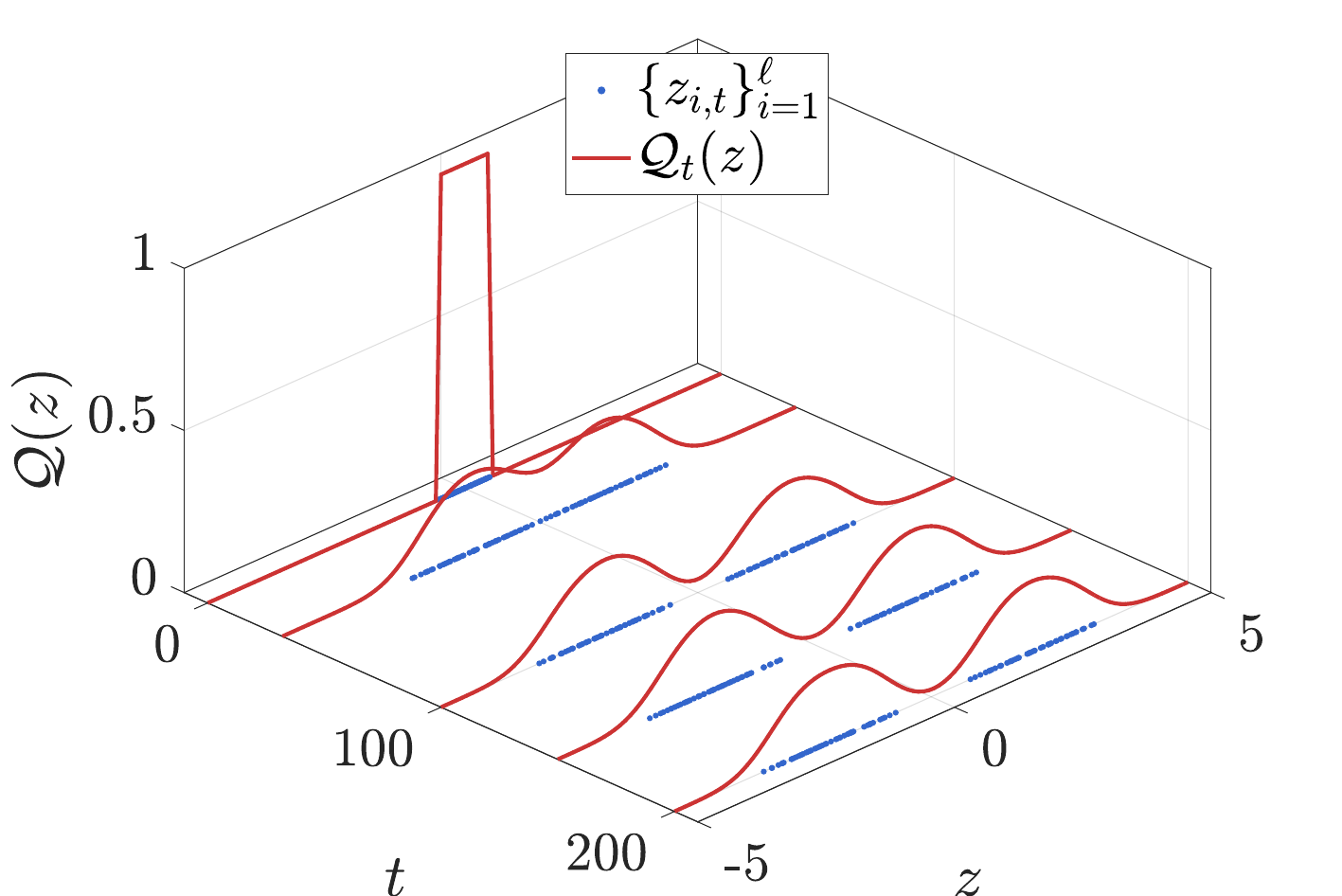}\label{subfig:unidensityEvolutionResults}}
    \subfigure[$\mathcal{W}_2(\mathcal{Q}_t(z), \mathcal{P}(z|\mathcal{D}))$ along $t$, where $\mathcal{Q}_{0}(z)= \mathcal{U}(-0.5,0.5)$.]{\includegraphics[width=0.24\linewidth]{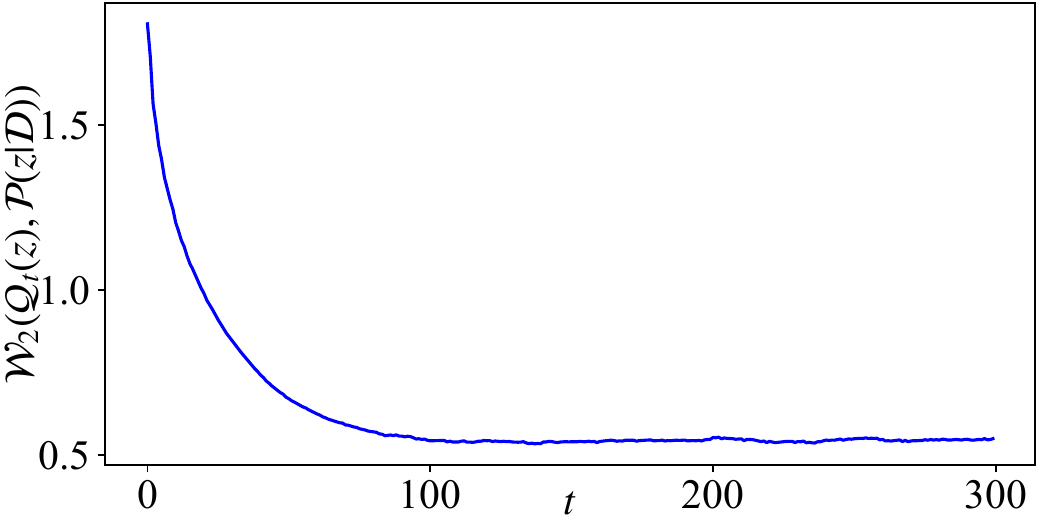}\label{subfig:uniWassDistanceDifferences}}
%   \vspace{-0.3cm}
    \caption{The evolution trajectory of the $\mathcal{Q}_t(z)$ estimated by the kernel density estimation (KDE), and the 2-Wasserstein distance $\mathcal{W}_2(\mathcal{Q}_t(z), \mathcal{P}(z|\mathcal{D}))$.% Comparison results between $\mathcal{P}(z|x)/p_{\theta,\text{NF}}(z|x)$ with $\mathcal{Q}(z)$ along iterative time $\tau$. %  $q(\vec{a})$.
    %. (b) Impact of entropic regularization strength ε. (c) Impact of PFOR strength γ (×103). (d) Impact of RMPR strength κ.
    }\label{fig:sensInfLR}
  % \vspace{-0.6cm}
\end{figure}

As observed in \Cref{subfig:densityEvolutionResults,subfig:unidensityEvolutionResults}, with the progression of $t$, the approximate distribution $\mathcal{Q}_t(z)$ evolves to exhibit two distinct modes. Furthermore, the high-probability density regions gradually extend into areas initially disjoint from the support of the initial Gaussian distribution. This behavior demonstrates that the proposed algorithm effectively adapts the shape of the variational distribution and successfully approximates posterior distributions, even when the initial distributions have minimal overlap. To further validate this, we analyze the 2-Wasserstein distance, $\mathcal{W}_2(\mathcal{Q}_t(z), \mathcal{P}(z|\mathcal{D}))$. As shown in \Cref{subfig:WassDistanceDifferences,subfig:uniWassDistanceDifferences}, the Wasserstein distance gradually decreases with increasing $t$, providing qualitative evidence of the KProx algorithm's efficacy. In addition, we observe that, when we change the initial guess of $\mathcal{Q}_t(z)$, both examples consistently converges to the target distribution. This observation demonstrates the robustness of the proposed approach to the initial guess, thereby demonstrating the superiority of the proposed KProx algorithm. In summary, these qualitative and quantitative observations empirically demonstrate the effectiveness of the KProx algorithm, providing a positive answer to \textbf{RQ1}.

\subsection{Soft Sensor Performance Analysis}\label{subsec:softSensPerformanceAnalysis}

In this subsection, we address the research problem \textbf{RQ2}: `What is the performance of KProxNPLVM in the industrial soft sensor task?'. To this end, we conduct experiments on three real industrial datasets including the separation and reaction unit operation in chemical process, namely, debutanizer column (DBC), carbon-dioxide absorber column (CAC), and catalysis shift conversion unit (CSC). The brief information of these datasets are summarized as follows:

\begin{itemize}[leftmargin=*]
    \item{\textbf{DBC:} The DBC benchmark dataset~\cite{fortuna2005soft}, collected from a refinery, describes the operation of a debutanizer whose objectives are to maximize the pentane content in the overhead distillate while minimizing the butane content in the bottom product. The target for DBC is real-time estimating the bottom butane concentration.}
 \item{\textbf{CAC:} The CAC dataset~\cite{shen2020nonlinear} comes from an ammonia synthesis process, where a caustic solvent is used to remove the carbon dioxide by-product from the hydrogen stream via absorption reactions. For downstream urea-quality assurance, the outlet-gas carbon dioxide concentration should be monitored in real time.}
  \item{\textbf{CSC:} The CSC dataset~\cite{8894374}, collected from an ammonia synthesis process, describes a series of fixed-bed reactors in which the water–gas shift reaction converts carbon monoxide and steam into hydrogen and carbon dioxide. To satisfy the required carbon–hydrogen ratio, the objective is to estimate the carbon monoxide concentration in real time.}
\end{itemize}
\noindent The detailed information of these datasets is provided in the supplementary material. To evaluate model performance, the metrics termed root mean squared error ($\textrm{RMSE}$), determination of coefficient ($\textrm{R}^2$), mean absolute error ($\textrm{MAE}$), and mean absolute percentage error ($\textrm{MAPE}$) are utilized, and the detailed expressions are provided in the supplementary material. 
The following class of models are considered as the baseline models. Due to page limit, the reasons for choosing these models and the experimental protocol including hyperparameters are listed in the supplement:
\begin{itemize}[leftmargin=*]
   \item{\textbf{NPLVMs:} Supervised Nonlinear Probabilistic Latent Variable Regression (NPLVR)~\cite{shen2020nonlinear}, Deep Bayesian Probabilistic Slow Feature Analysis (DBPSFA)~\cite{9625835}, Modified Unsupervised VAE-Supervised Deep VAE (MUDVAE-SDVAE)~\cite{xie2019supervised}, and Gaussian Mixture Model-Variational Autoencoder~\cite{guo2021just}. }
    \item{\textbf{Non-PLVMs:} Variable-wise weighted stacked autoencoder (VW-SAE)~\cite{8302941}, Gated-Stacked Target-Related Autoencoder (GSTAE)~\cite{9174659},  Deep Learning model with Dynamic Graph (DGDL)~\cite{9929274}, and   iTransformer~\cite{liu2024itransformer}. }
\end{itemize}

\begin{table*}[!h]
    \caption{Soft Sensor Accuracy Comparison}\label{tab:softSensorAccuracy}
     \vspace{-0.2cm}
   \resizebox{\linewidth}{!}{
        \begin{threeparttable}
\begin{tabular}{l|llll|llll|llll}\toprule \multirow{2}{*}{Model} & \multicolumn{4}{c|}{DBC}& \multicolumn{4}{c|}{CAC}& \multicolumn{4}{c}{CSC}\\ \cmidrule{2-13} & \multicolumn{1}{c}{$\text{R}^\text{2}$} & \multicolumn{1}{c}{RMSE} & \multicolumn{1}{c}{MAE} & \multicolumn{1}{c|}{MAPE} & \multicolumn{1}{c}{$\text{R}^\text{2}$} & \multicolumn{1}{c}{RMSE} & \multicolumn{1}{c}{MAE} & \multicolumn{1}{c|}{MAPE} & \multicolumn{1}{c}{$\text{R}^\text{2}$} & \multicolumn{1}{c}{RMSE} & \multicolumn{1}{c}{MAE} & \multicolumn{1}{c}{MAPE} \\ \midrule SNPLVR  & -1.02E-1$\dagger$ & 2.11E-1$\dagger$ & 1.71E-1$\dagger$ & 2.89E2$\dagger$ & -2.29E-1$\dagger$ & 7.90E-3$\dagger$ & 5.94E-3$\dagger$ & 2.03E0$\dagger$ & -3.13E-1$\dagger$ & 6.76E-1$\dagger$ & 5.42E-1$\dagger$ & 2.76E-1$\dagger$ \\  DBPSFA  & 2.51E-1$\dagger$ & 1.75E-1$\dagger$ & 1.43E-1$\dagger$ & 2.80E2$\dagger$ & -6.78E-2 & 7.36E-3 & 5.63E-3 & 1.93E0 & -3.73E4$\dagger$ & 1.15E2$\dagger$ & 1.15E2$\dagger$ & 5.85E1$\dagger$ \\  MUDVAE-SDVAE  & -1.04E-2$\dagger$ & 2.03E-1$\dagger$ & 1.62E-1$\dagger$ & 2.75E2$\dagger$ & -5.29E-3$\dagger$ & 7.14E-3$\dagger$ & 5.23E-3$\dagger$ & 1.78E0$\dagger$ & -1.64E-1$\dagger$ & 6.41E-1$\dagger$ & 5.13E-1$\dagger$ & 2.61E-1$\dagger$ \\  GMM-VAE  & 7.93E-1$\dagger$ & 8.15E-2$\dagger$ & 6.50E-2$\dagger$ & 8.99E1$\dagger$ & 2.89E-1$\dagger$ & 6.14E-3$\dagger$ & 4.66E-3$\dagger$ & 1.59E0$\dagger$ & 7.83E-1$\dagger$ & 2.75E-1$\dagger$ & 2.19E-1$\dagger$ & 1.11E-1$\dagger$ \\  GSTAE  & 9.70E-1$\dagger$ & 3.52E-2$\dagger$ & 1.48E-2$\dagger$ & 2.56E1$\dagger$ & \uwave{7.40E-1}$\dagger$ & \uwave{3.63E-3}$\dagger$ & \textbf{2.72E-3}$\dagger$ & \textbf{9.30E-1}$\dagger$ & 8.93E-1$\dagger$ & 1.93E-1$\dagger$ & 1.53E-1$\dagger$ & 7.77E-2$\dagger$ \\  VW-SAE  & 2.35E-1$\dagger$ & 1.76E-1$\dagger$ & 1.31E-1$\dagger$ & 2.44E2$\dagger$ & 4.39E-1$\dagger$ & 5.32E-3$\dagger$ & 3.74E-3$\dagger$ & 1.28E0$\dagger$ & 7.46E-1$\dagger$ & 2.95E-1$\dagger$ & 2.17E-1$\dagger$ & 1.11E-1$\dagger$ \\  DGDL  & 9.82E-1$\dagger$ & 2.59E-2$\dagger$ & 1.77E-2$\dagger$ & \uwave{1.53E1} & 7.35E-1 & 3.67E-3$\dagger$ & 2.83E-3 & 9.73E-1 & 9.31E-1$\dagger$ & 1.56E-1$\dagger$ & 1.23E-1$\dagger$ & 6.28E-2$\dagger$ \\  iTransformer  & \uwave{9.90E-1} & \uwave{1.77E-2} & \uwave{1.14E-2} & 3.87E1 & 6.97E-1$\dagger$ & 3.92E-3$\dagger$ & 3.06E-3$\dagger$ & 1.05E0$\dagger$ & \uwave{9.39E-1} & \uwave{1.46E-1} & \uwave{1.16E-1} & \uwave{5.91E-2} \\  KProxNPLVM  & \textbf{9.98E-1} & \textbf{9.84E-3} & \textbf{7.72E-3} & \textbf{9.25E0} & \textbf{7.52E-1} & \textbf{3.55E-3} & \uwave{2.79E-3} & \uwave{9.57E-1} & \textbf{9.41E-1} & \textbf{1.44E-1} & \textbf{1.15E-1} & \textbf{5.87E-2} \\ \midrule Win Counts & 8  & 8  & 8  & 8  & 8  & 8  & 7  & 7  & 8  & 8  & 8  & 8  \\ \bottomrule\end{tabular}
\begin{tablenotes}
    \item{$\dag$ marks variants that KProxNPLVM model significantly at $p$-value $<$ 0.05 over paired samples $t$-test. \textbf{Bolded} and \uwave{Wavy} results indicate first and second best in each metric. %\uwave{Wavy} results indicate the second best in each metric.
    }
    \end{tablenotes}
    \end{threeparttable}
} 
% \vspace{-0.3cm}
%
\end{table*}

\Cref{tab:softSensorAccuracy} presents the baseline comparison results, from which the following observations can be made:
\begin{enumerate}[leftmargin=*]
    \item{For all datasets, the performance of most NPLVMs does not surpass that of the majority of non-NPLVMs.
% For all datasets, the performance of most of the NPLVMs does not exceed that most of the non-NPLVMs.
    }
    \item{The GMM-VAE model outperforms most NPLVMs and demonstrates competitive performance compared to non-NPLVMs like GSTAE, DGDL and iTransformer.
% The GMM-VAE model outperforms most of the NPLVMs and even can beat with the non-NPLVMs 
    }
    % \item{All models perform better on the DBC dataset than on the CA dataset.}
    \item{KProx significantly outperforms the majority of baseline models, demonstrating not only superior predictive capabilities but also statistical significance in its results.
       %The KProxNPLVM
    % For the CA dataset, the S$^\text{2}$NPLVM surpasses most of the baseline models across several metrics, improving $\text{R}^\text{2}$ by 0.781\% to 47.682\%, reducing RMSE by 0.012\% to 0.288\%, lowering MAPE by 0.006\% to 0.293\%, and decreasing MAE by 0.006\% to 0.293\%.
        }
    %     \item{
    % For the DBC dataset, the KProxNPLVM surpasses most baseline models across several metrics, with improvements in $\text{R}^\text{2}$ ranging from 0.719\% to 3626\%, reductions in RMSE from 0.255\% to 0.905\%, decreases in MAPE from 0.034\% to 0.921\%, and reductions in MAE from 0.070\% to 0.908\%.
    % % For the DBC dataset, the S$^\text{2}$NPLVM surpasses most of the baseline models across several metrics, improving $\text{R}^\text{2}$ by 0.719\% to 3626\%, reducing RMSE by 0.255\% to 0.905\%, lowering MAPE by 0.034\% to 0.921\%, and decreasing MAE by 0.070\% to 0.908\%.
    %     }
\end{enumerate}
Observations 1) and 2) indicate that directly applying NPLVMs to soft sensor scenarios may be inadequate due to the limitations imposed by the parameterization of variational distributions within a uni-Gaussian prior. In support of this view, Observation 3) shows that when we replace the univariate Gaussian family with a more expressive Gaussian mixture model, the model's performance improves to some extent. Notably, GMM-VAE achieves substantially better performance than most NPLVM baselines. We attribute this gap primarily to posterior approximation. When the true posterior is highly complex (e.g., multimodal), a unimodal variational family can incur a large approximation error, leading to degraded predictive performance. In contrast, using a Gaussian mixture variational posterior provides greater flexibility and can better capture such complex structure, thereby reducing approximation error and improving performance. Finally, Observation 4 reveals that the proposed KProxNPLVM, a specific type of NPLVM, surpasses most baseline models and demonstrates the efficacy and superiority of the Wasserstein distance-based proximal operator regularization strategy for the NPLVM training, as described in \Cref{algo:kProxVIProcedure}. % In summary, the abovementioned experiments demonstrate the efficacy of th
% Observation 4) reveal that the proposed KProxNPLVM, as a specific type of NPLVM, not only surpasses most baseline models but also demonstrates the efficacy and superiority of the Wasserstein proximal operator strategy for NPLVMs learning in~\Cref{sec:proposedApproach}.
% Observation 1) indicates that direct application of NPLVMs to  sensor modeling may be detrimental to prediction accuracy, supporting the theoretical derivation in~\Cref{subsec:InfLVByOptLV}. 
% Observations 1) and 2) indicates that directly applying the NPLVMs to soft sensor scenario may not be suitable due to the restriction on the parameterization of variational distribution within uni-Gaussian prior. On this basis, observation 2) support this opinion since when we replac the univariate Gaussian family with more expressive Gaussian mixture family, the model performance increases to some extend. Finally, Observations 3) and 4) reveal that the proposed S$^\text{2}$NPLVM, as a special kind of NPLVM, can surpass most baseline models, demonstrating the efficacy and superiority of the proposed KProxNPLVM. %~\Cref{subsec:InfLVByOptLV,sec:InfNetTraining}.

% Observation 2) suggests that inferential sensor prediction accuracy heavily depends on the dataset. Finally, Observations 3) and 4) reveal that the proposed S$^\text{2}$NPLVM, as a special kind of NPLVM, can surpass most baseline models, demonstrating the effectiveness of the derivations in~\Cref{subsec:InfLVByOptLV,sec:InfNetTraining}.

  % \vspace{-0.5cm}
\subsection{Sensitivity Analysis Result}\label{subsec:sensResultAnalysis}
This subsection investigates research question \textbf{RQ3}: `How does the performance of KProxNPLVM vary with
changes in hyperparameters?'. To explore this, the hyperparameters—proximal operator coefficient $\varepsilon$, batch size $\mathcal{B}$, inference network learning rate $\eta$, and particle number $\ell$—are examined on the DBC dataset. The results are presented in~\Cref{fig:sensStepSize}.

From these figures, the following observations are made:
\begin{enumerate}[leftmargin=*]
    \item{As the proximal operator coefficient $\varepsilon$ increases, the performance of KProxNPLVM improves.}
    \item{As the batch size $\mathcal{B}$ increases, the performance of KProxNPLVM deteriorates.}
    \item{As the learning rate $\eta$ increases, the performance of KProxNPLVM first improves.}
    \item{As the number of particles $\ell$ increases, the performance of KProxNPLVM initially improves and then declines.}
    \end{enumerate}
Observation 1) indicates that when the proximal operator coefficient $\varepsilon$ is small, the latent variable inference procedure cannot approximate the posterior distribution close enough, resulting in degraded model performance. Observation 2) suggests that increasing the batch size $\mathcal{B}$ leads to decreased performance; when the batch size is too large, the model may become trapped in a local optimum, hindering generalization ability and thus degrading performance. Observation 3) implies that a very small learning rate $\eta$ causes the model to require more time to converge, possibly exceeding the predefined epochs. Finally, Observation 4) highlights that while a particle number greater than one enhances performance, an excessive number of particles can lead to overfitting and reduced performance. In conclusion, these findings underscore the importance of selecting a larger proximal operator coefficient $\varepsilon$, an appropriate batch size $\mathcal{B}$, a smaller learning rate $\eta$, and an optimal number of particles $\ell$ to ensure model performance.

\begin{figure}[htbp]
  \vspace{-0.3cm}
    \centering
    % figures\cut_sampled_results.pdf
    \subfigure[$\varepsilon$, DBC Dataset, $\text{R}^{\text{2}}$.]{\includegraphics[width=0.245\linewidth]{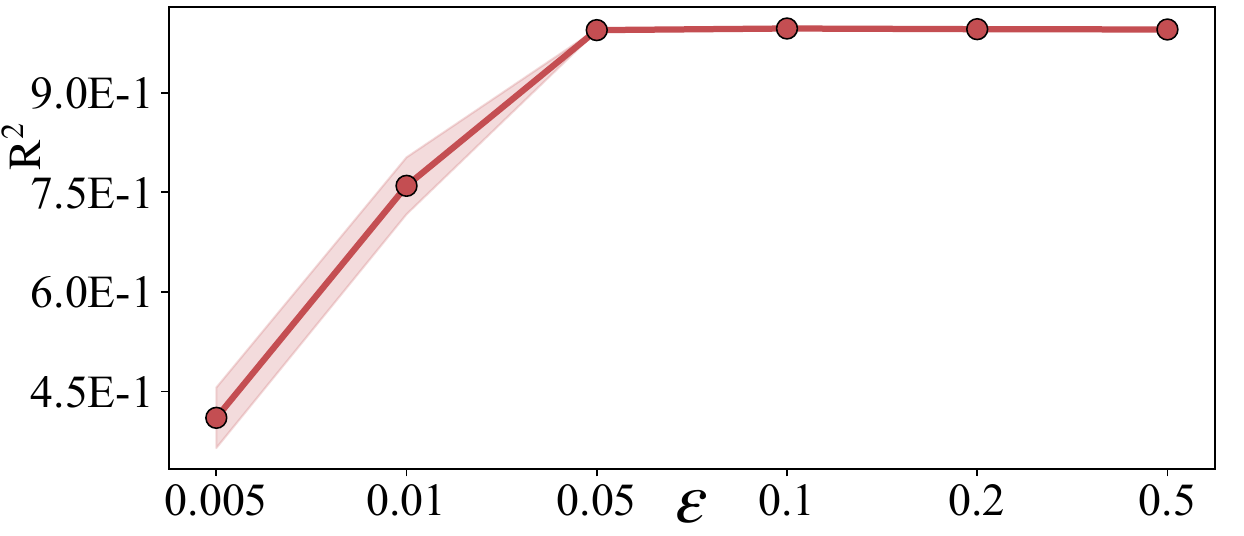}}%\label{subfig:a}
    \subfigure[$\varepsilon$, DBC Dataset, RMSE.]{\includegraphics[width=0.245\linewidth]{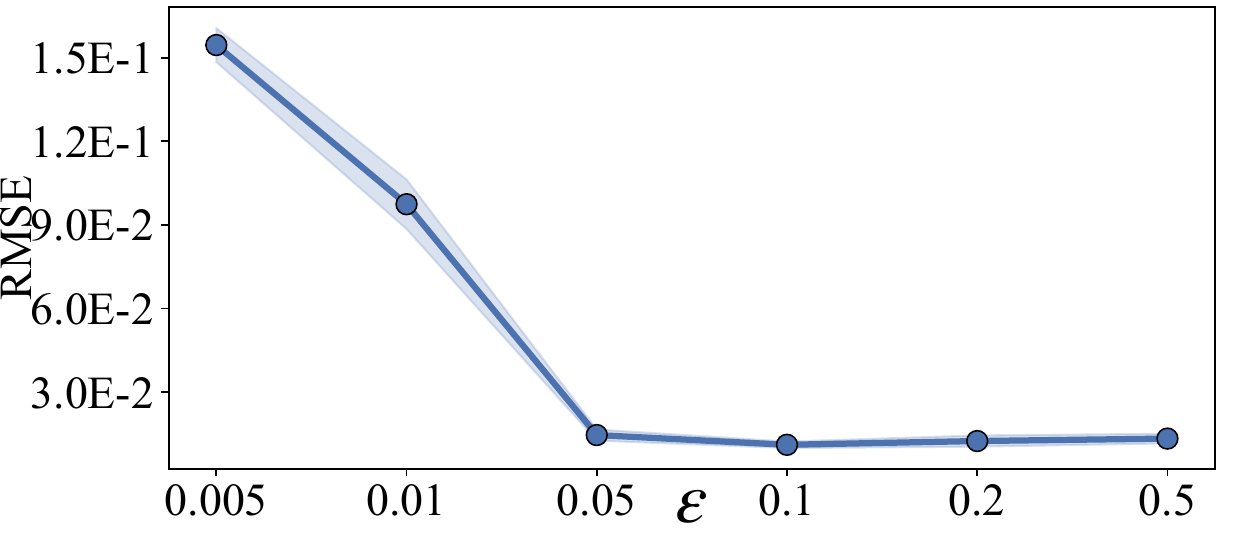}}
        \subfigure[$\mathcal{B}$, DBC Dataset, $\text{R}^{\text{2}}$.]{\includegraphics[width=0.245\linewidth]{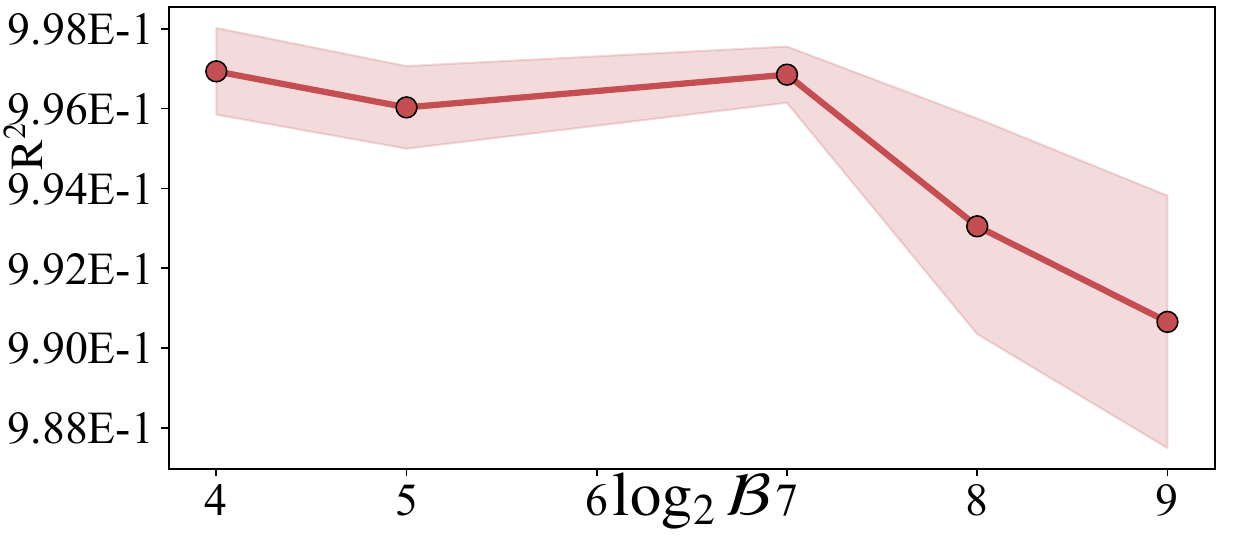}}%\label{subfig:a}
    \subfigure[$\mathcal{B}$, DBC Dataset, RMSE.]{\includegraphics[width=0.245\linewidth]{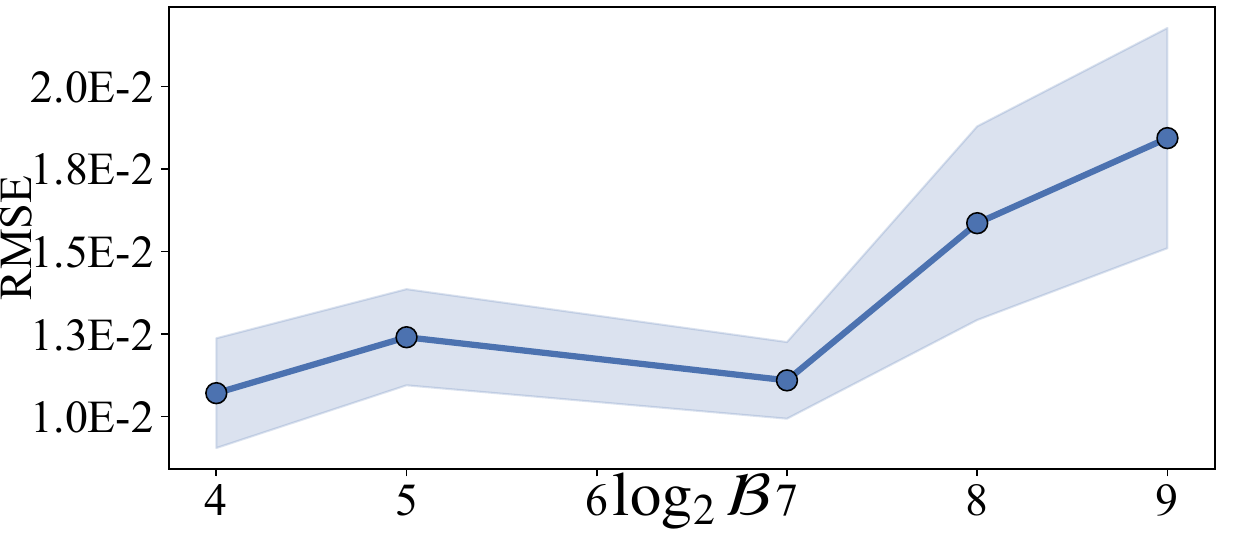}}
      \subfigure[$\eta$, DBC Dataset, $\text{R}^{\text{2}}$.]{\includegraphics[width=0.245\linewidth]{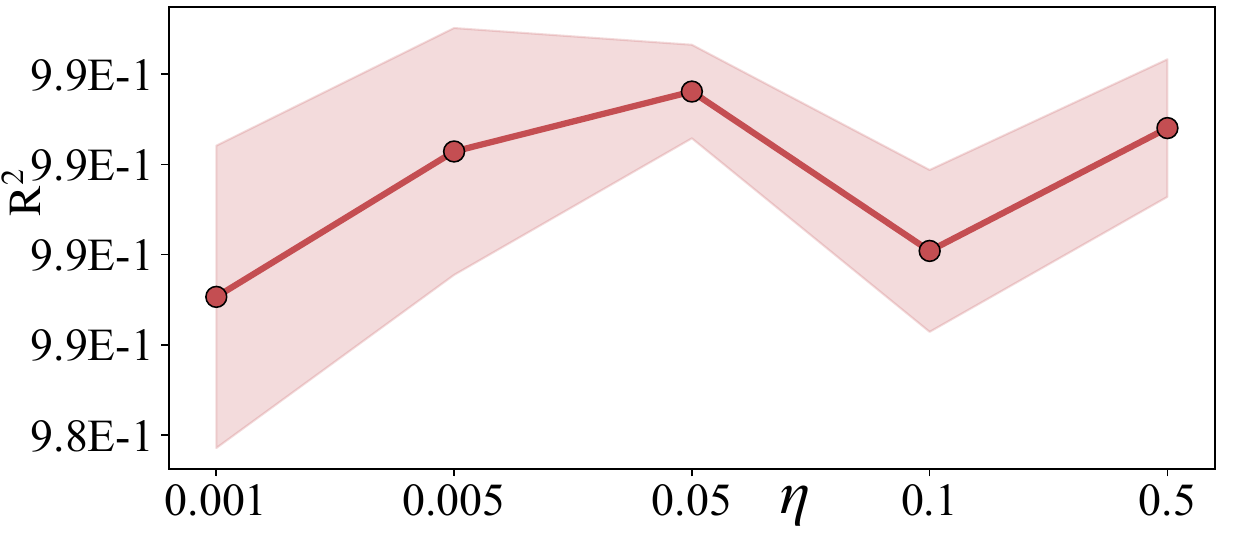}}%\label{subfig:a}
    \subfigure[$\eta$, DBC Dataset, RMSE.]{\includegraphics[width=0.245\linewidth]{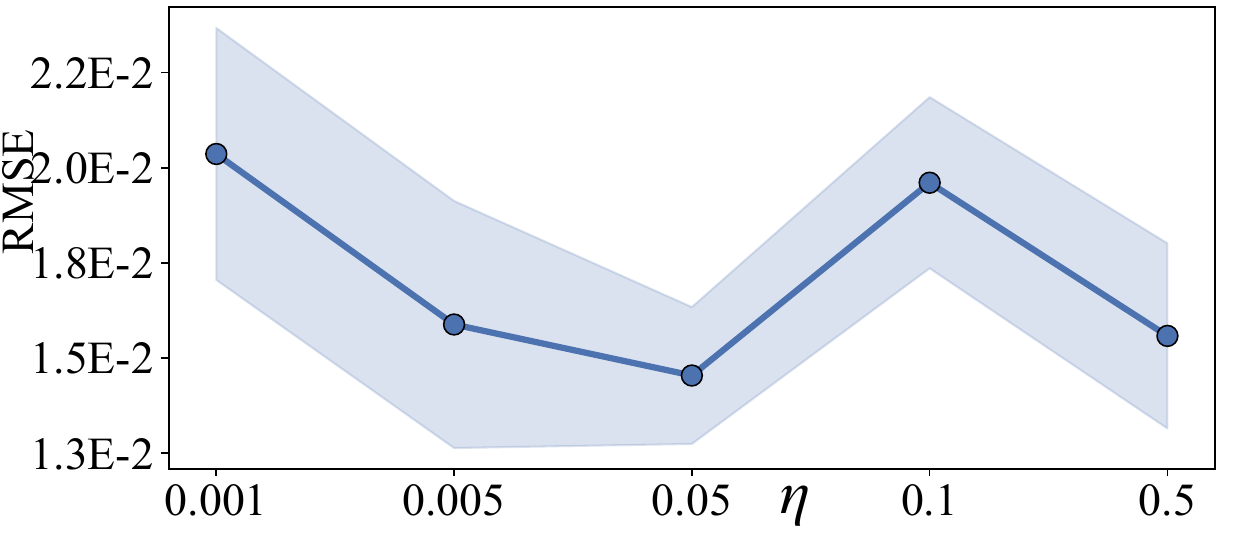}}
    \subfigure[$\ell$, DBC Dataset, $\text{R}^{\text{2}}$.]{\includegraphics[width=0.245\linewidth]{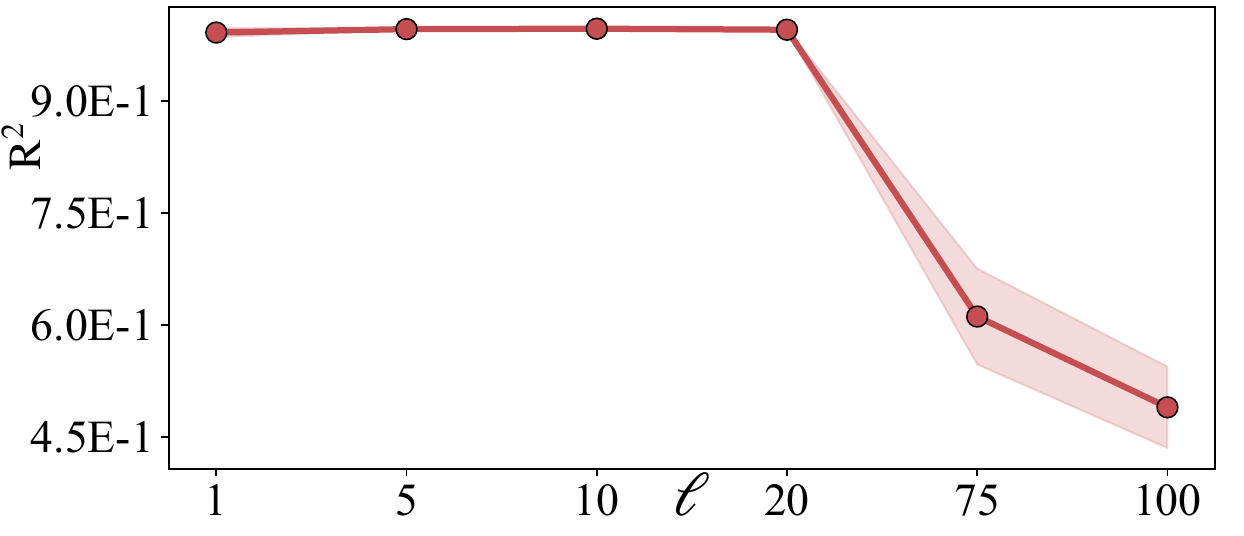}}%\label{subfig:a}
    \subfigure[$\ell$, DBC Dataset, RMSE.]{\includegraphics[width=0.245\linewidth]{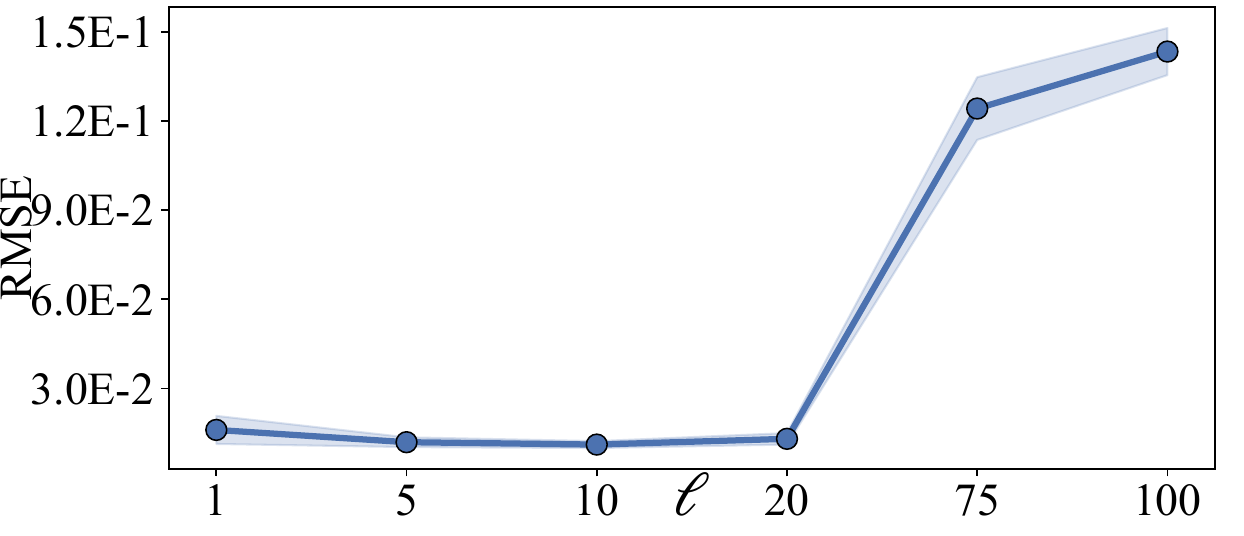}}
    % \subfigure[$\varepsilon$, CA Dataset, MAPE.]{\includegraphics[width=0.24\linewidth]{pictures/cut_sens_fig/stein_step/data_ca_metric_mape.pdf}}
    % \subfigure[$\varepsilon$, CA Dataset, MAE.]{\includegraphics[width=0.24\linewidth]{pictures/cut_sens_fig/stein_step/data_ca_metric_mae.pdf}}
    % \subfigure[$\varepsilon$, DBC Dataset, $\text{R}^{\text{2}}$.]{\includegraphics[width=0.24\linewidth]{pictures/cut_sens_fig/stein_step/data_dc_metric_r2.pdf}}%\label{subfig:a}
    % \subfigure[$\varepsilon$, DBC Dataset, RMSE.]{\includegraphics[width=0.24\linewidth]{pictures/cut_sens_fig/stein_step/data_dc_metric_rmse.pdf}}
    % \subfigure[$\varepsilon$, DBC Dataset, MAPE.]{\includegraphics[width=0.24\linewidth]{pictures/cut_sens_fig/stein_step/data_dc_metric_mape.pdf}}
    % \subfigure[$\varepsilon$, DBC Dataset, MAE.]{\includegraphics[width=0.24\linewidth]{pictures/cut_sens_fig/stein_step/data_dc_metric_mae.pdf}}
   \vspace{-0.3cm}
    \caption{The sensitivity analysis results vary proximal operator coefficient $\varepsilon$, batch size $\mathcal{B}$, inference network $\eta$, and particle number $\ell$ on the DBC dataset. The shaded area indicates the $\pm 0.5$ standard deviation error.% Comparison results between $\mathcal{P}(z|y)/p_{\theta,\text{NF}}(z|y)$ with $\mathcal{Q}(z)$ along iterative time $\tau$. %  $q(\vec{a})$.
    %. (b) Impact of entropic regularization strength ε. (c) Impact of PFOR strength γ (×103). (d) Impact of RMPR strength κ.
    }\label{fig:sensStepSize}
   %\vspace{-0.2cm}
\end{figure}

\subsection{Ablation Study}\label{subsec:ablationResultAnalysis}

This subsection addresses research question \textbf{RQ4}: `What factors contribute to the impressive performance of KProxNPLVM?'. To explore this question, we conducted an ablation study focusing on two critical components: the KProx algorithm (abbreviated as `KProx') and the Wasserstein distance-based inference network learning strategy (abbreviated as `Wass'). For `KProx', we replaced the generative network training with a conventional VAE model using the Gaussian distribution $\mathcal{N}\left(0, I\right)$ as the latent variable prior. For `Wass', we replaced the learning objective with $-\mathbb{E}_{\mathcal{Q}(z)}\left[\log{p_{\theta}(y|z)}\right] + \mathbb{D}_{\text{KL}}\left[q_\varphi(z) \Vert \mathcal{Q}(z)\right]$, where $\theta$ is frozen. 
\begin{table}[htbp]  \vspace{-0.0cm} \caption{Ablation Study Results}\label{tab:ablationStudyResult}  \vspace{-0.2cm} \centering \begin{tabular}{c|l|l|l|l|l|l} \toprule \multicolumn{1}{l|}{Dataset} & KProx & Wass & $\text{R}^{\text{2}}$ ($\downarrow$) & RMSE ($\uparrow$) & MAE ($\uparrow$) & MAPE ($\uparrow$) \\ \midrule \multirow{3}{*}{DBC}&  \XSolidBrush     &   \Checkmark& 2.73E3\%& 1.53E3\%& 1.59E3\%& 1.43E3\%\\&  \Checkmark      & \XSolidBrush& 4.20E0\%& 2.34E2\%& 2.50E2\%& 3.22E2\%\\&    \XSolidBrush     &  \XSolidBrush& 2.44E3\%& 1.53E3\%& 1.59E3\%& 1.42E3\%\\\midrule \multirow{3}{*}{CAC}&  \XSolidBrush     &   \Checkmark& 1.86E5\%& 1.00E2\%& 8.79E1\%& 8.64E1\%\\&  \Checkmark      & \XSolidBrush& 7.51E1\%& 5.14E1\%& 4.86E1\%& 4.82E1\%\\&    \XSolidBrush     &  \XSolidBrush& 7.87E4\%& 1.01E2\%& 8.85E1\%& 8.71E1\%\\\midrule \multirow{3}{*}{CSC}&  \XSolidBrush     &   \Checkmark& 5.52E2\%& 3.48E2\%& 3.48E2\%& 3.49E2\%\\&  \Checkmark      & \XSolidBrush& 7.59E0\%& 4.45E1\%& 4.21E1\%& 4.21E1\%\\&    \XSolidBrush     &  \XSolidBrush& 4.79E2\%& 3.56E2\%& 3.56E2\%& 3.56E2\%\\\bottomrule \end{tabular} \vspace{-0.0cm} \end{table}

The results are listed in~\Cref{tab:ablationStudyResult}, where the following observations can be made:
% From~\Cref{tab:abStudyExperiments}, the following observations can be obtained:
\begin{enumerate}[leftmargin=*]
    \item{
    Comparing the scenarios where `KProx' is ablated (Lines 1 and 3) to those where `KProx' is not ablated (Lines 2), the prediction accuracy of KProxNPLVM is greatly reduced.
    % Comparing to the scenario that the `Gen' is ablated (Lines 1 and 3) to that `Gen' is not ablated (Lines 2 and 4), the prediction accuracy of S$^\text{2}$NPLVM is greatly reduced.
    }
    \item{ When `Wass' is ablated (Line 2), the model performance also decreases (Line 4).
        % When the `Inf' is ablated, the model performance may also reduce. 
        }
\end{enumerate}
Observation 1) indicates that directly applying a network structure with $q_{\varphi}(z|x)$ that parameterizes the variational distribution may result in approximation error, thereby hindering the prediction accuracy of the soft sensor. This supports the justification provided in~\Cref{subsec:MotivationAnalysis,subsec:WassProximalOperator}. Observation 2) suggests that when the latent variable is sufficiently accurate, directly using it as the learning objective for inference network training suffices for model performance, underscoring the necessity of the learning objective designed in~\Cref{subsec:ModelTrainingStrategy}. In summary, the ablation study results strongly support the importance of integrating both the KProx in the latent variable distribution inference process and the Wasserstein distance for the inference network training into the KProxNPLVM training protocol, demonstrating their collective significance in ensuring optimal model performance.

% \textbf{We have finished the ablation study}

% \vspace{-0.5cm}
\subsection{Empirical Convergence Analysis}\label{subsec:convergenceAnalysisEmpirical}

In this section, we empirically address the \textbf{RQ5}: `Does the training process of KProxNPLVM converge?'~\Cref{fig:convergenceResult} showcases the progression of the expected log-likelihood, $\mathbb{E}_{\mathcal{Q}(z)}[\log p_{\theta}(z|\mathcal{D})]$, over training epochs on the CAC, DBC, and WGS datasets. A distinct pattern of rapid convergence emerges. Across all datasets, the learning objective quickly rises from a negative value and stabilizes near the optimal value of zero within five epochs. The minimal standard deviation, represented by a narrow shaded region around the mean curve, further highlights the algorithm's stable and consistent performance across multiple runs under different initialization parameters. These empirical findings align seamlessly with our theoretical analysis presented in~\Cref{thm:ConvergenceControlEpsilon}, providing strong evidence for the rapid and stable convergence of the KProx algorithm and answering \textbf{RQ5} empirically.

% In this section, we empirically validate the convergence properties of the InfO-EM algorithm, addressing \textbf{RQ5}: ``Does the ProxEM algorithm converge?"~\Cref{fig:convergenceResult} illustrates the evolution of the expected log-likelihood, $\mathbb{E}_{\mathscr{Q}(z)}[\log p_{\theta}(x|z)]$, as a function of training epochs on the CAC, DBC, and WGS datasets. A clear pattern of rapid convergence is observable. In all datasets, the learning objective swiftly rises from a negative value and plateaus near the optimal value of zero in under 5 epochs. The minimal standard deviation, visualized as a tight shaded region around the mean curve, further attests to the algorithm's stable and consistent performance across runs. These empirical results, in perfect alignment with our theoretical analysis in~\Cref{subsec:convergenceAnalysisResultsTheory}, provide compelling evidence for the rapid and stable convergence of the ProxEM algorithm, thereby answering \textbf{RQ5}.

\begin{figure}[!h]
    \centering
    % \includegraphics[width=0.40\textwidth]{picture/cut_gen_dc_precond.pdf}
   % \vspace{-0.45cm}
    \includegraphics[width=0.4\textwidth]{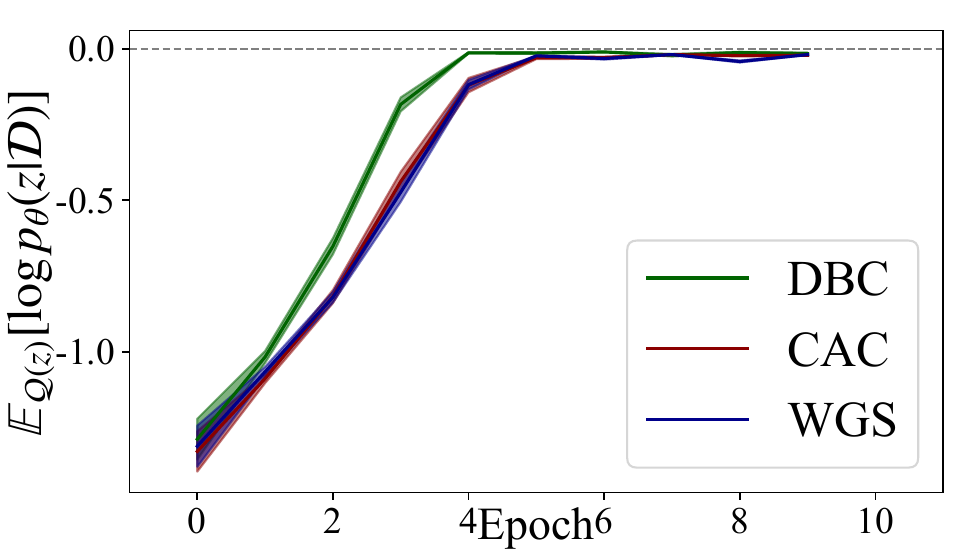}
      % \vspace{-0.45cm}
    \caption{The convergence results of the training process of KProx algorithm on three datasets. The shaded area indicates $\pm$0.25 times the standard deviation.}\label{fig:convergenceResult}
   % \vspace{-0.5cm}
   %  \vspace{-0.7cm}
\end{figure}

\section{Conclusions}\label{sec:Conclusions}
% \vspace{-0.2cm}
In this paper, we addressed the approximation accuracy challenges of NPLVMs arising from the parametric family of variational distributions by introducing the Wasserstein distance as a proximal operator and developing a novel latent variable inference strategy. Building on this foundation, we established a computationally implementable algorithm termed KProx for solving the regularized problem and proved its convergence properties within the RKHS. Furthermore, we developed a novel algorithm for training NPLVMs based on the KProx algorithm. Finally, experiments were conducted to demonstrate the efficacy of the KProx algorithm and KProxNPLVM.
% In this paper, to alleviate the approximation accuracy for NPLVMs stems from the parameteric family of variational distributions, we introduce the Wasserstein distance as proximal operator, and develop a novel latent variable inference strategy. Based on this, we develop an easy-to-implement strategy for the implementation of this regularized problem, and prove the convergence property of this implementation approach. Based on this, we developed a novel algorithm for NPLVM training. Finally, we conducted various experiments to demonstrate the efficacy of the proposed approach.

\noindent\textbf{Limitations \& Future Works:} In our manuscript, we utilized the RKHS to avoid the intractable computation of the $\nabla\log{\mathcal{Q}_t(z)}$. However, this approach may limit the expressiveness of the approximated velocity field and may falter in high-dimensional latent variable scenarios. Therefore, integrating neural networks and redesigning the velocity field approximation strategy is crucial for future research. Furthermore, while we employ Wasserstein distance as a proximal operator to regularize the optimization problem, exploring alternative discrepancy metrics, such as KL divergence, presents a valuable direction for further investigation.

% \appendix
\section*{Supplementary Material}
\addcontentsline{toc}{section}{Supplementary Material}

\setcounter{section}{0}
\setcounter{equation}{0}
\setcounter{figure}{0}
\setcounter{table}{0}

\renewcommand{\thesection}{S\arabic{section}}
\renewcommand{\theequation}{S\arabic{equation}}
\renewcommand{\thefigure}{S\arabic{figure}}
\renewcommand{\thetable}{S\arabic{table}}

\nomenclature{$\mathrm{D}$}{Dimension of process variable}
\nomenclature{KL divergence}{Kullback-Leibler divergence}
\nomenclature{$\mathrm{d}$}{Total differential operator}
\nomenclature{$\partial$}{Partial differential operator}
\nomenclature{$t$}{Time index for particle evolution/iteration}
\nomenclature{$\mathbb{F}$}{Normalized probability density function family}

\nomenclature{$\mathbb{D}_{\rm{KL}}[\mathcal{Q}(z)\Vert \mathcal{P}(z)]$}{Kullback-Leibler divergence of distribution $\mathcal{Q}(z)$ with-respect-to distribution $\mathcal{P}(z)$}
\nomenclature{$\mathbb{E}_{q(z)}[f(z)]$}{Expectation of function $f(z)$ with-respect-to distribution $q(z)$}
\nomenclature{$\rm{LV}$}{Latent variable}

\nomenclature{$\mathscr{P}_2(\mathrm{D})$}{Wasserstein space}
\nomenclature{$\mathbb{R}$}{Real number set}
\nomenclature{$\pi$}{Transportation plan}
\nomenclature{$\boldsymbol{T}$}{Transportation map}
\nomenclature{$\#$}{Pushforward measure}
\nomenclature{$v$}{Velocity field}
\nomenclature{$\mathscr{T}$}{Sufficient statistic}
\nomenclature{$\upeta$}{Natural parameter}
\nomenclature{$\mathbf{H.O.T}$}{Higher order term}
\nomenclature{$\varepsilon$}{Discretization step size}
\nomenclature{$\delta$}{First variation}
\nomenclature{$\mathscr{R}$}{Positive constant}
\nomenclature{$\vec{n}(z)$}{Outer normal vector}
\nomenclature{$\mathrm{d}S(z)$}{Surface element}
\nomenclature{$\mathcal{K}$}{Kernel function}

\nomenclature{$\mathscr{A}$}{Upper bound for the norm of velocity field}
\nomenclature{$\mathscr{B}$}{Upper bound for the norm of gradient for the velocity field}
\nomenclature{$\upepsilon$}{Entropy regularization strength}
\nomenclature{$\ell$}{Particle number}
\nomenclature{$\varphi$}{Parameter of the inference network}
\nomenclature{$\theta$}{Parameter of the generative network}
\nomenclature{$\upmu$}{Lagrange multiplier}
\nomenclature{$\upnu$}{Lagrange multiplier}

\nomenclature{$\mathscr{K}$}{Sinkhorn transition kernel}

\nomenclature{$\mathcal{W}_2$}{2-Wasserstein distance}

\nomenclature{$\mathcal{D}$}{Dataset}
\nomenclature{KProx}{Kernelized proximal gradient descent}
\nomenclature{$\mathcal{H}^{\mathrm{D}_{\rm{LV}}}$}{$\mathrm{D}_{\rm{LV}}$-dimensional RKHS}
\nomenclature{$\nabla$}{Nabla operator}
\nomenclature{$\mathscr{I}$}{Fisher information matrix}
\nomenclature{NPLVM}{Nonlinear probabilistic latent variable model}

\nomenclature{$p_\theta(\mathcal{D}\vert z)$}{Generative network with parameter $\theta$}
\nomenclature{$\mathcal{Q}_{\boldsymbol{T}}$}{Variational distribution after applying transportation direction $\boldsymbol{T}$.}
\nomenclature{$q_\varphi(z|x)$}{Inference network with parameter $\varphi$}
\nomenclature{$\mathcal{Q}_{\mathrm{T}}(z)$}{Variational distribution at terminal time $\mathrm{T}$}
\nomenclature{$\mathcal{Q}_t(z)$}{Variational distribution at time $t$}
\nomenclature{$\mathcal{Q}(z)$}{Variational distribution}

\nomenclature{$x$}{Process variable}
\nomenclature{$y$}{Quality variable}
\nomenclature{$\hat{y}$}{Predicted quality variable}

\nomenclature{$\exp$}{Expotential function.}

\printnomenclature

\section{Detailed Derivation of the Equations}
\subsection{Derivation of~\Cref{eq:HessianInequality}}
When $f:\mathbb{R}^{\mathrm{D}}\to\mathbb{R}^{\mathrm{D}}$ is twice continuously differentiable and the following conditions set up:
\begin{itemize}
    \item{\textbf{Condition 1:} The positive semidefinite Hessian matrix $\nabla^2f(x)$ is $M$-Lipschitz around $x$:
    \begin{equation}
         \|\nabla^2 f(y)-\nabla^2 f(x)\|\le M\|y-x\|.
    \end{equation}
    }
    \item{\textbf{Condition 2:} The positive semidefinite Hessian matrix $\nabla^2f(x)$ is local strong convex. In other words, for $w\coloneqq y-x$, we have:
    \begin{equation}
        w^\top \nabla^2f(x) w \ge \kappa \Vert w \Vert^2,
    \end{equation}
    where $\kappa$ is a positive constant.
    
    }
\end{itemize}
Then for $\Vert y - x \Vert \le \frac{3\kappa}{M}$, we get Eq. (12) in the main content:
\begin{equation*}
    f(y) \ge f(x) + [\nabla f(x)]^\top [y-x] + \frac{1}{2}[y-x]^\top \nabla^2f(x)[y-x]. 
\end{equation*}
To support this conclusion, we first apply the Taylor's expansion to $f(y)$ around $x$ as follows:
\begin{equation}
    f(y) = f(x) + [\nabla f(x)]^\top [y-x] +  \frac{1}{2}[y-x]^\top \nabla^2f(x)[y-x] + \mathrm{R}_3,
\end{equation}
where $\mathrm{R}_3$ is the 3-th order residual term. Thus, we should prove:
\begin{equation}
     \frac{1}{2}[y-x]^\top \nabla^2f(x)[y-x] + \mathrm{R}_3 \ge 0.
\end{equation}
According to ``Condition 1'', we observe that:
\begin{equation}\label{eq:R3UpperBoundResult}
    |\mathrm{R}_3| \le \frac{M}{6} \Vert w \Vert^3.
\end{equation}
Thus, we should prove the following inequality according to Eq.~\eqref{eq:R3UpperBoundResult} to support the conclusion:
\begin{equation}\label{eq:secondOrderSurrogateResult}
  \frac{1}{2}[y-x]^\top \nabla^2f(x)[y-x]  \ge  \frac{M}{6} \Vert w \Vert^3.
\end{equation}
According to ``Condition 2'', we know that:
\begin{equation}\label{eq:secondOrderlowerBoundResult}
    \frac{\kappa}{2}\Vert w \Vert^2 \le \frac{1}{2}[y-x]^\top \nabla^2f(x)[y-x].
\end{equation}
Comparing~\eqref{eq:secondOrderSurrogateResult} with~\eqref{eq:secondOrderlowerBoundResult}, we observe that:
\begin{equation}
      \frac{\kappa}{2}\Vert w \Vert^2 \ge  \frac{M}{6} \Vert w \Vert^3.
\end{equation}
Finally, we get the following result:
\begin{equation}\label{eq:inequalityCircle}
    \Vert y - x \Vert \le \frac{3\kappa}{M}.
\end{equation}
As a result, when the inequality defined by~\eqref{eq:inequalityCircle} is satisfied, we conclude Eq. (12) in the main content.

\iffalse
the positive semidefinite Hessian matrix $\nabla^2f(x)$ is $M$-Lipschitz around $x$, the following inequality holds: 
\begin{equation}
    \|\nabla^2 f(y)-\nabla^2 f(x)\|\le M\|y-x\|.
\end{equation}
Denote $w\coloneqq y-x$, we get the following inequality based on the second-order Taylor's expansion:
\begin{equation}
f(y)=f(x)+\nabla f(x)^\top w+\tfrac12 w^\top \nabla^2 f(x)\,w\;+\;R_3,
\end{equation}
where 
\begin{equation}
|R_3|\ \le\ \frac{M}{6}\,\|w\|^3
\end{equation}
when $y$ and $x$ are close enough. Since the on the direction $w$, we have the following inequality, we get:
\begin{equation}
    w^\top \nabla^2 f(x) w \ge \kappa \Vert  w\Vert^2.
\end{equation}
Thus, 
\begin{equation}
    \frac{M}{6}\Vert  w\Vert^3 \le \frac{1}{2}\kappa \Vert w \Vert^2 \iff \Vert w \Vert \le \frac{3\kappa}{M}.
\end{equation}
Finally, we get Eq. (12) in our main content:
\begin{equation}
    f(y) \ge f(x) + [\nabla f(x)]^\top [y-x] +\frac{1}{2}[y-x]^\top\nabla^2 f(x)[y-x]. 
\end{equation}
\fi
\subsection{Derivation of~\Cref{eq:klDivExpansionResult}}
Let us recall the expression for $\boldsymbol{T}(z)$:
\begin{equation}\label{eq:transportMap0}
      \boldsymbol{T}(z)=z+\varepsilon\, v(z),\quad \mathcal{Q}_{\boldsymbol{T}}:=\boldsymbol{T}_{\varepsilon\#}\mathcal{Q}.
\end{equation}
Notably, the $\varepsilon$ can be treated as time $t$. Based on this, we know that the the evolution of the probability density function satisfies the continuity equation:
\begin{equation}\label{eq:continutityEquation2333}
      \frac{\partial \mathcal{Q}_{\varepsilon}(z)}{\partial \varepsilon}
  =-\nabla\!\cdot\!\big(\mathcal{Q}_\varepsilon(z)\,v(z)\big),\qquad \mathcal{Q}_0(z)=\mathcal{Q}(z).
\end{equation}
Based on Eqs.~\eqref{eq:transportMap0} and~\eqref{eq:continutityEquation2333}, we know that:
\begin{equation}
    \mathcal{Q}_\varepsilon(z)=\mathcal{Q}(z)-\varepsilon\nabla\cdot(\mathcal{Q}(z)v(z)) + \mathcal{O}(\varepsilon^2).
\end{equation}
% \begin{equation}
%       \mathcal F(Q):=\mathrm{D}_{\mathrm{KL}}\!\big(Q\,\|\,\mathcal P\big)
%   =\int Q(z)\log\frac{Q(z)}{\mathcal P(z)}\,dz,
% \end{equation}
Taking the functional derivative of $\mathbb{D}_{\rm{KL}}[\mathcal{Q}_\varepsilon(z)\Vert \mathcal{P}(z\vert\mathcal{D})]$ with-respect-to $\mathcal{Q}_\varepsilon(z)$, we get:
\begin{equation}
\begin{aligned}
      & \frac{\mathrm{d}}{\mathrm{d}\varepsilon}\mathbb{D}_{\rm{KL}}[\mathcal{Q}_\varepsilon(z)\Vert \mathcal{P}(z\vert\mathcal{D})]\\
    = & \frac{\mathrm{d}}{\mathrm{d}\varepsilon}\int{\mathcal{Q}_\varepsilon(z)\log{\frac{\mathcal{Q}_\varepsilon(z)}{\mathcal{P}(z\vert\mathcal{D})}}\mathrm{d}z}\\
    =& \int{ \frac{\partial \mathcal{Q}_\varepsilon(z)}{\partial \varepsilon}\log{\frac{\mathcal{Q}_\varepsilon(z)}{\mathcal{P}(z\vert\mathcal{D})}} \mathrm{d}z } + \int{\mathcal{Q}_\varepsilon(z)\frac{\partial }{\partial \varepsilon}} \log{\frac{\mathcal{Q}_\varepsilon(z)}{\mathcal{P}(z\vert\mathcal{D})}} \mathrm{d}z\\
  = & \int{ \frac{\partial \mathcal{Q}_\varepsilon(z)}{\partial \varepsilon}\log{\frac{\mathcal{Q}_\varepsilon(z)}{\mathcal{P}(z\vert\mathcal{D})}} \mathrm{d}z } + \int{\cancel{\mathcal{Q}_\varepsilon(z) }\frac{\cancel{\mathcal{P}(z\vert\mathcal{D})}}{\cancel{\mathcal{Q}_\varepsilon(z)}}\frac{1}{\cancel{\mathcal{P}(z\vert\mathcal{D})}}\frac{\partial }{\partial \varepsilon}\mathcal{Q}_\varepsilon(z) \mathrm{d}z} \\
  = & \int{\frac{\partial \mathcal{Q}_\varepsilon(z)}{\partial\varepsilon}(\log{\frac{\mathcal{Q}_\varepsilon(z)}{\mathcal{P}(z\vert\mathcal{D})}}+1)\mathrm{d}z} \\
  = &\int{[-\nabla\cdot(\mathcal{Q}_\varepsilon(z)v(z))](\log{\frac{\mathcal{Q}_\varepsilon(z)}{\mathcal{P}(z\vert\mathcal{D})}}+1)\mathrm{d}z}  \\
\overset{\text{(i)}}{=} & \int{[(\mathcal{Q}_\varepsilon(z)v(z))]^\top\nabla(\log{\frac{\mathcal{Q}_\varepsilon(z)}{\mathcal{P}(z\vert\mathcal{D})}}+1)\mathrm{d}z} \\
= & \int{\mathcal{Q}_\varepsilon(z)v^\top(z) \nabla \frac{\delta \mathbb{D}_{\rm{KL}}[\mathcal{Q}_\varepsilon(z)\Vert \mathcal{P}(z\vert\mathcal{D})]}{\delta \mathcal{Q}_\varepsilon(z) } \mathrm{d}z} ,
 %  =& \int \Big\langle Q(z)\,v(z),\,\nabla\frac{\delta \mathcal F(Q)}{\delta Q(z)}\Big\rangle dz,
\end{aligned}
\end{equation}
where `(i)' is based on the integration-by-parts. 
Consequently, $\mathbb{D}_{\rm{KL}}[\mathcal{Q}_{\boldsymbol{T}}(z)\Vert\mathcal{P}(z\vert\mathcal{D})]$ can be expanded as follows when $\varepsilon\to0 $:
\begin{equation}\label{eq:klDivergenceResult2}
 \mathbb{D}_{\rm{KL}}[\mathcal{Q}_{\boldsymbol{T}}(z)\Vert\mathcal{P}(z\vert\mathcal{D})] =  \mathbb{D}_{\rm{KL}}[\mathcal{Q}(z)\Vert\mathcal{P}(z\vert\mathcal{D})] + \varepsilon \int{\mathcal{Q}_\varepsilon(z)v^\top(z) \nabla \frac{\delta \mathbb{D}_{\rm{KL}}[\mathcal{Q}_\varepsilon(z)\Vert \mathcal{P}(z\vert\mathcal{D})]}{\delta \mathcal{Q}_\varepsilon(z) } \mathrm{d}z} .
\end{equation}

For the 2-Wasserstein distance, we get:
\begin{equation}\label{eq:nonOptimalTransportResult}
    \mathcal{W}_2^2(\mathcal{Q}_{\boldsymbol{T}}(z),\mathcal{Q}(z)) = \int{\mathcal{Q}(z)\Vert z - \boldsymbol{T}^*(z)\Vert_2^2\mathrm{d}z} = \varepsilon^2 \int{\mathcal{Q}(z)\Vert  {v}^*(z)\Vert_2^2\mathrm{d}z} \le \varepsilon^2 \int{\mathcal{Q}(z)\Vert  {v}(z)\Vert_2^2\mathrm{d}z}  ,
\end{equation}
where $\boldsymbol{T}^*(z)$ and $v^*(z)$ are the optimal transportation map and optimal velocity field. Since $v(z)$ is not the optimal velocity filed, we obtain the last inequality. Based on Eqs.~\eqref{eq:nonOptimalTransportResult} and~\eqref{eq:klDivergenceResult2}, we finally reach the derivation of Eq. (19) in the main content:
\begin{equation}% \label{eq:klDivExpansionResult}
    \begin{aligned}
       & \mathbb{D}_{\rm{KL}}[\mathcal{Q}_{\boldsymbol{T}}(z)\Vert \mathcal{P}(z|\mathcal{D})] + \frac{1}{2\varepsilon} \mathcal{W}_2^2(\mathcal{Q}_{\boldsymbol{T}}(z),\mathcal{Q}(z))-  \mathbb{D}_{\rm{KL}}[\mathcal{Q}(z)\Vert \mathcal{P}(z|\mathcal{D})] \\
     \le  & \cancel{ \mathbb{D}_{\rm{KL}}[\mathcal{Q}(z)\Vert \mathcal{P}(z|\mathcal{D})] }+ \dfrac{\varepsilon}{2}\mathbb{E}_ \mathcal{Q}(z)[\Vert{v(z)}\Vert^2_2 ] + \varepsilon\int \cdot[{\mathcal{Q}(z) v^\top(z)}\nabla\dfrac{\delta \mathbb{D}_{\rm{KL}}[\mathcal{Q}(z)\Vert \mathcal{P}(z|\mathcal{D})]}{\delta \mathcal{Q}(z)}]\mathrm{d}z
     -  \cancel{\mathbb{D}_{\rm{KL}}[\mathcal{Q}(z)\Vert \mathcal{P}(z|\mathcal{D})]}
     \\
     \le & \dfrac{\varepsilon}{2}\underbrace{\mathbb{E}_{\mathcal{Q}(z)}[\Vert \nabla\dfrac{\delta \mathbb{D}_{\rm{KL}}[\mathcal{Q}(z)\Vert \mathcal{P}(z|\mathcal{D})]}{\delta \mathcal{Q}(z)}]
     \Vert^2_2 ]}_{\ge 0} +\dfrac{\varepsilon}{2}\mathbb{E}_{\mathcal{Q}(z)}[\Vert{v(z)}\Vert^2_2 ]  + \varepsilon\int [{\mathcal{Q}(z) v^\top(z)} \nabla\dfrac{\delta \mathbb{D}_{\rm{KL}}[\mathcal{Q}(z)\Vert \mathcal{P}(z|\mathcal{D})]}{\delta \mathcal{Q}(z)}]\mathrm{d}z\\
      = & \frac{\varepsilon}{2}\mathbb{E}_{\mathcal{Q}(z)}\{\Vert v(z) + \nabla\frac{\delta \mathbb{D}_{\rm{KL}}[\mathcal{Q}(z)\Vert\mathcal{P}(z\vert\mathcal{D})]}{\delta \mathcal{Q}(z)} \Vert_2^2 \}.
\end{aligned}
\end{equation}

Notably, similar results can be obtained from the ELBO. To better support this statement, we have the following relationship between the KL divergence and ELBO:
\begin{equation}
   \text{ELBO}[\mathcal{Q}(z)] = \log{\mathcal{P}(\mathcal{D})} - \mathbb{D}_{\rm{KL}}[\mathcal{Q}(z)\Vert \mathcal{P}(z|\mathcal{D})] .
\end{equation}
Thus, for the velocity field that maximizes the ELBO, we have the following result:
\begin{equation}
   \frac{\delta   \text{ELBO}[\mathcal{Q}(z)]}{\delta \mathcal{Q}(z)} = -  \frac{\delta \mathbb{D}_{\rm{KL}}[\mathcal{Q}(z)\Vert \mathcal{P}(z|\mathcal{D})]}{\delta \mathcal{Q}(z)}.
\end{equation}
On this basis,~\Cref{eq:klDivergenceResult2} can be reformulated as follows:
\begin{equation}
\begin{aligned}
   & \text{ELBO}[\mathcal{Q}_{\boldsymbol{T}}(z)] \\
    =   &\text{ELBO}[\mathcal{Q}(z)] + \varepsilon \int{\mathcal{Q}_\varepsilon(z)v^\top(z) \nabla \frac{\delta \text{ELBO}[\mathcal{Q}_{\varepsilon}(z)]}{\delta \mathcal{Q}_\varepsilon(z) }\mathrm{d}z} \\
    = &\text{ELBO}[\mathcal{Q}(z)] - \varepsilon \int{\mathcal{Q}_\varepsilon(z)v^\top(z) \nabla \frac{\delta \mathbb{D}_{\rm{KL}}[\mathcal{Q}_\varepsilon(z)\Vert \mathcal{P}(z\vert\mathcal{D})]}{\delta \mathcal{Q}_\varepsilon(z) }\mathrm{d}z} .
\end{aligned}
\end{equation}
Consequently, we arrive at a similar inequality governing the ascent direction:
\begin{equation}%\tag{S20}
    \begin{aligned}
        \text{ELBO}[\mathcal{Q}_{\boldsymbol{T}}(z)] - \frac{1}{2\varepsilon} \mathcal{W}_2^2(\mathcal{Q}_{\boldsymbol{T}}(z),\mathcal{Q}(z)) -  \text{ELBO}[\mathcal{Q}(z)]
   \ge - \frac{\varepsilon}{2}\mathbb{E}_{\mathcal{Q}(z)}\{\Vert v(z) + \nabla\frac{\delta \mathbb{D}_{\rm{KL}}[\mathcal{Q}(z)\Vert\mathcal{P}(z\vert\mathcal{D})]}{\delta \mathcal{Q}(z)} \Vert_2^2 \}.
    \end{aligned}
\end{equation}
% Consequently, whether we optimize the ELBO or equivalently minimize the KL divergence, we obtain the same velocity field.
This confirms that maximizing the ELBO with a Wasserstein penalty yields the same optimal velocity field $v^*(z) = \nabla \frac{\delta \text{ELBO}}{\delta \mathcal{Q}}$ (which equals $-\nabla \frac{\delta \text{KL}}{\delta \mathcal{Q}}$) as minimizing the KL divergence.
% As such, similar result can be obtained:
% \begin{equation}
%     \begin{aligned}
%         \text{ELBO}[\mathcal{Q}_{\boldsymbol{T}}(z)] + \frac{1}{2\varepsilon} \mathcal{W}_2^2(\mathcal{Q}_{\boldsymbol{T}}(z),\mathcal{Q}(z)) -  \text{ELBO}[\mathcal{Q}(z)]
%    \ge  \frac{\varepsilon}{2}\mathbb{E}_{\mathcal{Q}(z)}\{\Vert v(z) + \nabla\frac{\delta \mathbb{D}_{\rm{KL}}[\mathcal{Q}(z)\Vert\mathcal{P}(z\vert\mathcal{D})]}{\delta \mathcal{Q}(z)} \Vert_2^2 \}.
%     \end{aligned}
% \end{equation}
% Consequently, whether we optimize the ELBO or equivalently minimize the KL divergence, we obtain the same velocity field.

\color{black}
\subsection{Proof of~\Cref{thm:ConvergenceControlEpsilon}}
\begin{lemma*}[2] 
Suppose that $ \Vert v(z)\Vert\le \mathscr{A}  $ and $ \Vert \nabla v(z)\Vert\le \mathscr{B}  $. 
Let $\{ \mathcal{Q}_t(z)\}_{t=1}^{\mathrm{T}}$ denote the sequence of variational distributions generated by the KProx algorithm. Then, when $\varepsilon=\frac{1}{\sqrt{\mathrm{T}}} $, the following inequality holds:
\begin{equation}\tag{32}% \label{eq:convergenceTheorem}
\begin{aligned}
\lim_{\mathrm{T}\to\infty}\mathbb{D}_{\rm}[\mathcal{Q}_\mathrm{T}(z)\Vert\mathcal{P}(z\vert\mathcal{D})] = 0
 % & \dfrac{1}{{\mathrm{T}}}\sum_{t=1}^{\mathrm{T}}\mathbb{D}_{\rm{KL}}[\mathcal{Q}_t(z)\Vert \mathcal{P}(z|x)]
 % %   \le &  \frac{1}{\sqrt{\mathrm{T}}}\{ \mathbb{D}_{\rm{KL}}[\mathcal{Q}_{t+\varepsilon}(z)\Vert \mathcal{P}(z|x)]- \mathbb{D}_{\rm{KL}}[\mathcal{Q}_{t}(z)\Vert \mathcal{P}(z|x)]\} + \frac{2\mathcal{C}}{\sqrt{\mathrm{T}}}\\
 %    \leq
 %    \mathcal{O}(\frac{1}{\sqrt{\mathrm{T}}}).
\end{aligned}   
\end{equation}
\end{lemma*}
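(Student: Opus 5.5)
The plan is a descent-lemma argument in Wasserstein space, then telescoping with $\varepsilon = 1/\sqrt{\mathrm{T}}$, then an upgrade from vanishing averaged relative Fisher information to vanishing KL.

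\textbf{One-step expansion.} First fix $t$ and write the KProx update as $\boldsymbol{T}_t(z)=z+\varepsilon v_t(z)$ with
\[
v_t(z)=\nabla\log\mathcal{P}(z\vert\mathcal{D})-h_{\text{RKHS}}(z).
\]
Using the first-variation computation already carried out in the derivation of \Cref{eq:klDivExpansionResult}, expand $\mathbb{D}_{\rm KL}[\mathcal{Q}_{t+1}\Vert\mathcal{P}(z\vert\mathcal{D})]$ to second order in $\varepsilon$. The first-order term is
\[
-\varepsilon\,\mathbb{E}_{\mathcal{Q}_t}\big[v_t^\top\nabla\tfrac{\delta \mathbb{D}_{\rm KL}}{\delta\mathcal{Q}_t}\big].
\]
For the remainder, use the change-of-variables formula $\log\mathcal{Q}_{t+1}(\boldsymbol{T}_t z)=\log\mathcal{Q}_t(z)-\log\det(I+\varepsilon\nabla v_t)$. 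The bounds $\Vert v\Vert\le\mathscr{A}$ and $\Vert\nabla v\Vert\le\mathscr{B}$ control the $\varepsilon^2$ terms. The log-determinant contributes about $\mathscr{B}^2\varepsilon^2$, and the potential term contributes about $\mathscr{A}^2\varepsilon^2$ times a bound on $\nabla^2\log\mathcal{P}$; I would add a mild smoothness assumption on the posterior here if needed. Together these give a constant $\mathcal{C}$ with remainder at most $\mathcal{C}\varepsilon^2$, reproducing \Cref{eq:expansionKPlus1Time}.

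\textbf{Descent inequality.} Next, relate the first-order term to the relative Fisher information
\[
\mathcal{I}_t:=\mathbb{E}_{\mathcal{Q}_t}\Vert\nabla\log\mathcal{Q}_t-\nabla\log\mathcal{P}\Vert^2 .
\]
Since $h_{\text{RKHS}}$ is the RKHS-regularized minimizer of \Cref{eq:SimilarityobjKernelFunction}, the integration-by-parts identity \Cref{eq:unconstraintOptimizationProblem} shows that $h_{\text{RKHS}}$ reproduces $\nabla\log\mathcal{Q}_t$ when tested against kernel directions. This justifies the weak-approximation step \Cref{eq:innerProductInequality}. I would make that step quantitative by assuming an approximation factor: the inner product of $v_t$ with $-\nabla\frac{\delta\mathbb{D}_{\rm KL}}{\delta\mathcal{Q}_t}$ is at least $c\,\mathcal{I}_t$ for some $c\in(0,1]$, a kernel-Stein-type compatibility condition. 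This gives \Cref{eq:inequalityResult2}, namely
\[
\mathbb{D}_{t+1}\le\mathbb{D}_t-c\varepsilon\,\mathcal{I}_t+\mathcal{C}\varepsilon^2 .
\]

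\textbf{Telescoping and the rate.} Sum from $t=0$ to $\mathrm{T}-1$, use $\mathbb{D}_{\mathrm{T}}\ge0$, and divide by $\varepsilon\mathrm{T}=\sqrt{\mathrm{T}}$. This yields
\[
\frac{1}{\mathrm{T}}\sum_t\mathcal{I}_t\le\frac{\mathbb{D}_0+\mathcal{C}}{c\sqrt{\mathrm{T}}}\to0 .
\]
The same inequality shows that $\mathbb{D}_t$ is nonincreasing up to $\mathcal{C}\varepsilon^2$ per step. The accumulated error over the final $\mathrm{T}$ steps is $\mathcal{C}\mathrm{T}\varepsilon^2=\mathcal{C}$, which is not small. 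To handle this I would use the last-iterate refinement: vanishing average $\mathcal{I}_t$ with bounded increments means $\mathcal{I}_t$ is small for most $t$ near the end.

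\textbf{Main obstacle: from Fisher information to KL.} Passing from $\mathcal{I}_t\to0$ to $\mathbb{D}_{\rm KL}[\mathcal{Q}_{\mathrm{T}}\Vert\mathcal{P}]\to0$ is the hardest step. The paper's route of integrating the score identity is heuristic. I would instead invoke a log-Sobolev inequality for $\mathcal{P}(z\vert\mathcal{D})$ with constant $\lambda$, so that $\mathbb{D}_{\rm KL}\le\mathcal{I}/(2\lambda)$. This holds, for instance, for a Gaussian prior combined with a bounded-perturbation likelihood, by the Holley–Stroock theorem, which also covers the bimodal mixture example. With this inequality the recursion becomes
\[
\mathbb{D}_{t+1}\le(1-2c\lambda\varepsilon)\mathbb{D}_t+\mathcal{C}\varepsilon^2 ,
\]
whose solution is bounded by
\[
e^{-2c\lambda\varepsilon\mathrm{T}}\,\mathbb{D}_0+\frac{\mathcal{C}\varepsilon}{2c\lambda}
=e^{-2c\lambda\sqrt{\mathrm{T}}}\,\mathbb{D}_0+\mathcal{O}\big(1/\sqrt{\mathrm{T}}\big)\to0 .
\]
This gives \Cref{eq:convergenceTheorem}. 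Justifying the constants $c$ (the RKHS approximation quality) and $\lambda$ (the log-Sobolev inequality) is where I expect the real difficulty; I would state both explicitly as assumptions.
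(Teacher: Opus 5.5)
Your argument is valid under the hypotheses you add. Up to the telescoping step it is the paper's proof made explicit. The supplement likewise bounds the $\mathcal{O}(\varepsilon^2)$ remainder by terms in $\mathbb{E}\Vert v\Vert^2$, $\mathbb{E}\Vert\nabla v\Vert_F^2$ and $\sup(-\nabla^2\log\mathcal{P}(z\vert\mathcal{D}))$, so your smoothness assumption is implicit there too. It also swaps $h_{\text{RKHS}}$ for $\nabla\log\mathcal{Q}_t$ in the pairing through an unquantified ``$\simeq$'', which your constant $c$ makes precise.

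The routes split at the end. The paper stops at $\frac{1}{\mathrm{T}}\sum_t\mathcal{I}_t=\mathcal{O}(1/\sqrt{\mathrm{T}})$ and infers $\nabla\log\mathcal{Q}_{\mathrm{T}}-\nabla\log\mathcal{P}(z\vert\mathcal{D})\to0$. It then integrates $\nabla\mathscr{E}=0$ along paths in $\mathbb{R}^{\mathrm{D}_{\rm LV}}$ to get $\log\mathcal{Q}_{\mathrm{T}}-\log\mathcal{P}(z\vert\mathcal{D})=\text{const}$, and concludes by normalization. That route needs no functional inequality on the posterior, but its key inference does not follow, for three reasons. A vanishing Ces\`aro average does not control the last iterate; your observation that the accumulated error $\mathcal{C}\mathrm{T}\varepsilon^2=\mathcal{C}$ stays of order one is exactly this obstruction. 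Smallness in $L^2(\mathcal{Q}_t)$ is not pointwise vanishing of the score difference. And the whole trajectory changes with $\mathrm{T}$ through $\varepsilon=1/\sqrt{\mathrm{T}}$.

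Your log-Sobolev step converts relative Fisher information into KL at every iterate, so descent becomes contraction. This gives the explicit last-iterate bound $e^{-2c\lambda\sqrt{\mathrm{T}}}\mathbb{D}_0+\mathcal{C}/(2c\lambda\sqrt{\mathrm{T}})$, uniform over the $\mathrm{T}$-dependent trajectories. The cost is a hypothesis absent from the statement, and $\lambda$ can be exponentially small for well-separated modes.

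Three further remarks.

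\textbf{Drop the refinement paragraph.} Once the LSI is in place, your ``last-iterate refinement'' paragraph is unnecessary. As written it is also unsupported, since nothing bounds the increments of $\mathcal{I}_t$.

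\textbf{Fix the sign of the first-order term.} The supplement's computation gives it as $+\varepsilon\,\mathbb{E}_{\mathcal{Q}_t}[v_t^\top\nabla\frac{\delta\mathbb{D}_{\rm KL}}{\delta\mathcal{Q}_t}]$. With this sign, your condition $\mathbb{E}_{\mathcal{Q}_t}[v_t^\top(-\nabla\frac{\delta\mathbb{D}_{\rm KL}}{\delta\mathcal{Q}_t})]\ge c\,\mathcal{I}_t$ yields descent. The minus sign you copied from the main text would only give a lower bound on $\mathbb{D}_{t+1}$.

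\textbf{The constant $c$ carries all the substance.} It cannot hold uniformly for the estimator the algorithm actually uses. For the RBF kernel, $\nabla_{z'}\mathcal{K}(z',z)=-\nabla_z\mathcal{K}(z',z)$. Hence $h_{\text{RKHS}}(z)=\nabla_z\mathbb{E}_{\mathcal{Q}_t(z')}[\mathcal{K}(z',z)]$ is the gradient of a smoothed density with $\Vert h_{\text{RKHS}}\Vert\le e^{-1/2}$, not a score.
\begin{itemize}
\item Take a Gaussian posterior. Then $v=\nabla\log\mathcal{P}-h_{\text{RKHS}}$ evaluated at $\mathcal{Q}_t=\mathcal{P}$ is a nonzero inward radial field. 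So $\mathcal{P}$ is not even a stationary point of the mean-field dynamics.
\item Near the target, the cross term $\mathbb{E}_{\mathcal{Q}_t}[(\nabla\log\mathcal{Q}_t-h_{\text{RKHS}})^\top(\nabla\log\mathcal{P}-\nabla\log\mathcal{Q}_t)]$ is generically of size $\sqrt{\mathcal{I}_t}$ and can be negative. It therefore cannot be absorbed into $(1-c)\mathcal{I}_t$.
\end{itemize}
Your proof therefore covers an idealized scheme, not the kernel update as written. A sufficient condition, by Cauchy--Schwarz, is a score estimate satisfying $\Vert\nabla\log\mathcal{Q}_t-h\Vert_{L^2(\mathcal{Q}_t)}\le(1-c)\sqrt{\mathcal{I}_t}$. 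The paper's ``$\simeq$'' hides the same restriction.
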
 

\begin{proof}
The KL divergence at $t+1$ has the following relationship between the KL divergence at $t$:
\begin{equation}% \label{eq:expansionKPlus1Time}
\begin{aligned}
& \mathbb{D}_{\rm{KL}}[\mathcal{Q}_{t+1}(z)\Vert \mathcal{P}(z|\mathcal{D})]= \mathbb{D}_{\rm{KL}}[\mathcal{Q}_{t}(z)\Vert \mathcal{P}(z|\mathcal{D})] -\varepsilon \mathbb{E}_{\mathcal{Q}_t(z)}\{v^\top(z) \nabla\frac{\delta \mathbb{D}_{\rm{KL}}[\mathcal{Q}_t(z)\Vert \mathcal{P}(z\vert\mathcal{D})]}{\delta \mathcal{Q}_t(z)} \} + \mathcal{O}(\varepsilon^2).
%\\
%  \le & \mathbb{D}_{\rm{KL}}[\mathcal{Q}_{t}(z)\Vert \mathcal{P}(z|\mathcal{D})] \\
% & -\varepsilon \mathbb{E}_{\mathcal{Q}_t(z)}\{v^\top(z) \nabla\frac{\delta \mathbb{D}_{\rm{KL}}[\mathcal{Q}_t(z)\Vert \mathcal{P}(z\vert\mathcal{D})]}{\delta \mathcal{Q}_t(z)} \} + \mathcal{C}\varepsilon^2.
\end{aligned}
\end{equation}
The $\mathcal{O}(\varepsilon^2)$ contains the following terms:
\begin{equation}
\{ C_1\, \mathbb E_q\|v\|^2 + C_2\, \mathbb E_q\|\nabla v\|_F^2 + C_3\, [\sup(-\nabla^2\log{\mathcal{P}(z\vert\mathcal{D})})]\, \mathbb E_q\|v\|^2\}\cdot \varepsilon^2,
\end{equation}
where $C_1$, $C_2$, and $C_3$ are constant. Since we assume that $ \Vert v(z)\Vert\le \mathscr{A}  $ and $ \Vert \nabla v(z)\Vert\le \mathscr{B}  $, we know that the $\mathcal{O}(\varepsilon^2)$ is bounded by the following inequality: 
\begin{equation}
    \mathcal{O}(\varepsilon^2)\le \mathcal{C}\varepsilon^2,
\end{equation}
where $\mathcal{C}$ is a positive constant. In addition, since $h_{\text{RKHS}}(z)$ is the weak mode of convergence for $\nabla\log{\mathcal{Q}_t(z)}$, we get:
\begin{equation}% \label{eq:innerProductInequality}
\begin{aligned}
  &  %\int{
 \mathbb{E}_{\mathcal{Q}_t(z)}\{[ \nabla\frac{\delta \mathbb{D}_{\rm{KL}}[\mathcal{Q}_t(z)\Vert \mathcal{P}(z\vert\mathcal{D})]}{\delta \mathcal{Q}_t(z)}]^\top [ \nabla\frac{\delta \mathbb{D}_{\rm{KL}}[\mathcal{Q}_t(z)\Vert \mathcal{P}(z\vert\mathcal{D})]}{\delta \mathcal{Q}_t(z)}]\}
  %\mathrm{d}z} 
  \simeq   \mathbb{E}_{\mathcal{Q}_t(z)}\{[ \nabla\frac{\delta \mathbb{D}_{\rm{KL}}[\mathcal{Q}_t(z)\Vert \mathcal{P}(z\vert\mathcal{D})]}{\delta \mathcal{Q}_t(z)}]^\top [ \nabla\log{\mathcal{P}(z\vert\mathcal{D})} - h_{\text{RKHS}}(z)]\}
%\mathrm{d}z}
.
\end{aligned}
\end{equation}
% $v^\top\nabla\frac{\delta \mathbb{D}_{\rm{KL}}[\mathcal{Q}_t(z)\Vert \mathcal{P}(z\vert\mathcal{D})]}{\delta \mathcal{Q}_t(z)}$ 
% since we restrict the $h(z)$ within RKHS, we get the following inequality from the perspective of inner product similarity:
% \begin{equation}\label{eq:innerProductInequality}
% \begin{aligned}
%   &  [ \nabla\frac{\delta \mathbb{D}_{\rm{KL}}[\mathcal{Q}_t(z)\Vert \mathcal{P}(z\vert\mathcal{D})]}{\delta \mathcal{Q}_t(z)}]^\top [ \nabla\frac{\delta \mathbb{D}_{\rm{KL}}[\mathcal{Q}_t(z)\Vert \mathcal{P}(z\vert\mathcal{D})]}{\delta \mathcal{Q}_t(z)}] \\
% \ge  & [ \nabla\frac{\delta \mathbb{D}_{\rm{KL}}[\mathcal{Q}_t(z)\Vert \mathcal{P}(z\vert\mathcal{D})]}{\delta \mathcal{Q}_t(z)}]^\top [ \nabla\log{\mathcal{P}(z\vert\mathcal{D})} - h_{\text{RKHS}}(z)].
% \end{aligned}
% \end{equation}
Plugging~\Cref{eq:innerProductInequality} into~\Cref{eq:expansionKPlus1Time}, the following result can be obtained:
\begin{equation}% \label{eq:inequalityResult2}
\begin{aligned}
 &   \mathbb{D}_{\rm{KL}}[\mathcal{Q}_{t+1}(z)\Vert \mathcal{P}(z|\mathcal{D})] \le  \mathbb{D}_{\rm{KL}}[\mathcal{Q}_{t}(z)\Vert \mathcal{P}(z|\mathcal{D})]   
  %  &-\varepsilon \mathbb{E}_{\mathcal{Q}_t(z)}\{v^\top(z) \nabla\frac{\delta \mathbb{D}_{\rm{KL}}[\mathcal{Q}_t(z)\Vert \mathcal{P}(z\vert\mathcal{D})]}{\delta \mathcal{Q}_t(z)} \} + \mathcal{C}\varepsilon^2\\
 -\varepsilon \mathbb{E}_{\mathcal{Q}_t(z)}\{\Vert  \nabla\frac{\delta \mathbb{D}_{\rm{KL}}[\mathcal{Q}_t(z)\Vert \mathcal{P}(z\vert\mathcal{D})]}{\delta \mathcal{Q}_t(z)} \Vert_2^2 \} + \mathcal{C}\varepsilon^2\\
 \Rightarrow & \\
 &  \mathbb{D}_{\rm{KL}}[\mathcal{Q}_{t+1}(z)\Vert \mathcal{P}(z|\mathcal{D})] -\mathbb{D}_{\rm{KL}}[\mathcal{Q}_{t}(z)\Vert \mathcal{P}(z|\mathcal{D})]   \le 
  %  &-\varepsilon \mathbb{E}_{\mathcal{Q}_t(z)}\{v^\top(z) \nabla\frac{\delta \mathbb{D}_{\rm{KL}}[\mathcal{Q}_t(z)\Vert \mathcal{P}(z\vert\mathcal{D})]}{\delta \mathcal{Q}_t(z)} \} + \mathcal{C}\varepsilon^2\\
 -\varepsilon \mathbb{E}_{\mathcal{Q}_t(z)}\{\Vert  \nabla\frac{\delta \mathbb{D}_{\rm{KL}}[\mathcal{Q}_t(z)\Vert \mathcal{P}(z\vert\mathcal{D})]}{\delta \mathcal{Q}_t(z)} \Vert_2^2 \} + \mathcal{C}\varepsilon^2.
\end{aligned}
\end{equation}
Cascading~\Cref{eq:inequalityResult2} from $t=1$ to $\mathrm{T}$, we get:
\begin{equation}
\begin{aligned}
   & \mathbb{D}_{\rm{KL}}[\mathcal{Q}_{\mathrm{T}}(z)\Vert \mathcal{P}(z|\mathcal{D})]  -  \mathbb{D}_{\rm{KL}}[\mathcal{Q}_{0}(z)\Vert \mathcal{P}(z|\mathcal{D})]\le -\varepsilon \sum_{t=1}^{\mathrm{T}}{\mathbb{E}_{\mathcal{Q}_t(z)}\{\Vert  \nabla\frac{\delta \mathbb{D}_{\rm{KL}}[\mathcal{Q}_t(z)\Vert \mathcal{P}(z\vert\mathcal{D})]}{\delta \mathcal{Q}_t(z)} \Vert_2^2 \} } + 2\mathrm{T}\mathcal{C}\varepsilon^2\\
    \Rightarrow &  \varepsilon \sum_{t=1}^{\mathrm{T}}{\mathbb{E}_{\mathcal{Q}_t(z)}\{\Vert  \nabla\frac{\delta \mathbb{D}_{\rm{KL}}[\mathcal{Q}_t(z)\Vert \mathcal{P}(z\vert\mathcal{D})]}{\delta \mathcal{Q}_t(z)} \Vert_2^2 \} }\le 
    \mathbb{D}_{\rm{KL}}[\mathcal{Q}_{0}(z)\Vert \mathcal{P}(z|\mathcal{D})] -  \mathbb{D}_{\rm{KL}}[\mathcal{Q}_{\mathrm{T}}(z)\Vert \mathcal{P}(z|\mathcal{D})] 
    + 2\mathrm{T}\mathcal{C}\varepsilon^2
    \\
        \Rightarrow & \frac{1}{\mathrm{T}} \sum_{t=1}^{\mathrm{T}}{\mathbb{E}_{\mathcal{Q}_t(z)}\{\Vert  \nabla\frac{\delta \mathbb{D}_{\rm{KL}}[\mathcal{Q}_t(z)\Vert \mathcal{P}(z\vert\mathcal{D})]}{\delta \mathcal{Q}_t(z)} \Vert_2^2 \} }\le 
    \frac{\mathbb{D}_{\rm{KL}}[\mathcal{Q}_{0}(z)\Vert \mathcal{P}(z|\mathcal{D})] -  \mathbb{D}_{\rm{KL}}[\mathcal{Q}_{\mathrm{T}}(z)\Vert \mathcal{P}(z|\mathcal{D})] 
   }{\mathrm{T}\varepsilon} + 2\mathcal{C}\varepsilon
    \\
 \overset{\varepsilon = \frac{1}{\sqrt{\mathrm{T}}}}{\Rightarrow} & \frac{1}{\mathrm{T}}\sum_{t=1}^{\mathrm{T}}{\mathbb{E}_{\mathcal{Q}_t(z)}\{\Vert  \nabla\frac{\delta \mathbb{D}_{\rm{KL}}[\mathcal{Q}_t(z)\Vert \mathcal{P}(z\vert\mathcal{D})]}{\delta \mathcal{Q}_t(z)} \Vert_2^2 \} }   \le \underbrace{\frac{\mathbb{D}_{\rm{KL}}[\mathcal{Q}_0(z)\Vert \mathcal{P}(z\vert\mathcal{D})] - \mathbb{D}_{\rm{KL}}[\mathcal{Q}_{\mathrm{T}}(z)\Vert \mathcal{P}(z\vert\mathcal{D})]}{\sqrt{\mathrm{T}}} + \frac{2\mathcal{C}}{\sqrt{\mathrm{T}}}}_{\mathcal{O}(\frac{1}{\sqrt{\mathrm{T
    }}})}. %\\
    %& .
\end{aligned}
\end{equation}
Consequently, we get the following inequality:
\begin{equation}
    0\le\frac{1}{\mathrm{T}}\sum_{t=1}^{\mathrm{T}}{\mathbb{E}_{\mathcal{Q}_t(z)}\{\Vert  \nabla\frac{\delta \mathbb{D}_{\rm{KL}}[\mathcal{Q}_t(z)\Vert \mathcal{P}(z\vert\mathcal{D})]}{\delta \mathcal{Q}_t(z)} \Vert_2^2 \} }   \le {\frac{\mathbb{D}_{\rm{KL}}[\mathcal{Q}_0(z)\Vert \mathcal{P}(z\vert\mathcal{D})] - \mathbb{D}_{\rm{KL}}[\mathcal{Q}_{\mathrm{T}}(z)\Vert \mathcal{P}(z\vert\mathcal{D})]}{\sqrt{\mathrm{T}}} + \frac{2\mathcal{C}}{\sqrt{\mathrm{T}}}},%_{\mathcal{O}(\frac{1}{\sqrt{\mathrm{T
   % }}})} 
\end{equation}
which indicates that:
\begin{equation}
  \lim_{\mathrm{T}\to\infty}{\frac{1}{\mathrm{T}}\sum_{t=1}^{\mathrm{T}}{\mathbb{E}_{\mathcal{Q}_t(z)}\{\Vert  \nabla\frac{\delta \mathbb{D}_{\rm{KL}}[\mathcal{Q}_t(z)\Vert \mathcal{P}(z\vert\mathcal{D})]}{\delta \mathcal{Q}_t(z)} \Vert_2^2 \} }} = 0 \Rightarrow  \lim_{\mathrm{T}\to\infty}  \nabla \log{\mathcal{Q}(z)} -\nabla\log{\mathcal{P}(z\vert\mathcal{D})} = 0,
\end{equation}

Based on this, we define the error function $\mathscr{E}(z) \coloneqq \log\mathcal{Q}(z) - \log\mathcal{P}(z\vert \mathcal{D}
)$ and obtain the following result:
\begin{equation}
\nabla \mathscr{E}(z) = \nabla f(z) - \nabla g(z) = 0.
\end{equation}
So the gradient of $\mathscr{E}(z)$ is zero everywhere on $\mathbb{R}^{\mathrm{D}_{\rm{LV}}}$. This implies that along any smooth path $\upgamma(t)$ contained in $\mathbb{R}^{\mathrm{D}_{\rm{LV}}}$, we have the following result:
\begin{equation}
\frac{\mathrm{d}}{\mathrm{d}t} \mathscr{E}(\upgamma(t)) 
= [\nabla \mathscr{E}(\upgamma(t))]^\top [\frac{\mathrm{d}\upgamma(t)}{\mathrm{d}t}]
= 0.
\end{equation}

Therefore $ \mathscr{E}(\upgamma(t))$ is constant along the path. Since the region is connected, any two points can be joined by such a path, so $ \mathscr{E}(z)$ must be the same constant $\mathcal{C}$ throughout $\mathbb{R}^{\mathrm{D}_{\rm{LV}}}$:
\begin{equation}
\mathscr{E}(z) = \mathcal{C} \quad \Rightarrow \quad  \log\mathcal{Q}(z) -  \log\mathcal{P}(z\vert\mathcal{D})= \mathcal{C}.
\end{equation}
Hence, we have the following result:\color{black}
\begin{equation}
    \mathcal{Q}_{\mathrm{T}}(z) = \mathcal{C}\mathcal{P}(z\vert\mathcal{D})~~\text{when}~~\mathrm{T}\to\infty \Rightarrow \lim_{\mathrm{T}\to\infty}\mathcal{Q}_{\mathrm{T}}(z)\propto\mathcal{P}(z\vert\mathcal{D}).
\end{equation}
Since $\lim_{\mathrm{T}\to\infty}\mathcal{Q}_{\mathrm{T}}(z)\propto\mathcal{P}(z\vert\mathcal{D})$, we arrive at the desired result defined by~\Cref{eq:convergenceTheorem}.
\end{proof}

\section{Additional Theoretical and Empirical Discussions}
\subsection{Discussions of Wasserstein Proximal Recursion Scheme from Bias and Variance}
\noindent\textbf{Discussions on the ``bias'':}
 Let $\mathcal{P}(z\vert\mathcal{D})$ be a Boltzmann distribution form: $\mathcal{P}(z\vert\mathcal{D})\propto e^{-V(z)}$, and consider the energy functional as follows:
\begin{equation}
\mathbb{D}_{\rm{KL}}[\mathcal{Q}(z)\|\mathcal{P}(z\vert\mathcal{D})]=\int \mathcal{Q}(z)\log\frac{\mathcal{Q}(z)}{\mathcal{P}(z\vert\mathcal{D})}\mathrm{d}z
=\int\mathcal{Q}(z)\log\mathcal{Q}(z)\mathrm{d}z+\int V(z)\mathcal{Q}(z)\mathrm{d}z+\text{const}.
\end{equation}
The Wasserstein proximal recursion is
\begin{equation}\label{eq:proximalError}
\mathcal{Q}_{k+1}
=\mathop{\arg\min}_{\mathcal{Q}\in\mathcal P_2}\Big\{
\mathbb{D}_{\rm{KL}}[\mathcal{Q}\|\mathcal{P}]+\frac{1}{2\varepsilon}\mathcal{W}_2^2(\mathcal{Q},\mathcal{Q}_k)
\Big\}.
\end{equation}
The Wasserstein term constrains the next iterate to stay close to $\mathcal{Q}_k$ in transport distance, improving stability.

As $\varepsilon\to 0$, the discrete scheme~\Cref{eq:proximalError} converges to the exactly the following partial differential equation:
\begin{equation}
\frac{\partial \mathcal{Q}_t(z)}{\partial t}
=\nabla\cdot[\mathcal{Q}_t(z) \nabla V(z)]+\nabla\cdot\nabla\mathcal{Q}_t(z).
\end{equation}
Hence, for any fixed $\varepsilon>0$, we have:
\begin{enumerate}%[leftmargin=*]
    \item{Follows a time-discretized trajectory rather than the exact continuous partial differential equation.}
    \item{This is the deterministic discretization bias, typically shrinking as $\varepsilon\to 0$.}
\end{enumerate}
As such, we have the following conclusions:
\begin{itemize}%[leftmargin=*]
    \item{Larger $\varepsilon$: stronger proximal effect $\Rightarrow$ more stable iterates, but larger discretization bias.}
 \item{Smaller $\varepsilon$: smaller bias, but usually needs more outer steps and may be less stable in practice.}
\end{itemize}

Notably, the term ``bias'' is algorithmic approximation error to the continuous differential equation, not statistical bias.

\noindent\textbf{Discussions on the ``variance'':} The ideal update~\Cref{eq:proximalError} is deterministic. Variance comes from approximating $\mathcal{Q}_t(z)$. Specifically, $\mathcal{Q}_k$ is represented by $\ell$ particles, $\mathcal{Q}_t(z)\approx \frac{1}{\ell}\sum_{i=1}^\ell \delta_{z_{i,t}}$, then both the velocity filed computation and the OT computation involve finite-sample noise $\Rightarrow$ variance decreases with larger $\ell$.

\subsection{Key Differences and Advantages with Previous NPLVMs}

Comparing previous NPLVMs with the proposed KProxNPLVM, the main differences can be summarized as follows:
\begin{enumerate}% [leftmargin=*]

\item \textbf{Assumption on $\mathcal{Q}(z)$ (theoretical property):}
Prior works typically restrict the approximate posterior $\mathcal{Q}(z)$ to a predefined parametric family of normalized distributions. As shown in Lemma~1, this restriction can induce an intrinsic approximation error floor, which may limit the achievable performance in downstream soft sensor modeling.
In contrast, KProx optimizes $\mathcal{Q}(z)$ over the Wasserstein space $\mathscr{P}_2(\mathrm{D}_{\rm{LV}})$ (i.e., the set of distributions with finite second moment), substantially enlarging the feasible set for posterior approximation.
Under mild conditions, our analysis (Theorem~2) establishes the convergence of the resulting proximal updates, supporting improved approximation capability and more reliable latent inference.

\item \textbf{Training procedure (algorithmic formulation):}
Most existing methods optimize the inference network parameters $\varphi$ and the generative model parameters $\theta$ jointly by maximizing the standard ELBO.
KProxNPLVM instead adopts a sequential/proximal training strategy: it first refines the latent posterior via a Wasserstein proximal step and then updates $\theta$ and $\varphi$ accordingly.
This decoupling mitigates error propagation from inaccurate latent inference to network training, and empirically leads to more stable optimization and better predictive performance.

\item \textbf{Relevance to soft sensor modeling (application scenario):}
Soft sensor modeling often involves multi-modal latent factors. By allowing a more flexible posterior family and improving inference stability through the Wasserstein proximal regularization, KProxNPLVM yields more robust latent representations, which translate into improved prediction accuracy and stability in soft sensor tasks (see the added empirical comparisons).
\end{enumerate}

\subsection{Practical deployment in industrial environments}
% We thank the reviewer for encouraging a clearer discussion of deployment beyond offline evaluation. In the revised manuscript/supplementary material, we add the following practical notes on (i) online/near-real-time feasibility and (ii) retraining under non-stationarity.
In this subsection, we provide related discussions of deployment for the proposed KProxNPLVM beyond offline evaluation.
\begin{enumerate}
    \item{\textbf{Online feasibility:} After training, inference for a new sample only requires a forward pass through the encoder and the predictor; thus the per-sample latency is comparable to standard AVI-based NPLVM baselines. The Wasserstein-proximal updates are performed during training and do not introduce additional computation at test time. Therefore, the proposed method is feasible for online or near-real-time deployment, provided that the offline training/update budget is acceptable for the empirical real industrial application.}
    \item{\textbf{Retraining frequency under non-stationary processes:} In practice, we consider two commonly used modes:
\begin{itemize}% [leftmargin=*]
    \item \textbf{Periodic retraining:} For example, daily or weekly training when the process exhibits slow drift.
    \item \textbf{Triggered retraining:} For example, we require a triggered retraining when the drift is detected explictly, for example, a sustained increase in prediction residuals or a distribution-shift statistic.
\end{itemize}
As a rule of thumb, retraining can be initiated when a monitoring metric exceeds a pre-defined threshold for $\mathcal{K}$ consecutive windows, or when performance degrades by more than $\updelta\%$ relative to a recent baseline. The appropriate retraining policy depends on the drift rate, sensor maintenance schedule, and labeling availability in the plant.
    
    }
\end{enumerate}
%     \item{\textbf{}}
% \end{itemize}

\section{Detailed Experimental Information}\label{sec:DatasetDescription}
\subsection{Dataset Descriptions}
In this section, the background of the selected three datasets namely debutanizer column (DBC), Carbon-Dioxide Absorber Column (CAC), and Catalysis Shift Conversion (CSC) unit are delineated in detailed. 
% We select two datasets namely debutanizer column (DC) and catalytic shift conversion (CSC) as delineated in Fig~~\ref{TotalFlowsheet} (a) and (b), respectively. Note that, the DC dataset is a benchmark dataset. More details about the two datasets and the symbols in Fig.~\ref{TotalFlowsheet} are given in Supplementary Material. The basic flowsheet of our two datasets are given in Fig.~\ref{TotalFlowsheet} (a) and (b), respectively.

\subsubsection{DBC}
Fig. \ref{DCFlowsheet} presents the flowsheet of the DBC, a benchmark dataset~\cite{fortuna2005soft}.
The DBC is required to maximize the pentane (C5) content in the overheads distillate and minimize the butane (C4) content in the bottom flow simultaneously. To measure the butane concentration from the bottom flow in real-time for the sake of improving downstream product quality, seven covariates marked in red zone in Fig. \ref{DCFlowsheet} are chosen for soft sensor modeling. The detailed descriptions of the covariates are given in Table \ref{ProcessVariablesDebutanizer}. 
\begin{figure}[!h]
  \centering
  \includegraphics[scale=0.32]{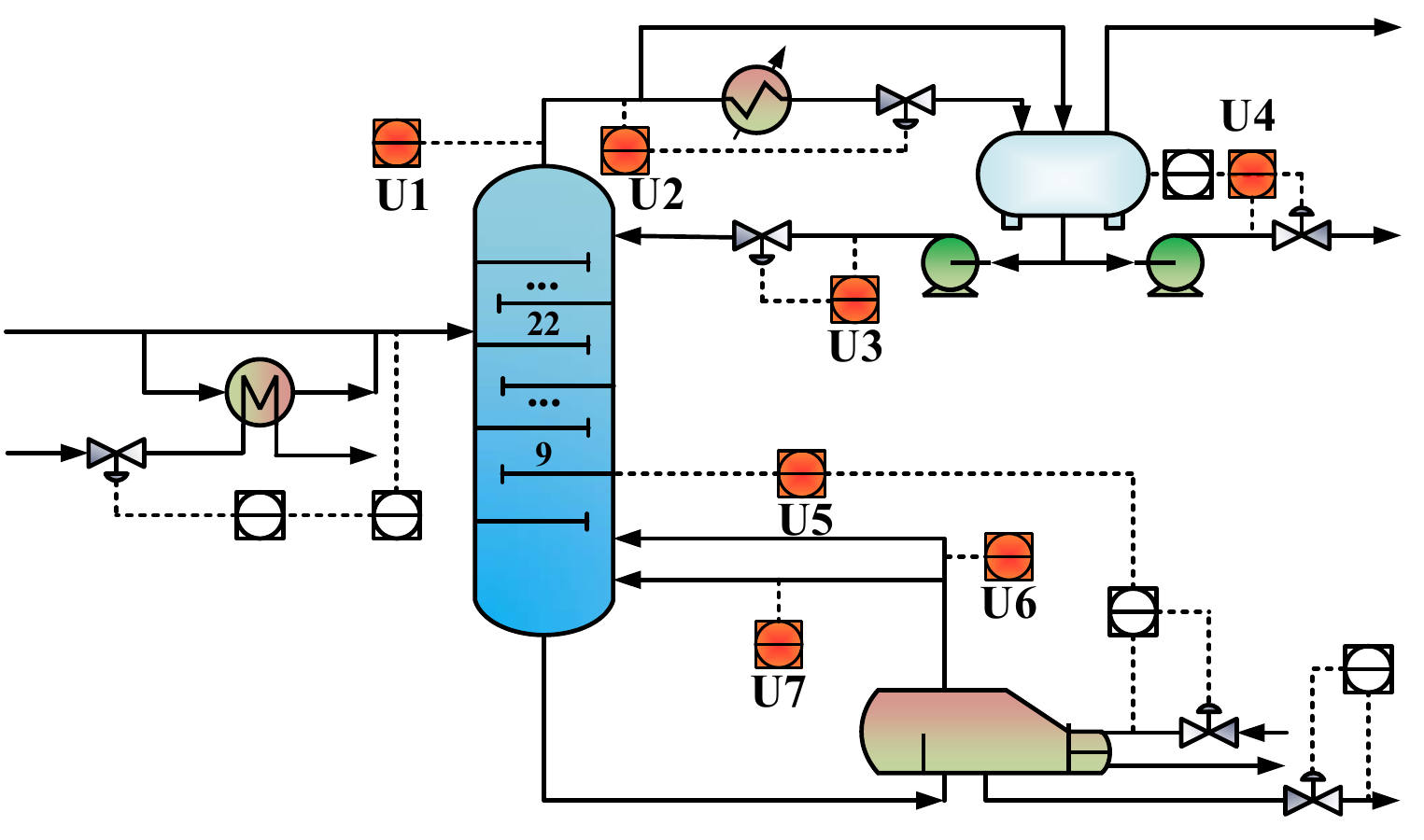}
  \caption{The flowsheet of DBC}
  \label{DCFlowsheet}
\end{figure}
\begin{table}[!h]
  % \vspace{-0.5cm}
  \caption{The process variables in the DBC}\label{ProcessVariablesDebutanizer}
  % \vspace{-0.2cm}
  \centering 
  \begin{tabular}{lllll}
\toprule
  Process variables    &      Unit          &  Description   &  \\ \midrule
  U1                   &   \textcelsius        &  Top temperature    &  \\
  U2              &    $\textrm{kg}\cdot\textrm{cm}^{-2}$        & Top pressure   &  \\
  U3           &   $\textrm{m}^3\cdot\textrm{h}^{-1}$               & Reflux flow rate   &  \\
U4             &  $\textrm{m}^3\cdot\textrm{h}^{-1}$            &  Top distillate rate   &  \\
  U5              & \textcelsius      & Temperature of the 9th tray &  \\
 U6              &  \textcelsius       &  Bottom temperature A &  \\
U7        &    \textcelsius       & Bottom temperature B   &  \\ \bottomrule
  \end{tabular}
  \vspace{-0.0cm}
\end{table}

Based on~\cite{fortuna2005soft}, we extend the process variables and quality variable into 13 dimensions based on the following equation with the consideration of process time-delay:
\begin{equation}\label{DCFortuna}
{\left[ \begin{aligned}
  &\text{U1}(t),\text{U2}(t),\text{U3}(t),\text{U4}(t),\text{U5}(t),\text{U5}(t - 1), \text{U5}(t - 2),\text{U5}(t - 3),{{\left( {\text{U1}(t) + \text{U2}(t)} \right)} \mathord{\left/
 {\vphantom {{\left( {\text{U1}(t) + \text{U2}(t)} \right)} 2}} \right.
 \kern-\nulldelimiterspace} 2}, \hfill  \text Y(t - 1),\text Y(t - 2),\text Y(t - 3),\text Y(t - 4) \hfill \\ 
\end{aligned}  \right]^\top}.
\end{equation}

\subsubsection{CAC}

\indent Fig. \ref{CO2Absorber} presents the flowsheet of the CAC~\cite{shen2020nonlinear}. 
The CAC is a vital equipment in ammonia synthesis process to handle the carbon-dioxide by-product 
in the hydrogen from upstream unit. The sodium hydroxide solvent is chosen to 
be absorption liquid and the corresponding chemical reaction can be given in \eqref{CO2Reaction}:
\begin{equation}
  % \left\{ \begin{array}{l}
  %   \text{CO}_2 + 2\text{KOH} = \text{K}_2\text{CO}_3 + \text{H}_2\text{O}\\
  %   \text{K}_2\text{CO}_3 + \text{H}_2\text{O} +\text{CO}_2 = 2\text{KHCO}_3
  %   \end{array} . \right.\
  \left\{ {\begin{array}{*{20}{c}}
    { \text{CO}_2 + 2\text{KOH} = \text{K}_2\text{CO}_3 + \text{H}_2\text{O}}\\
    { \text{K}_2\text{CO}_3 + \text{H}_2\text{O} +\text{CO}_2 = 2\text{KHCO}_3}
    \end{array}}. \right.
    \label{CO2Reaction}
\end{equation}
\indent To eliminate the carbon-dioxide concentration in the hydrogen stream for promoting the product quality 
in downstream urea synthesis process, the carbon-dioxide concentration in the outlet gas should be monitored in real time. 
However, the gas chromatography for carbon-dioxide concentration measurement has large time delay. 

\begin{figure}[htbp]
  \centering
  \includegraphics[width=0.3\textwidth]{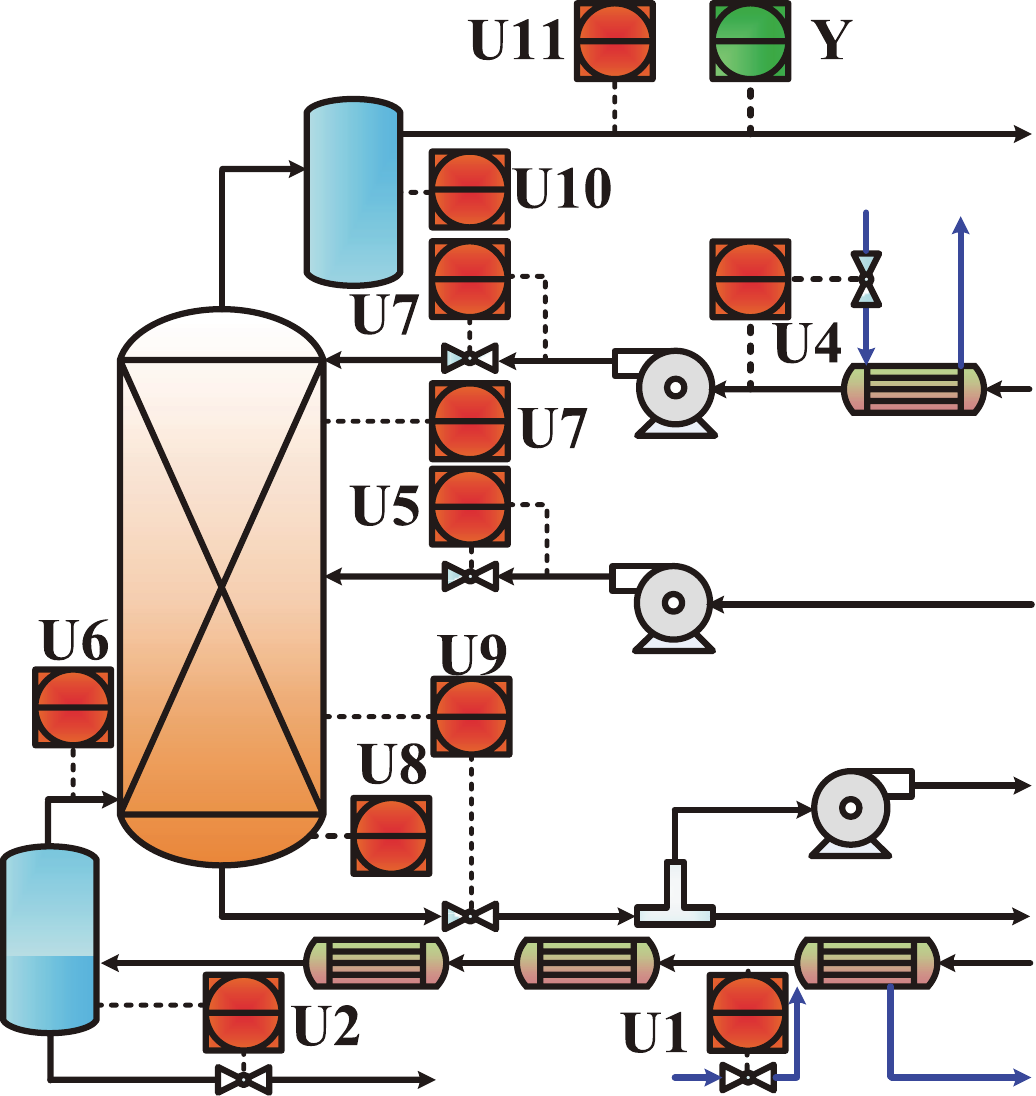}
  \caption{The flowsheet of CAC}
  \label{CO2Absorber}
\end{figure}

\indent To deal with measuring problem, and improve the control quality of the  carbon-dioxide absorber, 
inferential sensor models have been adopted to estimate the  carbon-dioxide concentration at the outlet stream of the absorber 
in real time. 
Even though the knowledge about the abosorber has been well studied from the perspective of unit operation, 
\textit{the rigorous process simulation is hard to construct due to the missing of thermodynamical electrolyte binary parameters}.
And thus, the data-driven model is the natural choice to construct the inferential sensor task.
For quality monitoring and control purposes, 
several hard sensors are installed on the plant to collect the samples for 
process variables, which can be used as secondary variables for inferential sensor. 
Eleven process variables marked in red zone are selected to construct the model. 
Table \ref{ProcessVariables} gives the detailed description of these variables. 
A total of 6,~000 samples are collected from the process. 

\begin{table}[htb]
  \centering
  \caption{Process variables for CAC}\label{ProcessVariables}
  \begin{tabular}{ll}
  \hline
  Input Variables & Description                          \\ \hline
  U1              & Pressure of inlet gas                \\
  U2              & Liquid level of buffer vessel        \\
  U3              & Temperature of inlet barren liquor   \\
  U4              & Flowrate of inlet lean solution      \\
  U5              & Flowrate of inlet semi-lean solution \\
  U6              & Temperature of inlet gas             \\
  U7              & Pressure drop of absorber            \\
  U8              & Temperature of rich solution         \\
  U9              & Liquid level of absorber             \\
  U10             & Liquid level of the separator        \\
  U11             & Pressure of outlet gas               \\
  Y               & $\mathrm{CO}_2$ concentration                    \\ \hline
  \end{tabular}
\end{table}

\subsubsection{CSC}
\begin{figure}[htbp]
  \centering
  \includegraphics[scale=0.32]{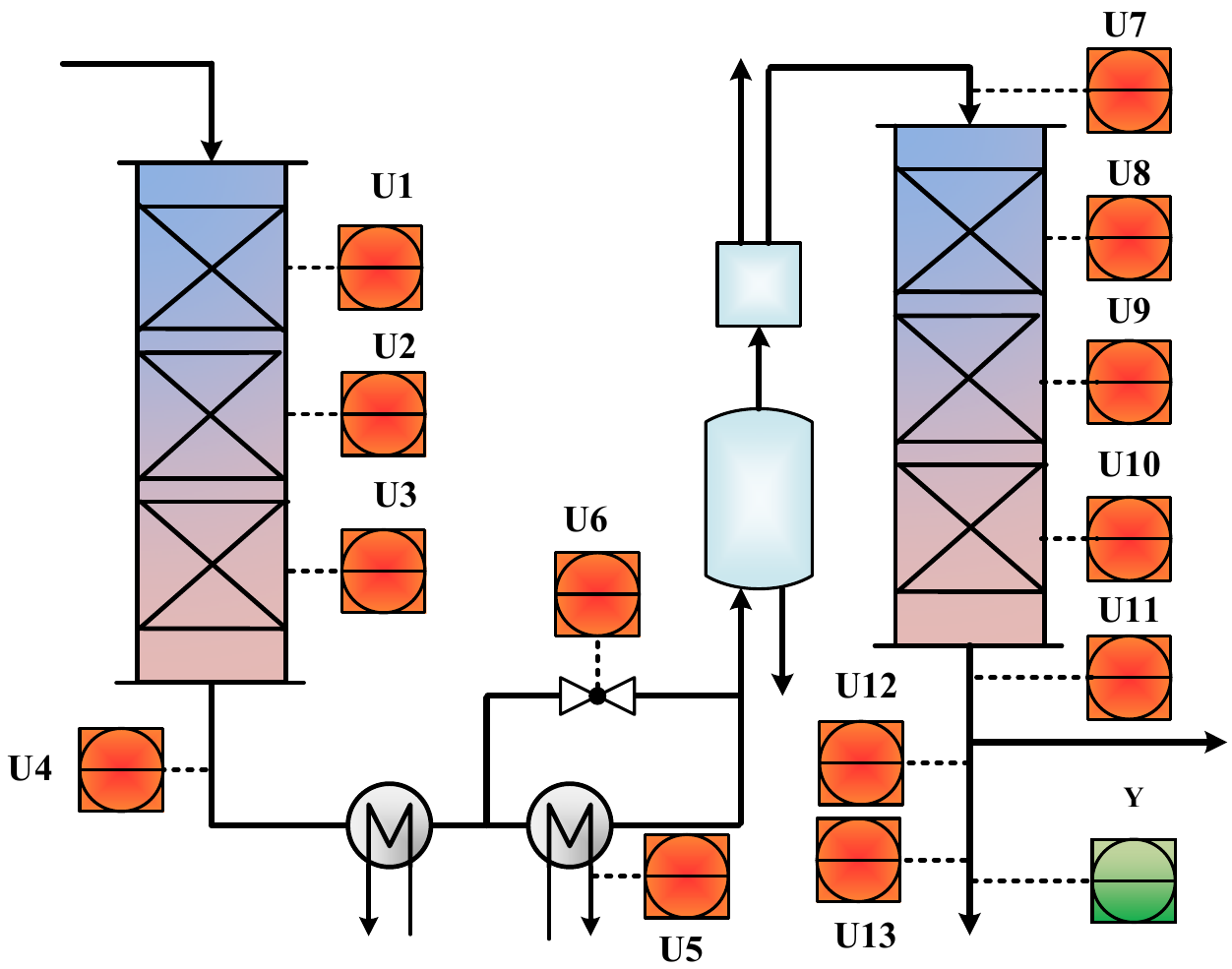}
  \caption{The flowsheet of CSC}
  \label{TotalFlowsheet}
\end{figure}

Fig. \ref{TotalFlowsheet} presents the flowsheet of CSC in an ammonia synthesis process~\cite{8894374}. The following heterogeneous catalytic reaction will take place in the fixed-bed reactors connected in series:
\begin{equation}\label{ReactionFormula}
    \textrm{CO} + {\textrm{H}_2}\textrm{O} \leftrightarrow \textrm{CO}_2 + {\textrm{H}_2}
    % \Delta \textrm{H}_{298\mathrm{K}}^0 = -41.4 \mathrm{kJ}\cdot\mathrm{{mol}}^{-1}
    .
\end{equation}

To estimate the carbon monoxide concentration marked in the green zone in real-time for the sake of meeting the technology requirement of carbon hydrogen ratio, 
  several hard sensors are installed on the section to collect process variables in real time.
  Thirteen covariates marked in the red zone in Fig. \ref{TotalFlowsheet} are chosen for inferential sensor modeling, and the detailed descriptions are given in Table \ref{CSCProcessVariables}. %(to align with Table \ref{ProcessVariablesDebutanizer}, we relabel the covariates with prefix ``U").

\begin{table}[htbp]
  % \vspace{-0.3cm}
  \caption{The process variables in the CSC process}
  % \vspace{-0.2cm}
  \centering 
  % \resizebox{\columnwidth}{!}{
\begin{threeparttable}
  \begin{tabular}{llll}
 \toprule
  Process variables & Description                  &  \\ \midrule
  U1                & High temperature bed temperature 1            &  \\
  U2                & High temperature bed temperature 2               &  \\
  U3                & High temperature bed temperature 3                  &  \\
  U4                & Outlet temperature of high temperature bed         &  \\
  U5                & Outlet temperature of cooling water   &  \\
  U6                & Split-gas temperature          &  \\
  U7                & Inlet temperature of low temperature bed         &  \\ 
  U8               & Low temperature bed temperature 1          &  \\ 
  U9                & Low temperature bed temperature 2         &  \\ 
  U10               & Low temperature bed temperature 3         &  \\ 
  U11               & Outlet temperature of low temperature bed         &  \\ 
  U12               & Outlet pressure of low temperature bed         &  \\ 
  U13               & Product gas pressure         &  \\ 
  Y                 & Carbon monoxide concentration         &  \\ \bottomrule
  \end{tabular}
\end{threeparttable}
% }
  \label{CSCProcessVariables}
  % \vspace{-0.3cm}
\end{table}  

% \newpage

\subsection{Detailed Information for Baseline Models}
\subsubsection{Baseline Models}
In this paper, the following baseline models are chosen to demonstrate the superiority of the proposed KProxNPLVM:
\begin{itemize}% [leftmargin=*]
    \item{\textbf{NPLVMs:} Supervised Nonlinear Probabilistic Latent Variable Regression (SNPLVR)~\cite{shen2020nonlinear}, Deep Bayesian Probabilistic Slow Feature Analysis (DBPSFA)~\cite{9625835}, Modified Unsupervised VAE-Supervised Deep VAE (MUDVAE-SDVAE)~\cite{xie2019supervised}, and Gaussian Mixture-Variational Autoencoder (GMM-VAE)~\cite{guo2021just}  .  }
    \item{\textbf{Non-NPLVMs:} Gated-Stacked Target-Related Autoencoder (GSTAE)~\cite{9174659}, Variable-wise weighted stacked autoencoder (VW-SAE)~\cite{8302941},  Deep Learning model with Dynamic Graph (DGDL)~\cite{9929274},   and iTransformer~\cite{liu2024itransformer}. }
  %  \item{\textbf{Non-PLVMs:} invert-Transformer (i-Transformer),  }
\end{itemize}

All experiments are conducted on a workstation equipped with an Intel Xeon E5 processor, 1 Nvidia GTX 3090 GPUs, and 64 GB of RAM. To maintain consistency and fairness across evaluations, the Adam optimizer, as detailed by~\cite{kingma2014adam}, is employed uniformly across all experimental runs. The model training and inference processes are carried out using Python 3.8 and PyTorch 1.13~\cite{TorchNips}. For all datasets, the data are sorted in ascending order by timestamp. On this basis, the first 60\% of the data is selected for training, the first 60\%$\sim$80\% of the data is selected for validation, and the rest of the data is selected for testing.

\subsubsection{Reasons for Baseline Models}\label{subsubsec:ReasonsForBaseline}
To demonstrate the effectiveness of the proposed KProxNPLVM, several NPLVMs designed for industrial inferential sensor modeling—specifically, SNPLVR, DBPSFA, MUDVAE-SDVAE, and GMM-VAE—are selected as baseline models. Additionally, to highlight the bottlenecks of current NPLVM-based inferential sensor modeling, other non-NPLVM models that have also been applied in industrial inferential sensor modeling—represented by DGDL, VW-SAE, GSTAE, and iTransformer—are adopted. Notably, \textit{iTransformer is the state-of-the-art model published in \emph{The Twelfth International Conference on Learning Representations} (ICLR-2024) for predictive-oriented application scenarios as exemplified by time-series forecasting.}

\subsubsection{Hyperparameter Settings}\label{subsubsec:HyperParamsSetting}
\begin{table}[htbp]
\caption{Hyper-parameter Settings for Baseline Comparison Experiment}\label{tab:hyperparams}
% \vspace{-0.3cm}
\centering
\begin{tabular}{l|cc|cc|cc}
\toprule
Dataset           & \multicolumn{2}{c|}{CAC} & \multicolumn{2}{c|}{DCB} & \multicolumn{2}{c}{CSC} \\ \midrule
Hyper-parameters  & $\mathcal{B}$  & $\eta$ & $\mathcal{B}$ & $\eta$ & $\mathcal{B}$ & $\eta$ \\ \midrule
SNPLVR            & 128             & 0.01   & 128             & 0.01      & 128             & 0.01    \\
DBPSFA            & 128             & 0.05  & 128             & 0.05   & 128             & 0.05     \\
MUDVAE-SDVAE      & 32             & 0.005   & 32             & 0.005    & 32             & 0.005     \\
%PDTM              & 128            & 0.01   & 128           & 0.01    &  &     \\

GMM-VAE           & 128             & 0.0001  & 128             & 0.0001  & 128             & 0.0001    \\
GSTAE             & 64             & 0.01   & 64            & 0.01    & 64            & 0.01     \\
VW-SAE            & 64             & 0.01   & 64            & 0.01    & 64            & 0.01     \\
DGDL              & 128            & 0.001  & 128            & 0.001   & 128            & 0.001   \\
iTransformer      & 64            & 0.001  & 64           & 0.001   & 64           & 0.001    \\
% UniFilter         & 32             & 0.01   & 32            & 0.01  &  &       \\
KProxNPLVM        & 128            & 0.01   & 128            & 0.01    & 128            & 0.01    \\ 
\bottomrule
\end{tabular}
\end{table}
For fairness, the multi-layer-perceptron for label prediction is set as $[10, 7, 5]$ based on references~\cite{shen2020nonlinear}. On this basis, the particle number $\ell$, discretization step-size $\varepsilon$, entropy regularization strength $\upepsilon$, and simulation time $\mathrm{T}$ for KProxNPLVM are set as 10, 0.1, 0.05, and 200, respectively. The component of Gaussian mixture model for GMM-VAE is set as 5. The embedding dimension for iTransformer and DGDL are set as 8 and 16, respectively. Other parameters like learning rate $\eta$ and batch size $\mathcal{B}$ for the baseline models and KProxNPLVM are listed in~\Cref{tab:hyperparams}. {All models are trained for 200 epochs, and we evaluate them on the validation set throughout training. For reporting results, we select the checkpoint that achieves the best validation performance. During optimization, we use Adam with the default settings provided by PyTorch backend~\cite{TorchNips}. We do not apply gradient clipping.}

\subsubsection{Evaluation Metrics}
To evaluate model performance, the metrics $\textrm{RMSE}$, $\textrm{R}^2$, $\textrm{MAE}$, and $\textrm{MAPE}$ are utilized as described in \Cref{eq:metrics}. 
% To evaluate the model performance, 
% the $\textrm{RMSE}$, $\textrm{R}^{\text{2}}$, $\textrm{MAE}$, and $\textrm{MAPE}$ in \Cref{RMSE,R2,MAPE,MAE} are adopted, where the $\mathrm{N}_{\text{test}}$ and $\bar{y}$ are the size of the testing dataset and average value of label $y$, respectively. 
% For $\textrm{RMSE}$, $\textrm{MAE}$, and $\textrm{MAPE}$, the closer value is to 0, the more accurate the prediction. 
% While for $\textrm{R}^{\text{2}}$, the closer $\textrm{R}^{\text{2}}$ is to 1, the better the prediction performance of model.

\begin{equation}\label{eq:metrics}
    \begin{aligned}
        \textrm{RMSE} &= \sqrt {\tfrac{\sum\limits_{i = 1}^{\mathrm{N}_{\text{test}}} {{{({y_i} - {{\hat{y}}_i})}^2}}}{\mathrm{N}_{\text{test}}} } ,\\
        \textrm{R}^{\text{2}} &= 1 - \tfrac{{\sum\limits_{l = 1}^{\mathrm{N}_{\text{test}}} {{{({y_l} - {{\hat{y}}_l})}^2}} }}{{\sum\limits_{l = 1}^{\mathrm{N}_{\text{test}}} {{{({y_l} - \bar y)}^2}} }},\\
        \textrm{MAPE} &= \frac{1}{\mathrm{N}_{\text{test}}}\sum\limits_{l = 1}^{\mathrm{N}_{\text{test}}} {{{\vert \frac {({y_l} - {{\hat{y}}_l})}{{y_l}}\vert}}} \times 100\%,\\
\textrm{MAE} &= \frac{1}{\mathrm{N}_{\text{test}}}\sum\limits_{l = 1}^{\mathrm{N}_{\text{test}}} {{{\vert{y_l} - {{\hat{y}}_l}\vert}}},
    \end{aligned}
\end{equation}
where $\mathrm{N}_{\text{test}}$ represents the size of the testing dataset, and $\bar{y}$ is the average value of the label $y$. For $\textrm{RMSE}$, $\textrm{MAE}$, and $\textrm{MAPE}$, values closer to 0 indicate more accurate predictions. Conversely, for $\textrm{R}^{\text{2}}$, values closer to 1 signify better predictive performance of the model.

\section{Additional Experimental Results}\label{sec:additionalExperimentalResults}
\subsection{Standard Deviation Results of Baseline Comparison}

In this subsection, we further report the standard deviations of the baseline comparison results. As shown in~\Cref{tab:standardDeviationResults}, among all competing models, our proposed KProxNPLVM achieves top-three (i.e., among the three lowest) standard deviations across the evaluated metrics. This observation indicates the robustness of the proposed approach and further suggests that the KProx algorithm is less sensitive to different initializations.

\begin{table}[!h]

    \centering
    \caption{Standard Deviation Error for Baseline Model Comparison Experiment}\label{tab:standardDeviationResults}
   \resizebox{\linewidth}{!}{
        \begin{threeparttable}
\begin{tabular}{l|llll|llll|llll}\toprule \multirow{2}{*}{Model} & \multicolumn{4}{c|}{DBC}& \multicolumn{4}{c|}{CAC}& \multicolumn{4}{c}{CSC}\\ \cmidrule{2-13} & \multicolumn{1}{c}{$\text{R}^\text{2}$} & \multicolumn{1}{c}{RMSE} & \multicolumn{1}{c}{MAE} & \multicolumn{1}{c|}{MAPE} & \multicolumn{1}{c}{$\text{R}^\text{2}$} & \multicolumn{1}{c}{RMSE} & \multicolumn{1}{c}{MAE} & \multicolumn{1}{c|}{MAPE} & \multicolumn{1}{c}{$\text{R}^\text{2}$} & \multicolumn{1}{c}{RMSE} & \multicolumn{1}{c}{MAE} & \multicolumn{1}{c}{MAPE} \\ \midrule SNPLVR  & 1.39E-1 & 1.34E-2 & 1.10E-2 & 4.92E1 & 2.20E-2 & 7.06E-5 & 9.47E-5 & 3.92E-2 & 3.24E-1 & 7.85E-2 & 6.54E-2 & 3.35E-2 \\  DBPSFA  & 9.13E-3 & \textbf{1.07E-3} & \textbf{7.68E-4} & \textbf{1.90E0} & \textbf{4.12E-4} & \textbf{1.42E-6} & \textbf{1.40E-6} & \textbf{5.16E-4} & 2.46E1 & 3.80E-2 & 3.80E-2 & 1.93E-2 \\  MUDVAE-SDVAE  & 3.21E-2 & 3.24E-3 & 2.27E-3 & 2.42E0 & 4.57E-3 & \uwave{1.62E-5} & \uwave{1.38E-5} & \uwave{5.00E-3} & 5.39E-2 & 1.48E-2 & 1.28E-2 & 6.55E-3 \\  GMM-VAE  & 2.02E-2 & 3.98E-3 & 3.05E-3 & 9.84E0 & 1.30E-2 & 5.61E-5 & 5.01E-5 & 1.66E-2 & 1.84E-2 & 1.16E-2 & 9.65E-3 & 4.90E-3 \\  GSTAE  & \uwave{3.42E-3} & 1.92E-3 & 1.67E-3 & 5.44E0 & \uwave{3.05E-3} & 2.13E-5 & 3.35E-5 & 1.14E-2 & 2.09E-2 & 1.90E-2 & 1.42E-2 & 7.25E-3 \\  VW-SAE  & 7.85E-2 & 8.95E-3 & 9.32E-3 & 7.43E0 & 8.36E-2 & 3.93E-4 & 2.75E-4 & 9.35E-2 & 8.71E-2 & 5.11E-2 & 3.60E-2 & 1.83E-2 \\  DGDL  & 1.04E-2 & 7.87E-3 & 5.56E-3 & 3.54E0 & 1.39E-2 & 9.46E-5 & 6.58E-5 & 2.30E-2 & 5.01E-3 & 5.61E-3 & 4.58E-3 & 2.34E-3 \\  iTransformer  & 1.18E-2 & 1.05E-2 & 6.65E-3 & 3.53E1 & 1.61E-2 & 1.04E-4 & 8.13E-5 & 2.82E-2 & \uwave{3.55E-3} & \uwave{4.32E-3} & \uwave{2.93E-3} & \uwave{1.49E-3} \\  KProxNPLVM  & \textbf{6.06E-4} & \uwave{1.32E-3} & \uwave{1.01E-3} & \uwave{2.02E0} & 4.94E-3 & 3.55E-5 & 3.26E-5 & 1.04E-2 & \textbf{2.25E-3} & \textbf{2.76E-3} & \textbf{2.05E-3} & \textbf{1.04E-3} \\ \midrule Win Counts & 8  & 7  & 7  & 7  & 5  & 5  & 6  & 6  & 8  & 8  & 8  & 8  \\ \bottomrule\end{tabular}
\begin{tablenotes}
    \item{\textbf{Bolded} and \uwave{Wavy} results indicate first and second best in each metric. %\uwave{Wavy} results indicate the second best in each metric.
    }
    \end{tablenotes}
    \end{threeparttable}
}  
\end{table}

\subsection{Space and Time Complexity Analysis}
% In this subsection, we conduct the empirical space and time complexity of our proposed KProxNPLVM. Specifically, when we conduct the training of KProxNPLVM, we have the following two major steps that requires the complexity analysis. 
In this subsection, we empirically analyze the time and space complexity of the proposed KProxNPLVM. Training KProxNPLVM mainly involves two components, whose complexities are summarized below.
For the time complexity, we have the following analysis result:

For the time complexity, our theoretical results can be summarized as follows:
\begin{itemize}
\item \textbf{KProx for $\mathcal{P}(z\vert x)$ inference:}
We parameterize $\log \mathcal{P}(z\vert x)$ using a multi-layer perceptron (MLP). Let $\mathrm{L}$ denote the network depth and let $\mathrm{D}_{\text{MLP}}$ be the (maximum) hidden width. With $\mathrm{N}$ particles and latent dimension $\mathrm{D}_{\rm LV}$, the per-iteration time complexity consists of (i) computing particle-wise scores via backpropagation through the MLP and (ii) evaluating pairwise RBF-kernel interactions:
\begin{equation}
\mathcal{O}(\mathrm{N}\mathrm{L}\mathrm{D}_{\text{MLP}}^2)+\mathcal{O}(\mathrm{N}^2 \mathrm{D}_{\rm LV}).
\end{equation}
Therefore, after $\mathrm{T}$ KProx iterations, the total time complexity is given as follows:
\begin{equation}
\mathcal{O}(\mathrm{T}\mathrm{N}\mathrm{L}\mathrm{D}_{\text{MLP}}^2)+\mathcal{O}(\mathrm{T}\mathrm{N}^2 \mathrm{D}_{\rm LV}).
\end{equation}

\item \textbf{Sinkhorn--Knopp algorithm for inference-network training:}
For a problem of size $\ell$, each Sinkhorn--Knopp iteration is dominated by matrix--vector multiplications and costs
\begin{equation}
\mathcal{O}(\ell^2).
\end{equation}
Thus, running $\mathrm{T}$ iterations yields a total time complexity as follows:
\begin{equation}
\mathcal{O}(\mathrm{T}\ell^2).
\end{equation}
\end{itemize}

% \begin{itemize}
%     \item{\textbf{KProx for $\mathcal{P}(z\vert x)$ Inference:} Since we are parameterizing the $\log\mathcal{P}(z\vert x)$ with a multi-layer perceptron, we denote the layer as $\mathrm{L}$, the maximum dimension as $\mathrm{D}_{\text{MLP}}$. Then the computation procedure can be given as follows for each iteration time:
%     \begin{equation}
%         \mathcal{O}(\mathrm{N}\mathrm{L}\mathrm{D}_{\text{MLP}}^2) +  \mathcal{O}(\mathrm{N}^2 \mathrm{D}_{\rm{LV}}).
%     \end{equation}
% Finally, the total computation complexity can be summarized as follows:
% \begin{equation}
%         \mathcal{O}(\mathrm{T}\mathrm{N}\mathrm{L}\mathrm{D}_{\text{MLP}}^2) +  \mathcal{O}(\mathrm{T}\mathrm{N}^2 \mathrm{D}_{\rm{LV}}).
% \end{equation}
%     }
%     \item{\textbf{Sinkhorn-Knopp Algorithm for Inference Network Training:} For Sinkhorn-Knopp algorithm, for each iteration time, we have the following computational complexity:
%     \begin{equation}
%         \mathcal{O}(\ell^2).
%     \end{equation}
% On this basis, for $\mathrm{T}$ iter time, we have:
% \begin{equation}
%       \mathcal{O}(\mathrm{T}\ell^2).
% \end{equation}
%     }
% \end{itemize}

For the space complexity, our theoretical results can be summarized as follows:

\begin{itemize}
\item \textbf{KProx for $\mathcal{P}(z\vert x)$ inference:}
We only need to store the set of latent particles $\{z_{i,t}\}_{i=1}^{\mathrm{N}}$ at time $t$. Hence, the space complexity is given as follows:
\begin{equation}
\mathcal{O}(\mathrm{N}\mathrm{D}_{\rm LV}).
\end{equation}

\item \textbf{Sinkhorn--Knopp algorithm for inference-network training:}
We store the kernel/cost matrix $\mathscr{K}\in\mathbb{R}^{\ell\times \ell}$, resulting in the following result:
\begin{equation}
\mathcal{O}(\ell^2).
\end{equation}
\end{itemize}

On this basis,~\Cref{tab:empiricalSpatialTemporalComplexity} reports the training/testing time and GPU memory usage. Overall, KProxNPLVM consistently achieves the best or second-best runtime while keeping memory comparable to baselines, indicating favorable practical efficiency. From~\Cref{tab:empiricalSpatialTemporalComplexity}, we observe that the KProxNPLVM attains the fastest training on DBC (22.06s) and WGS (41.71s), and the second-fastest on CAC (78.60s). In particular, compared with the strongest runtime baseline iTransformer, KProxNPLVM reduces training time by 14.1\% on DBC (25.69 $\rightarrow$ 22.06) and 40.7\% on WGS (70.34 $\rightarrow$ 41.71). KProxNPLVM achieves the fastest inference on WGS (0.010s) and the second-fastest on DBC (0.004s) and CAC (0.015s). On WGS, it yields a 69.7\% reduction in test time over iTransformer (0.033 $\rightarrow$ 0.010). KProxNPLVM uses 549 MB across all datasets, which is on the same order as most baselines (typically 532–537 MB) and much lower than DBPSFA on CAC (579 MB) and especially CSC (579 MB vs 549 MB).  

\begin{table}[!h]

    \centering
    \caption{Empirical Spatial Temporal Complexity}
    \label{tab:empiricalSpatialTemporalComplexity}
 \resizebox{\linewidth}{!}{    \begin{threeparttable} \begin{tabular}{l|lll|lll|lll} \toprule \multicolumn{1}{c|}{\multirow{2}{*}{Model}} & \multicolumn{3}{c|}{DBC}                       & \multicolumn{3}{c|}{CAC}                       & \multicolumn{3}{c}{CSC}   \\ \cmidrule{2-10} \multicolumn{1}{c|}{}                       & Training (s) & Testing (s) & \multicolumn{1}{l|}{Memory (MB)} & Training (s) & Testing (s) & \multicolumn{1}{l|}{Memory (MB)} & Training (s) & Testing (s) & Memory (MB) \\ \midrule SNPLVR & 60.835$_{\pm \text{3.420E0}}$ & 0.016$_{\pm \text{5.355E-4}}$ & 535 & 144.920$_{\pm \text{7.871E0}}$ & 0.036$_{\pm \text{4.506E-4}}$ & 535 & 159.527$_{\pm \text{6.549E0}}$ & 0.041$_{\pm \text{9.631E-4}}$ & 535 \\DBPSFA & 86.948$_{\pm \text{6.550E-1}}$ & \textbf{0.003}$_{\pm \text{8.059E-4}}$ & 579 & 182.223$_{\pm \text{6.572E1}}$ & \textbf{0.003}$_{\pm \text{2.037E-3}}$ & 579 & 347.915$_{\pm \text{8.032E-1}}$ & 0.045$_{\pm \text{5.891E-2}}$ & 579 \\MUDVAE-SDVAE & 75.004$_{\pm \text{4.467E-1}}$ & 0.016$_{\pm \text{5.822E-4}}$ & 535 & 160.677$_{\pm \text{9.567E0}}$ & 0.038$_{\pm \text{8.933E-4}}$ & 535 & 181.304$_{\pm \text{6.022E0}}$ & 0.042$_{\pm \text{5.934E-4}}$ & 535 \\GMM-VAE & 49.870$_{\pm \text{3.536E0}}$ & 0.026$_{\pm \text{6.005E-4}}$ & 534 & 126.370$_{\pm \text{3.685E0}}$ & 0.062$_{\pm \text{8.111E-4}}$ & 534 & 143.606$_{\pm \text{6.681E0}}$ & 0.073$_{\pm \text{1.717E-3}}$ & 534 \\GSTAE & 63.405$_{\pm \text{1.247E0}}$ & 0.012$_{\pm \text{1.631E-4}}$ & 535 & 151.798$_{\pm \text{6.928E0}}$ & 0.030$_{\pm \text{2.865E-4}}$ & 535 & 163.265$_{\pm \text{2.299E0}}$ & 0.035$_{\pm \text{2.552E-4}}$ & 535 \\VW-SAE & 66.652$_{\pm \text{8.477E0}}$ & 0.007$_{\pm \text{5.407E-5}}$ & 535 & 136.883$_{\pm \text{1.816E1}}$ & 0.019$_{\pm \text{1.414E-3}}$ & 535 & 147.260$_{\pm \text{1.825E1}}$ & \uwave{0.022}$_{\pm \text{4.003E-4}}$ & 535 \\DGDL & 60.005$_{\pm \text{2.004E0}}$ & 0.028$_{\pm \text{1.744E-3}}$ & 532 & 139.450$_{\pm \text{6.691E0}}$ & 0.068$_{\pm \text{9.109E-4}}$ & 532 & 160.682$_{\pm \text{1.122E1}}$ & 0.079$_{\pm \text{4.515E-4}}$ & 532 \\iTransformer & \uwave{25.687}$_{\pm \text{9.319E-1}}$ & 0.011$_{\pm \text{6.706E-5}}$ & 537 & \textbf{61.714}$_{\pm \text{4.930E0}}$ & 0.028$_{\pm \text{2.495E-4}}$ & 537 & \uwave{70.341}$_{\pm \text{4.714E0}}$ & 0.033$_{\pm \text{1.215E-3}}$ & 537 \\KProxNPLVM & \textbf{22.060}$_{\pm \text{8.838E-1}}$ & \uwave{0.004}$_{\pm \text{1.917E-4}}$ & 549 & \uwave{78.601}$_{\pm \text{6.664E1}}$ & \uwave{0.015}$_{\pm \text{6.473E-3}}$ & 549 & \textbf{41.707}$_{\pm \text{1.200E1}}$ & \textbf{0.010}$_{\pm \text{1.826E-4}}$ & 549 \\ \bottomrule \end{tabular} 
 \begin{tablenotes}
    \item{\textbf{Bolded} and \uwave{Wavy} results indicate first and second best in each metric. %\uwave{Wavy} results indicate the second best in each metric.
    }
    \end{tablenotes}
    \end{threeparttable}
    }

\end{table}

\subsection{Additional Convergence Analysis of Ablation Study}
% In this subsection, we further investigate the influence of the ablation study results on the convergence behavior. From~\Cref{fig:convGenResults}, we compare the influence of the ablation module with respect to the influence of generative network convergence behavior. We found that without introducing the KProx, the log-likelihood is better and will not decrease to zero, thereby resulting a worse performance. In addition, when we ablate the training of inference network with Wasserstein distance, we observe that the generative network ablation is greater, which demonstrate the necessicity of introducing the Wasserstein distance. 
In this subsection, we further examine how the ablation settings affect convergence. As shown in \Cref{fig:convGenResults}, we compare the convergence behavior of the generative network across different ablations. Without KProx, the log-likelihood fails to drop to zero and remains relatively high, which leads to inferior performance. Moreover, when we remove the Wasserstein-based training of the inference network, the generative network exhibits even more severe degradation, underscoring the necessity of introducing the Wasserstein distance.
\begin{figure}[!h]
   % \vspace{-0.3cm}
    \centering
    % figures\cut_sampled_results.pdf
     \subfigure[DBC Dataset.]{\includegraphics[width=0.315\linewidth]{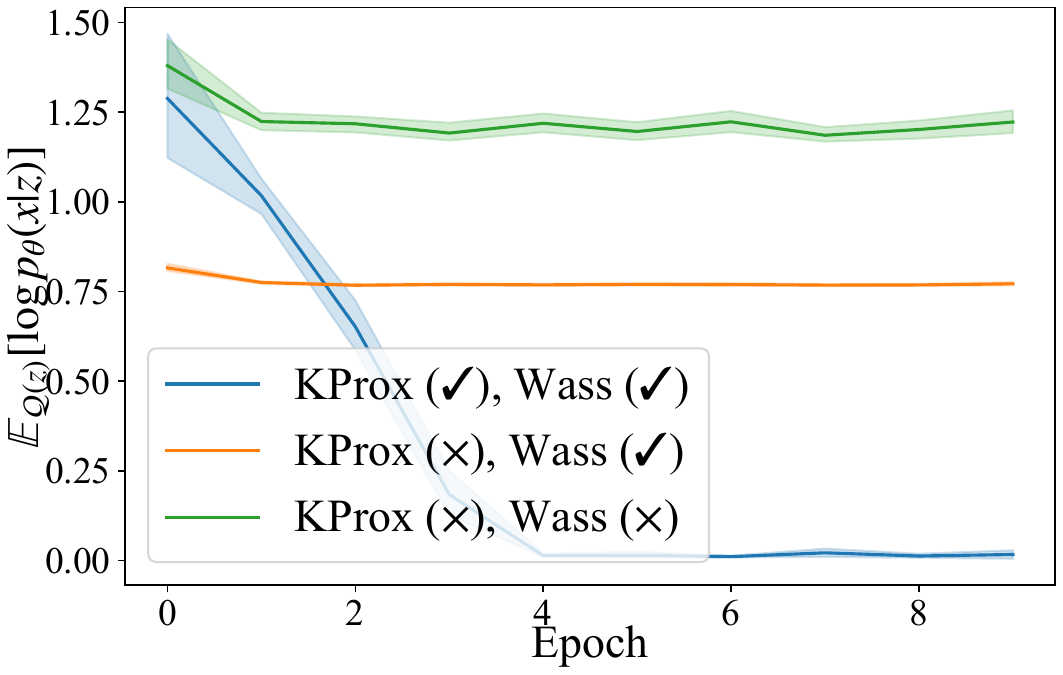}\label{subfig:ab_gen_dc}}
    \subfigure[CAC Dataset.]{\includegraphics[width=0.315\linewidth]{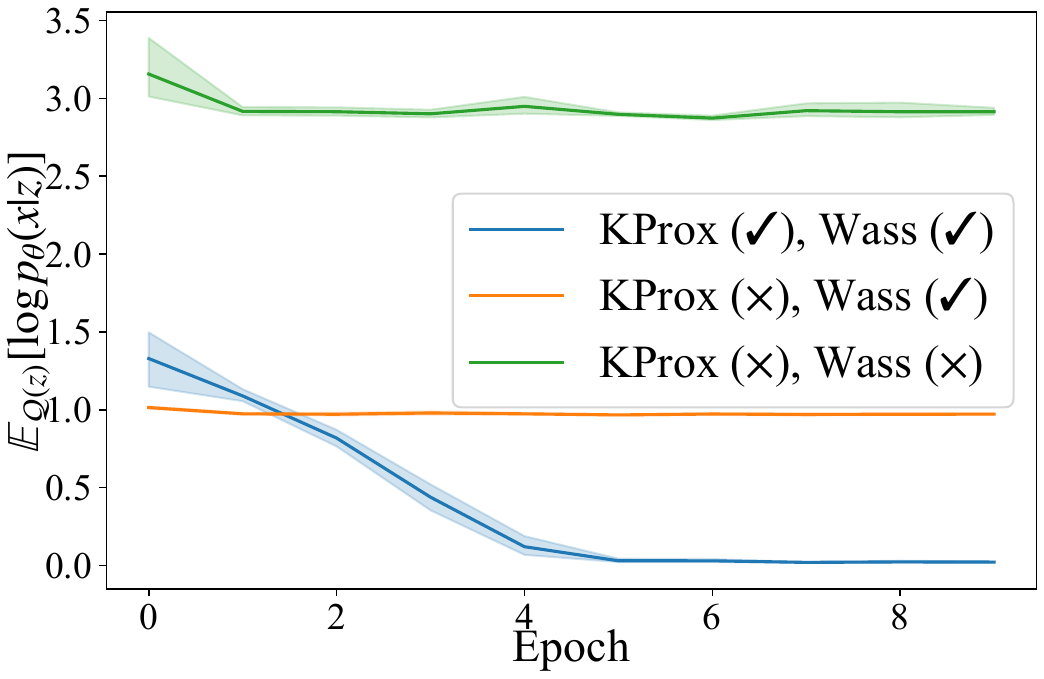}\label{subfig:ab_gen_ca}}
       \subfigure[CSC Dataset.]{\includegraphics[width=0.315\linewidth]{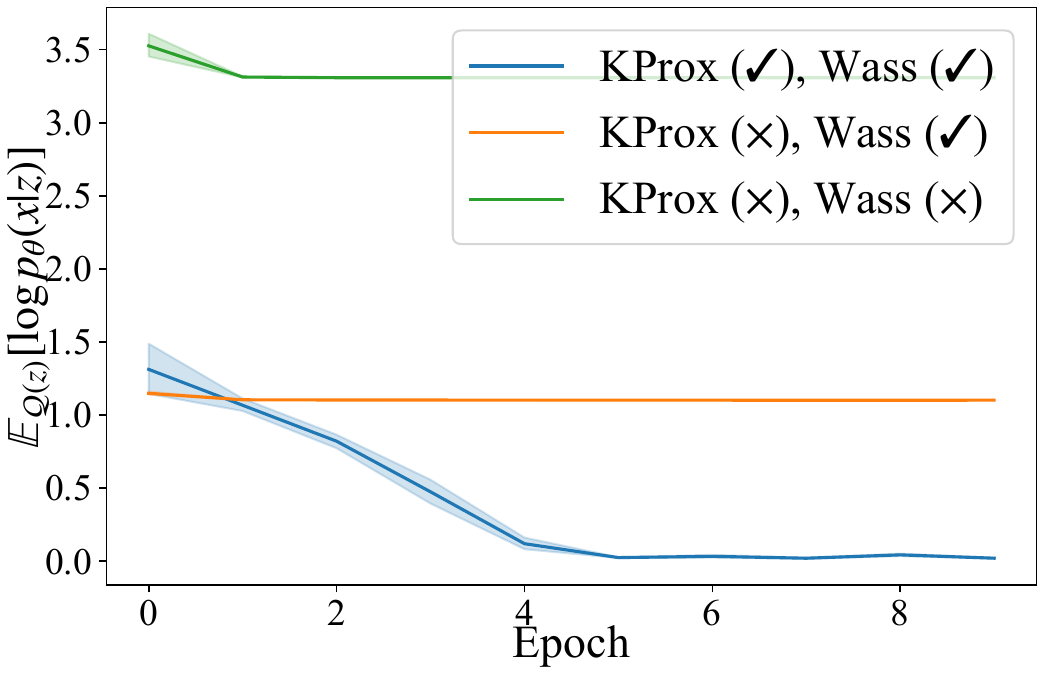}\label{subfig:ab_gen_wgs}}
    % \subfigure[$\mathcal{W}_2(\mathcal{Q}_t(z), \mathcal{P}(z|\mathcal{D}))$ along $t$, where $\mathcal{Q}_{0}(z)= \mathcal{U}(-0.5,0.5)$.]{\includegraphics[width=0.49\linewidth]{pictures/DensityEvolution/wass_evolution__gmm_uni(prox).pdf}\label{subfig:uniWassDistanceDifferences}}
   % \vspace{-0.3cm}
    \caption{Convergence analysis for generative network, .% Comparison results between $\mathcal{P}(z|x)/p_{\theta,\text{NF}}(z|x)$ with $\mathcal{Q}(z)$ along iterative time $\tau$. %  $q(\vec{a})$.
    %. (b) Impact of entropic regularization strength ε. (c) Impact of PFOR strength γ (×103). (d) Impact of RMPR strength κ.
    }\label{fig:convGenResults}
  % \vspace{-0.6cm}
\end{figure}

Building on this, we further compare the convergence behavior of the inference network under different ablation settings. We observe that when the Wasserstein distance is removed from the training objective, the inference network still converges, but it converges to a higher loss than the variant trained with the Wasserstein term. This result highlights the importance of incorporating the Wasserstein distance to promote more stable convergence.

\begin{figure}[!h]
  %  \vspace{-0.3cm}
    \centering
    % figures\cut_sampled_results.pdf
     \subfigure[DBC Dataset.]{\includegraphics[width=0.315\linewidth]{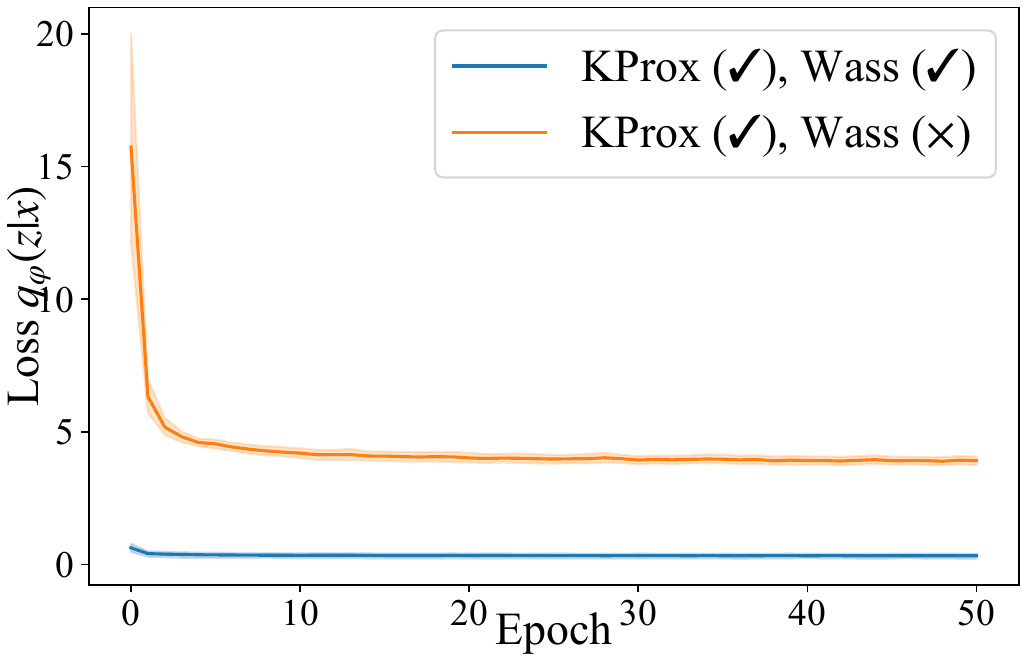}\label{subfig:ab_inf_dc}}
    \subfigure[CAC Dataset.]{\includegraphics[width=0.315\linewidth]{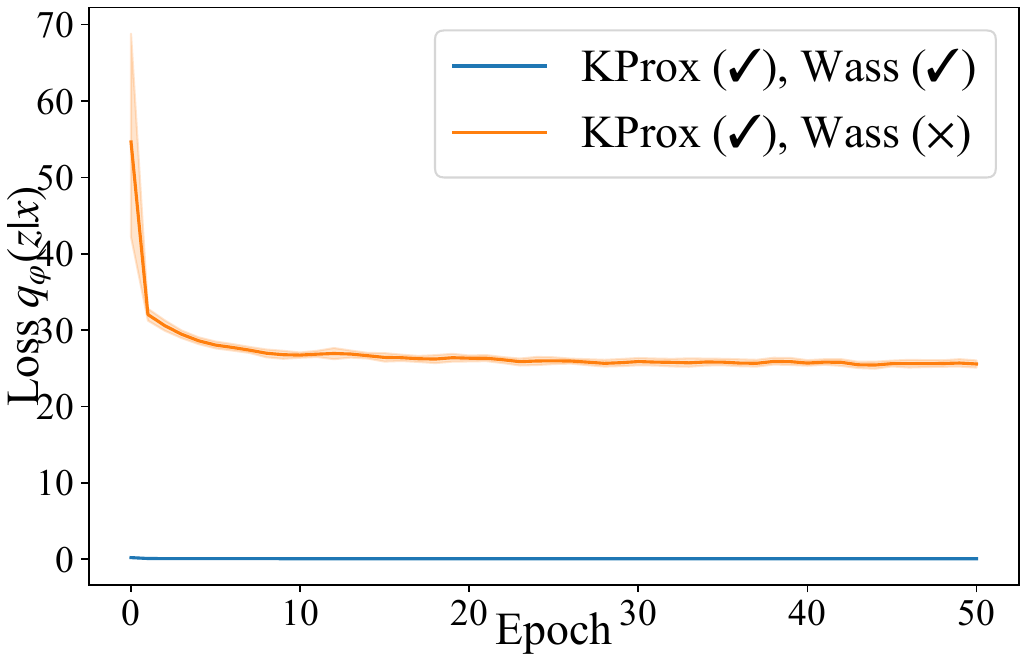}\label{subfig:ab_inf_ca}}
       \subfigure[CSC Dataset.]{\includegraphics[width=0.315\linewidth]{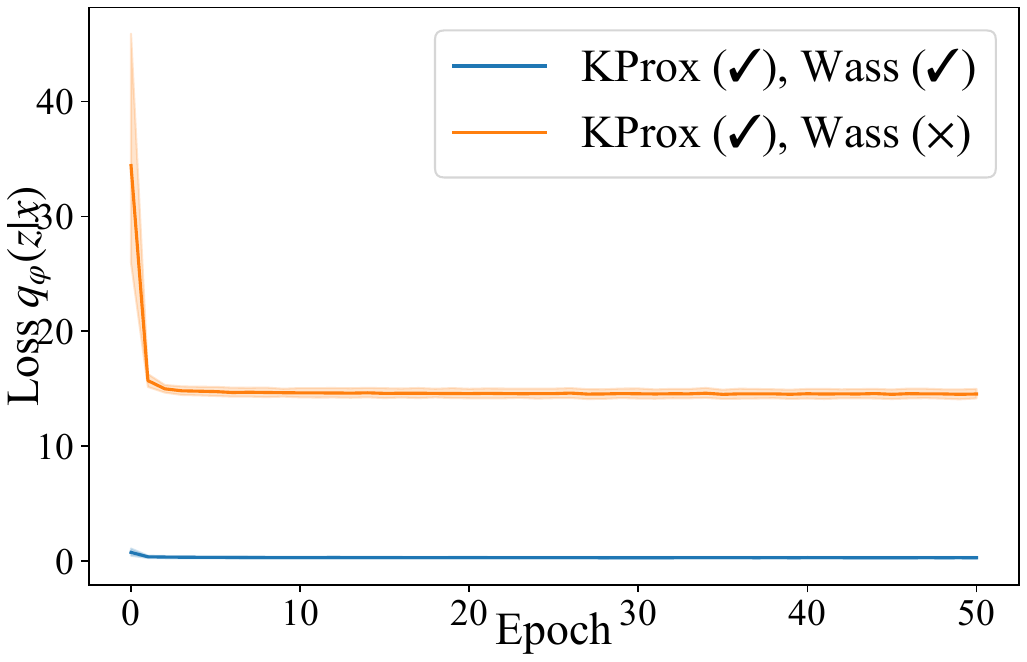}\label{subfig:ab_inf_wgs}}
    % \subfigure[$\mathcal{W}_2(\mathcal{Q}_t(z), \mathcal{P}(z|\mathcal{D}))$ along $t$, where $\mathcal{Q}_{0}(z)= \mathcal{U}(-0.5,0.5)$.]{\includegraphics[width=0.49\linewidth]{pictures/DensityEvolution/wass_evolution__gmm_uni(prox).pdf}\label{subfig:uniWassDistanceDifferences}}
   % \vspace{-0.3cm}
    \caption{Convergence analysis for inference network, .% Comparison results between $\mathcal{P}(z|x)/p_{\theta,\text{NF}}(z|x)$ with $\mathcal{Q}(z)$ along iterative time $\tau$. %  $q(\vec{a})$.
    %. (b) Impact of entropic regularization strength ε. (c) Impact of PFOR strength γ (×103). (d) Impact of RMPR strength κ.
    }\label{fig:convInfResults}
  % \vspace{-0.6cm}
\end{figure}

\clearpage
\section*{Acknowledgments}
The last author, Zhichao Chen, would like to express his sincere gratitude to Mr. Fangyikang Wang at Zhejiang University for valuable discussions regarding the gradient flow technique. He also wishes to thank Associate Professor Chang Liu at Zhongguancun Academy for pioneering works and enlightenment on particle-based variational inference. Furthermore, the authors would like to thank the anonymous reviewers for their valuable comments and suggestions, which have greatly improved the quality of this manuscript.

%\vspace{-0.2cm}
\footnotesize
\bibliographystyle{IEEEtran}
\bibliography{references}

\end{document}